\def\thm@space@setup{%
  \thm@preskip=\parskip \thm@postskip=0pt
}
\newtheorem{theorem}{Theorem}
\newtheorem{proposition}[theorem]{Proposition}
\newtheorem{lemma}[theorem]{Lemma}
\newtheorem{corollary}[theorem]{Corollary}
\newtheorem{fact}[theorem]{Fact}
\newenvironment{definition}{\goodbreak\smallskip\noindent{\bf Definition \addtocounter{theorem}{1}\arabic{theorem} }}{}
\title{On the Expressive Efficiency of Sum Product Networks}
\author{James Martens\thanks{jmartens@cs.toronto.edu} }
\author{Venkatesh Medabalimi\thanks{venkatm@cs.toronto.edu}}
\affil{Department of Computer Science, University of Toronto}
\date{}
\newcommand{\bigO}{\mathcal{O}}
\newcommand{\Real}{\mathbb{R}}
\DeclareMathOperator{\rank}{rank}
\DeclareMathOperator{\trace}{trace}
\DeclareMathOperator{\EQUAL}{EQUAL}
\let\emptyset\varnothing
\begin{document}

\maketitle



\begin{abstract}

Sum Product Networks (SPNs) are a recently developed class of deep generative models which compute their associated unnormalized density functions using a special type of arithmetic circuit.  When certain sufficient conditions, called the decomposability and completeness conditions (or ``D\&C" conditions), are imposed on the structure of these circuits, marginal densities and other useful quantities, which are typically intractable for other deep generative models, can be computed by what amounts to a single evaluation of the network (which is a property known as ``validity").  However, the effect that the D\&C conditions have on the capabilities of D\&C SPNs is not well understood.  

In this work we analyze the D\&C conditions, expose the various connections that D\&C SPNs have with multilinear arithmetic circuits, and consider the question of how well they can capture various distributions as a function of their size and depth. 
Among our various contributions is a result which establishes the existence of a relatively simple distribution with fully tractable marginal densities which cannot be efficiently captured by D\&C SPNs of \emph{any} depth, but which can be efficiently captured by various other deep generative models.  We also show that with each additional layer of depth permitted, the set of distributions which can be efficiently captured by D\&C SPNs grows in size.  This kind of ``depth hierarchy" property has been widely conjectured to hold for various deep models, but has never been proven for any of them.  Some of our other contributions include a new characterization of the D\&C conditions as sufficient \emph{and necessary} ones for a slightly strengthened notion of validity, and various state-machine characterizations of the types of computations that can be performed efficiently by D\&C SPNs.

\end{abstract}

\section{Introduction}

Sum Product Networks (SPNs) \citep{poon2011sum} are a recently developed class of deep generative models which compute their associated unnormalized density functions using a special type of arithmetic circuit.  Like neural networks, arithmetic circuits \citep[e.g.][]{shpilka2010arithmetic} are feed-forward circuits whose gates/nodes compute real values, and whose connections have associated real-valued weights.  Each node in an arithmetic circuit computes either a weighted sum or a product over their real-valued inputs.

For an important special class of SPNs called ``valid SPNs", computing the normalizing constant, along with any marginals, can be performed by what amounts to a single evaluation of the network.  This is to be contrasted with other deep generative models like Deep Boltzmann Machines \citep{DBMs}, where quantities crucial to learning and model evaluation (such as the normalizing constant) are provably intractable, unless $P = \#P$ \citep{roth-hard}.  

The tractability properties of valid SPNs are the primary reason they are interesting both from a theoretical and practical perspective.  However, validity is typically enforced via the so-called ``decomposability" and ``completeness" conditions (which we will abbreviate as ``D\&C"). While easy to describe and verify, the D\&C conditions impose stringent structural restrictions on SPNs which limit the kinds of architectures that are allowed.  While some learning algorithms have been developed that can respect these conditions \citep[e.g.][]{SPN_Learn_Gens,SPN_Learn_Peharz,SPN_Learn_Rooshenas}, the extent to which they limit the expressive efficiency\footnote{By this we mean the extent to which they can efficiently capture various distribution.  A distribution is ``efficiently captured" if it is contained in the closure of the set of distributions corresponding to different settings of the models parameters, for polynomial sized (in the dimension $n$ of the data/input) instances of the model, where size is measured by the number of ``units" or parameters.  Often these will be realistic low-order polynomials, although this depends on how exactly the constructions are done.  Note that a distribution being ``efficiently captured" says nothing about how easily the marginal densities or partition function of its associated density can be computed (\emph{except} in the case of D\&C SPNs of course). 

The concept of expressive efficiency is also sometimes called ``expressive power" or ``representational power", although we will use the word ``efficiency" instead of ``power" to emphasize our focus on the question of whether or not certain distributions which can be captured \emph{efficiently} by the model, instead of the question of whether or not they can be captured at all (i.e. by super-polynomially sized instances of the model).  This latter question is the topic of papers which present so-called ``universality" results which show how some models can capture \emph{any} distribution if they are allowed be exponentially large in $n$ (by essentially simulating a giant look-up table).  Such results are fairly straightforward, and indeed it easy to show that D\&C SPNs are universal in this sense. } of SPNs versus various other deep generative models remains unclear.

Like most models, D\&C SPNs are ``universal" in the sense that they can capture any distribution if they are allowed to be of a size which is exponential in the dimension $n$ of the data/input. However, any distribution function which can be \emph{efficiently} captured by D\&C SPNs, which is to say by one of polynomial size, must therefore be tractable (in the sense of having computable marginals, etc).  And given complexity theoretic assumptions like $P \neq \#P$ it is easy to come up with density functions whose marginals/normalizers are intractable, but which nonetheless correspond to distributions which can be efficiently captured by various other deep generative models (e.g. using the simulation results from \citet{Deep_Expressive_Martens}).   Thus we see that the tractability properties enjoyed by D\&C SPNs indeed come with a price. 

However, one could argue that the intractability of these kinds of ``hard" distributions would make it difficult or even impossible to learn them in practice.  Moreover, any model which can efficiently capture them must therefor lack an efficient general-case inference/learning algorithm.  This is a valid point, and it suggests the obvious follow-up question: is there a \emph{fully tractable} distribution (in the sense that its marginal densities and partition function can be computed efficiently) which can be efficiently captured by other deep models, but not by D\&C SPNs?  

In this work we answer this question in the affirmative, without assuming any complexity theoretic conjectures.  This result thus establishes that D\&C SPNs are in some sense less expressively efficient than many other deep models (since, by the results of \citet{Deep_Expressive_Martens}, such models can efficiently simulate D\&C SPNs), even if we restrict our attention only to tractable distributions.   Moreover, it suggests existence of a hypothetical model which could share the tractability properties of D\&C SPNs, while being more expressively efficient.

In addition to this result, we also analyze the effect of depth, and other structural characteristics, on the expressive efficiency of D\&C SPNs.  Perhaps most notably, we use existing results from arithmetic circuit theory to establish that D\&C SPNs gain expressive efficiency with each additional layer of depth.  In particular, we show that the set of distributions which can be efficiently captured by D\&C SPNs grows with each layer of depth permitted.  This kind of ``depth hierarchy" property has never before been shown to hold for any other well-known deep model, despite the widespread belief that it does hold for most of them \citep[e.g.][]{bengio2011expressive}.



Along with these two results, we also make numerous other contributions to the theoretical understanding of SPNs which are summarized below.


In Section \ref{sec:definitions}, we first propose a generalized definition of SPNs that captures all previous definitions. We then illuminate the various connections between SPNs and multilinear arithmetic circuits, allowing us to exploit the many powerful results which have already been proved for the latter.

In Section \ref{sec:analysis} we provide new insights regarding the D\&C conditions and their relationship to validity, and introduce a slightly strengthened version of validity which we show to be equivalent to the D\&C conditions (whereas standard validity is merely \emph{implied} by them).  We also show that for a slightly generalized definition of SPNs, testing for standard validity is a co-NP hard problem.

In Section \ref{sec:capabilities} we give examples of various state-based models of computation which can be efficiently simulated by D\&C SPNs, and show how these can be used to give constructive proofs that various simple density functions can be efficiently computed by D\&C SPNs.

In Section \ref{sec:depth3separation} we address the prior work on the expressive efficiency of D\&C SPNs due to \citet{DelalleauBengio2011}, and give a much shorter proof of their results using powerful techniques borrowed from circuit theory.  We go on to show how these techniques allow us to significantly strengthen and extend the results of \citet{DelalleauBengio2011}, answering an open question which they posed.

In Section \ref{sec:depth_analysis} we leverage prior work done on multilinear arithmetic circuits to prove several very powerful results regarding the relationship between depth and expressive efficiency of D\&C SPNs.  First, we show that with each extra layer of depth added, there is an expansion of the set of functions efficiently computable by D\&C SPNs (thus giving a strict ``hierarchy of depth").  Next we show that if depth is allowed to grow with the input dimension $n$, that its effect on expressive efficiency greatly diminishes after it reaches $\bigO(\log(n)^2)$.

In Section \ref{sec:formula_limitation} we show that when D\&C SPNs are constrained to have a recursive ``formula" structure, as they are when learned using the approach of \citep{SPN_Learn_Gens}, they lose expressive efficiency.  In particular we use prior work on multilinear arithmetic circuits to produce example functions which can be efficiently computed by general D\&C SPNs, but not by ones constrained to have a formula structure.

Finally, in Section \ref{sec:spanning_tree_lb} we give what is perhaps our most significant and difficult result, which is the existence of a simple density function whose marginals and normalizer are computable by an $\bigO(n^{1.19})$ time algorithm, and whose corresponding distribution can be efficiently captured by various other deep models (in terms of their size), but which cannot be efficiently computed, or even efficiently approximated, by a D\&C SPN of \emph{any} depth.

\section{Definitions and Notation}

\label{sec:definitions}

\subsection{Arithmetic circuits}
\label{sec:arithmetic_circuit_defs}

Arithmetic circuits \citep[e.g.][]{shpilka2010arithmetic} are a type of circuit, similar to Boolean logic circuits, or neural networks. But instead of having gates/nodes which compute basic logical operations like AND, or sigmoidal non-linearities, they have nodes which perform one of the two fundamental operations of arithmetic: addition and multiplication.  Their formal definition follows.

An \emph{arithmetic circuit} $\Phi$ over a set/tuple\footnote{By ``set/tuple" we mean a tuple like $t = (a,b,c)$ which we will occasionally treat like a standard set, so that expressions such as $t \cap s$ are well defined and have the natural interpretation.} of real-valued variables $y = (y_1,y_2,..,y_\ell)$ will be defined as a special type of directed acyclic graph with the following properties.  

Each node of the graph with in-degree 0 is labeled by either a variable from $y$ or an element from $\Real$. Every other node is labeled by either $\times$ or $+$, and are known as product nodes or sum nodes respectively.  All the incoming edges to a sum node are labeled with weights from $\Real$.
Nodes with no outgoing edges are referred to as \emph{output nodes}.  We will assume that arithmetic circuits only have one output node, which we will refer to as the \emph{root}.  Nodes with edges going into a node $u$ in $\Phi$ are referred to as $u$'s \emph{children}.  The set of such children is denoted by $C(u)$.

Given values of the elements of $y$, a node $u$ of an arithmetic circuit computes a real-valued output, which we denote by $q_u(y)$, according to the following rules.  When $u$ is labeled with an element of $y$ or $\Real$, the node simply computes its label.  The latter type of nodes are referred to as \emph{constant nodes}, since they compute constants that don't depend on $y$.  Product nodes compute the product of the outputs of their children, i.e. $q_u(y) = \prod_{v \in C(u)} q_v(y)$, while sum nodes compute a weighted sum of the outputs of their children, i.e. $q_u(y) = \sum_{v \in C(u)} w_{v,u} q_v(y)$, where $w_{u,v}$ denotes the weight labeling the edge from from $v$ to $u$.  Given these definitions, it is not hard to see that for each node $u$ of an arithmetic circuit, $q_u(y)$ is a multivariate polynomial function in the elements of $y$.  The output of $\Phi$, denoted by $q_{\Phi}(y)$, is defined as the output of its singular root/output node (i.e. $q_r(y)$, where $r$ is the root/output node of $\Phi$).

For a node $u$ in $\Phi$, $\Phi_u$ denotes the subcircuit of $\Phi$ rooted at $u$.  This subcircuit is formed by taking only the nodes in $\Phi$ that are on a path to $u$.

An arithmetic circuit is said to be \emph{monotone} if all of its weights and constants are non-negative elements of $\Real$.  

The \emph{scope} of a node $u$, denoted by $y_u$, is defined as the subset of the elements of $y$ which appear as labels in the sub-circuit rooted at $u$.  These are the variables which $u$'s output essentially ``depends on".

The \emph{size} of $\Phi$, denoted by $|\Phi|$, is defined as the number of nodes in $\Phi$, and its \emph{depth} is defined as the length of the longest directed path in $\Phi$.  An alternative notion of depth, called \emph{product depth} \citep{raz2009lower}, is defined as the largest number of product nodes which appear in a directed path in $\Phi$.

Note that in general, nodes in an arithmetic circuit can have out-degree greater than 1, thus allowing the quantities they compute to be used in multiple subsequent computations by other nodes.  When this is not the case and the nodes of $\Phi$ each have out-degree at most 1, $\Phi$ is said to be an \emph{arithmetic formula}, because it can be written out compactly as a formula.  


\subsection{Sum Product Networks (SPNs)}

\label{sec:def_SPNs}

In this section we will give our generalized definition of Sum Product Networks (SPNs). 

Let $x = (x_1,x_2,..,x_n)$ be a set/tuple of variables where $x_i$ can take values in a range set $R_i \subseteq \Real$ and $M_1, M_2, ..., M_n$ are measures over the respective $R_i$'s.  For each $i$ let \\
$f_i = (f_{i,1}(x_i),f_{i,2}(x_i),...,f_{i,m_i}(x_i))$ denote a set/tuple of $m_i$ \emph{non-negative} real-valued univariate functions of $x_i$, each with a finite $M_i$-integral over $R_i$.  $f$ will denote the set/tuple whose elements are given by appending all of these $f_i$'s together, and $M$ will denote the product measure $M_1 \times M_2 \times ... \times M_n$.

A \emph{Sum Product Network} (SPN) $\Phi$ is defined as a monotone arithmetic circuit over $f$.  It inherits all of the properties of monotone arithmetic circuits, and gains some additional ones, which are discussed below. 


Because an SPN is an arithmetic circuit over $f$, any one of its nodes $u$ computes a polynomial function $q_u(f)$ in $f$.   But because the elements of $f$ are functions of the elements of $x$, a node $u$ of an SPN can be also viewed as computing a function of $x$, as given by $q_u(f(x))$, where $f(x)$ denotes the set/tuple obtained by replacing each element $f_{i,j}$ in $f$ with the value of $f_{i,j}(x_i)$.

The \emph{dependency-scope} of a node $u$ is defined as the set of elements of $x$ on which members of $u$'s scope $f_u$ depend.  The dependency-scope is denoted by $x_u$.

SPNs are primarily used to model distributions over $x$.  They do this by defining a density function given by $p_{\Phi}(x) = \frac{1}{Z} q_{\Phi}(f(x))$, where $Z = \int q_{\Phi}(f(x)) \mathrm{d} M(x)$ is a normalizing constant known as the partition function.  Because $q_{\Phi}(f(x))$ is non-negative this density is well defined, provided that $Z$ is non-zero and finite.

A \emph{formula SPN} is defined as an SPN which is also an arithmetic formula.  In other words, a formula SPN is one whose nodes each have an out-degree of at most 1.

It is important to remember that the domains (the $R_i$'s) and the measures ($M_i$'s) can be defined however we want, so that SPNs can represent both continuous and discrete distributions.  For example, to represent a discrete distribution, we can choose $M_i$ to be the counting measure with support given by a finite subset, such as $\{0,1\}$.  In such a case, integration of some function $g(x_i)$ w.r.t. such a $M_i$ amounts to the summation $\sum_{x_i \in \{0,1\}} g(x_i)$.

\subsection{Validity, decomposability, and completeness}

Treated as density models, general SPNs suffer from many of the same intractability issues that plague other deep density models, such as Deep Boltzmann Machines \citep{DBMs}.   In particular, there is no efficient general algorithm for computing their associated partition function $Z$ or marginal densities.

However, it turns out that for a special class of SPNs, called \emph{valid} SPNs, computing the the partition function and marginal densities can be accomplished by what is essentially a single evaluation of the network.  Moreover, the validity of a given SPN can be established using certain easy-to-test structural conditions called decomposability and completeness, which we will discuss later.

\begin{definition}(Valid SPN) 
\label{def:valid_SPN}
An SPN $\Phi$, is said to be \emph{valid} if the following condition always holds. Let $I = (i_1, i_2, ..., i_\ell)$ where each $i_j$ is a distinct element of $[n] = \{1,2,...,n\}$, and let $S_{i_1} \subseteq R_{i_1}, S_{i_2} \subseteq R_{i_2}, ..., S_{i_\ell} \subseteq R_{i_\ell}$ be subsets of the ranges of the respective $x_{i_j}$'s.  For any fixed value of $x_{[n] \setminus I}$ we have
\begin{align*}
\int_{S_{i_1} \times S_{i_2} \times ... \times S_{i_\ell}} q_{\Phi}(f(x)) \: \mathrm{d} M_I(x_I) = q_{\Phi}(A_I( S_I, x_{[n] \setminus I}))
\end{align*}
where $x_I = (x_{i_1}, x_{i_2}, ..., x_{i_\ell})$ (with $x_{[n] \setminus I}$ defined analogously), $M_I = M_{i_1} \times M_{i_2} \times ... \times M_{i_\ell}$ (with $S_I$ defined analogously), and where $A_I( S_I, x_{[n] \setminus I})$ denotes the set/tuple obtained by taking $f$ and for each $i \in I$ and each $j$ replacing $f_{i,j}$ with its integral $\int_{S_i} f_{i,j}(x_i) \mathrm{d} M_i(x_i)$ over $S_i$, and also replacing $f_{i,j}$ for each $i \in [n] \setminus I$ with $f_{i,j}(x_i)$.
\end{definition}

Decoding the notation, this definition says that for a valid SPN $\Phi$ we can compute the integral of the output function $q_{\Phi}(f(x))$ with respect to a subset of the input variables (given by the index set $I$) over corresponding subsets of their respective domains (the $S_i$'s), simply by computing the corresponding integrals over the respective univariate functions (the $f_{i,j}$'s) and evaluating the circuit by having nodes labeled by these $f_{i,j}$'s compute said integrals.   

Note that for a subsets of the range of $R_{i_1} \times R_{i_2} \times ... \times R_{i_\ell}$ of $x_I$ that do not have the form of a Cartesian product $S_{i_1} \times S_{i_2} \times ... \times S_{i_\ell}$, validity doesn't say anything.  In general, the integral over such a set will be intractable for valid SPNs.

Validity is a very useful property for an SPN $\Phi$ to have, as it allows us to efficiently integrate $q_{\Phi}(f(x))$ with respect to any subset of variables/elements of $x$ by performing what amounts to a single evaluation of $\Phi$.  Among other uses \citep{SPN_Learn_Gens}, this allows us to efficiently compute the partition function\footnote{As an aside, note that validity also acts as a proof of the finiteness of the partition function, provided each integral $\int_{R_i} f_{i,j}(x_i) \mathrm{d} M(x_i)$ is finite.} which normalizes $\Phi$'s associated density function $p_{\Phi}$ by taking $I = [n]$ and $S_i = R_i$ for each $i \in [n]$.  It also allows us to efficiently compute any marginal density function by taking $I \subset [n]$ and $S_i = R_i$ for each $i \in I$.

While validity may seem like a magical property for an SPN to have, as shown by \citet{poon2011sum} there is a pair of easy to enforce (and verify) structural properties which, when they appear together, imply validity.  These are known as ``decomposability" and ``completeness", and are defined as follows.

\begin{definition} (Decomposable)
An SPN $\Phi$ is \emph{decomposable} if for every product node $u$ in $\Phi$ the dependency-scopes of its children are pairwise disjoint. 
\end{definition}

\begin{definition} (Completeness)
An SPN $\Phi$ is complete if for every sum node $u$ in $\Phi$ the dependency-scopes of its children are all the same.  
\end{definition}

As was the case in the work of \citet{poon2011sum}, decomposability and completeness turn out to be sufficient conditions, but not necessary ones, for ensuring validity according to our more general set of definitions.   Moreover, we will show that for a natural strengthening of the concept of validity, decomposability and completeness become \emph{necessary} conditions as well.

The tractability of the partition function and marginal densities is a virtually unheard of property for deep probabilistic models, is the primary reason that decomposably and complete SPNs are so appealing.

For the sake of brevity we will call an SPN which satisfies the decomposability and completeness conditions a \emph{D\&C SPN}.

A notion related to decomposability which was discussed in \citet{poon2011sum} is that of ``consistency", which is defined only for SPNs whose univariate functions $f$ are either the identity function $g(z) = z$ or the negation function $g(z) = 1-z$, and whose inputs variables $x$ are all 0/1-valued.  Such an SPN is said to be consistent if each product node satisfies the property that if one of its children has the identity function of $x_i$ in its scope, then none of the other children can have the negation function of $x_i$ in their scopes.  This is a weaker condition than decomposability, and is also known to imply validity \citep{poon2011sum}.

Note that for $0/1$-valued variables we have $x_i^2 = x_i$ and $(1-x_i)^2 = 1-x_i$, and so it is possible to construct an equivalent decomposable SPN from a consistent SPN by modifying the children of each product node so as to remove the ``redundant" factors of $x_i$ (or $1 - x_i$).  Note that such a construction may require the introduction of polynomially many additional nodes, as in the proof of Proposition \ref{prop:completeness_transform}.  In light of this, and the fact that consistency only applies to a narrowly defined sub-class of SPNs, we can conclude that consistency is not a particularly interesting property to study by itself, and so we will not discuss it any further.

\subsection{Top-down view of D\&C SPNs}

For a D\&C SPN $\Phi$ it is known (and is straightforward to show) that if the weights on the incoming edges to each sum node sum to 1, and the univariate functions have integrals of 1 (i.e. so they are normalized density functions), then the normalizing constant of $\Phi$'s associated density is 1, and each node can be interpreted as computing a normalized density over the variables in its dependency scope.   We will call such a $\Phi$ ``weight-normalized".

A weight-normalized D\&C SPN $\Phi$ can be interpreted as a top-down directed generative model where each sum node corresponds to a mixture distribution over the distributions associated with its children (with mixture weights given by the corresponding edge weights), and where each product node corresponds to factorized distribution, with factors given by the distributions of its children \citep{SPN_Learn_Gens}.  Given this interpretation it is not hard to see that sampling from $\Phi$ can be accomplished in a top-down fashion starting at the root, just like in a standard directed acyclic graphical model.

One interesting observation we can make is that it is always possible to transform a general D\&C SPN into an equivalent weight-normalized one, as is formalized in the following proposition:

\begin{proposition}
\label{prop:normalize}
Given a D\&C SPN $\Phi$ there exists a weight-normalized D\&C SPN $\Phi'$ with the same structure as $\Phi$ and with the same associated distribution.
\end{proposition}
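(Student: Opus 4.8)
The plan is to build $\Phi'$ on the same underlying DAG as $\Phi$, changing only the edge weights and the labels of the in-degree-$0$ nodes, by a single bottom-up pass. For each node $u$ let $Z_u$ denote the integral of $q_u(f(x))$ over the full ranges of the variables in the dependency-scope $x_u$. Since the subcircuit $\Phi_u$ retains every child of each of its nodes, it is again a D\&C SPN (rooted at $u$), hence valid by \citet{poon2011sum}; so by Definition~\ref{def:valid_SPN} (taking $I$ to be the index set of $x_u$ and each $S_i = R_i$), $Z_u$ is a finite constant, equal to the output of $\Phi_u$ evaluated with every leaf function $f_{i,j}$ replaced by $\int_{R_i} f_{i,j}\,\mathrm{d} M_i$. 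The invariant I would maintain, by induction up the DAG, is that the corresponding node of $\Phi'$ computes $q'_u(f'(x)) = q_u(f(x)) / Z_u$.

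With this setup the construction is essentially forced. At a leaf, $Z_u$ equals $\int_{R_i} f_{i,j}\,\mathrm{d} M_i$ if the node is labeled by $f_{i,j}$, and $c$ if it is labeled by the constant $c$; so I would replace each univariate $f_{i,j}$ by $f_{i,j}/\int_{R_i} f_{i,j}\,\mathrm{d} M_i$ (which also makes the new univariate functions integrate to $1$, as weight-normalization requires) and each constant label $c$ by $1$. At a product node $u$, decomposability makes the children's dependency-scopes a pairwise-disjoint union equal to $x_u$, so Tonelli's theorem gives $Z_u = \prod_{v \in C(u)} Z_v$; hence a product node needs no modification, since it already computes $\prod_v q_v / Z_v = q_u / Z_u$. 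At a sum node $u$, completeness forces every child to have dependency-scope $x_u$, so $Z_u = \sum_{v \in C(u)} w_{v,u} Z_v$, and I would set the new weights to $w'_{v,u} = w_{v,u} Z_v / Z_u$; these are non-negative, sum to $1$, and yield $\sum_v w'_{v,u}(q_v/Z_v) = q_u/Z_u$. At the root $r$, $Z_r$ is exactly the partition function $Z$ of $\Phi$, so $\Phi'$ is weight-normalized, has partition function $1$, and satisfies $p_{\Phi'} = q'_{\Phi'}(f'(x)) = q_\Phi(f(x))/Z = p_\Phi$. Decomposability and completeness carry over because the DAG and every dependency-scope are unchanged.

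The one place that needs real care is the degenerate case $Z_u = 0$, where the rescaling at $u$ would divide by zero. Monotonicity makes $q_u(f(x))$ non-negative, so $Z_u = 0$ forces it to vanish $M$-almost everywhere; such a node can only feed into a product node that is then itself a zero-node, or into a sum node, where the rule $w'_{v,u} = w_{v,u} Z_v / Z_u$ already assigns it weight $0$. Thus the invariant is never invoked at a zero-node, so I can give the edges inside such a ``zero region'' arbitrary weights summing to $1$, and its leaves arbitrary fixed densities, without changing $q_{\Phi'}(f'(x))$ off an $M$-null set---which leaves the associated distribution unchanged. (If $Z_r = 0$ then $p_\Phi$ is undefined and there is nothing to prove.) Beyond this, the argument is bookkeeping; the substantive points are that subcircuits of D\&C SPNs are D\&C---hence valid, so each $Z_u$ is well defined---and that $Z_u$ behaves multiplicatively at product nodes and additively at sum nodes.
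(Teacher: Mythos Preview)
Your proof is correct and follows essentially the same bottom-up normalization strategy as the paper: compute the normalizer $Z_u$ at each node and rescale so that every node outputs a normalized density, with product nodes requiring no change and sum-node weights becoming $w'_{v,u} = w_{v,u} Z_v / Z_u$. Your version is in fact more careful than the paper's sketch---you explicitly handle constant nodes and the degenerate case $Z_u = 0$, which the paper omits.
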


\vspace{0.1in} 

\subsection{Relationship to previous definitions}

Our definitions of SPNs and related concepts subsume those given by \citet{poon2011sum} and later by \citet{SPN_Learn_Gens}.  Thus the various results we prove in this paper will still be valid according to those older definitions.  

The purpose of this subsection is justify the above claim, with a brief discussion which assumes pre-existing familiarity with the definitions given in the above cited works.

First, to see that our definition of SPNs generalizes that of \citet{poon2011sum}, observe that we can take the univariate functions to be of the form $x_i$ or $\overline{x_i} = 1 - x_i$, and that we can choose the domains of measures so that the $x_i$'s are discrete $\{0,1\}$-valued variables, and choose the associated measures so that integration over values of $x_i$ becomes equivalent to summation.

Second, to see that our definition generalizes that of \citet{SPN_Learn_Gens}, observe that we can take the univariate functions to be univariate density functions. 
And while \citet{SPN_Learn_Gens} formally defined SPNs as always being decomposable and complete, we will keep the concepts of SPNs and D\&C SPNs separate in our discussions, as in the original paper by \citet{poon2011sum}.


\subsection{Polynomials and multilinearity}

\label{sec:polys_and_multi}

In this section we will define some additional basic concepts which will be useful in our analysis of SPNs in the coming sections.

Given a set/tuple of formal variables $y = (y_1, y_2, ..., y_\ell )$, a \emph{monomial} is defined as a product of elements of $y$ (allowing repeats). For example, $y_1 y_2^3$ is a monomial.  In an arithmetic expression, a ``monomial term" refers to a monomial times a coefficient from $\Real$.  For example $4 y_1 y_2^4$ is a monomial term in $2y_1 + 4 y_1 y_2^4$.   

In general, polynomials over $y$ are defined as a finite sum of monomial terms.  Given a monomial $m$, its \emph{associated coefficient} in a polynomial $q$ will refer to the coefficient of the monomial term whose associated monomial is $m$ (we will assume that like terms have been collected, so this is unique).  As a short-hand, we will say that a monomial $m$ is ``in $q$" if the associated coefficient of $m$ in $q$ is non-zero.

The \emph{zero polynomial} is defined as a polynomial which has no monomials in it.  While the zero polynomial clearly computes the zero function, non-zero polynomials can sometimes also compute the zero function over the domain of $y$, and thus these are related but distinct concepts \footnote{For example, $y_1(1-y_1)$ computes the zero function when the domain of $y_1$ is $\{0,1\}$ but is clearly not the zero polynomial.}. 

A polynomial is called \emph{non-negative} if the coefficients of each of its monomials are non-negative.  Non-negativity is related to the monotonicity of arithmetic circuits in the following way:
\begin{fact}
\label{fact:pos_coeff}
If $\Phi$ is a monotone arithmetic circuit over $y$ (such as an SPN with $y = f$), then $q_\Phi$ is a non-negative polynomial.
\end{fact}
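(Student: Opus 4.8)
The plan is to prove this by structural induction on the nodes of $\Phi$, processed in a topological (reverse) order, showing that $q_u$ is a non-negative polynomial for every node $u$; the claim then follows by taking $u$ to be the root.

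For the base case, I would consider the in-degree-0 nodes. A constant node computes a fixed element of $\Real$, which is non-negative by the monotonicity assumption, and the constant polynomial with a non-negative value is trivially non-negative. A node labeled by a variable $y_i$ computes the polynomial $y_i$, whose sole monomial has coefficient $1 \geq 0$, hence is non-negative.

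For the inductive step, I would split into the two cases according to the label of the node $u$. If $u$ is a sum node, then $q_u(y) = \sum_{v \in C(u)} w_{v,u} q_v(y)$, where each weight $w_{v,u}$ is non-negative (again by monotonicity) and each $q_v$ is non-negative by the inductive hypothesis; since scaling a non-negative polynomial by a non-negative constant and summing finitely many such polynomials preserves non-negativity of every coefficient (collecting like terms only adds non-negative quantities, so no cancellation can occur), $q_u$ is non-negative. If $u$ is a product node, then $q_u(y) = \prod_{v \in C(u)} q_v(y)$, a finite product of polynomials that are non-negative by the inductive hypothesis; here I would observe that the coefficient of any monomial in a product of polynomials is a finite sum of products of coefficients drawn from the factors, and since all those coefficients are non-negative, each such sum is non-negative, so $q_u$ is non-negative.

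I do not expect any real obstacle here; the only point deserving a sentence of care is the product case, where one must note explicitly that multiplying out non-negative polynomials cannot produce a negative coefficient through cancellation, precisely because every contributing term is a product of non-negative numbers.
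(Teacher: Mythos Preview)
Your proposal is correct and takes essentially the same approach as the paper: both argue by induction (you phrase it as structural induction on nodes in topological order, the paper as induction on the depth of $\Phi$), with the same base cases and the same two-case inductive step showing that sums and products of non-negative polynomials with non-negative weights remain non-negative.
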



We will define the \emph{scope} of a polynomial $q$ in $y$, denoted by $y_q$, to be the set of variables which appear as factors in at least one of its monomials.  Note that for an node $u$ in an arithmetic circuit, the scope $y_u$ of $u$ can easily be shown to be a superset of the scope of its output polynomial $q_u$ (i.e. $y_{q_u}$), but it will not be equal to $y_{q_u}$ in general.


A central concept in our analysis of D\&C SPNs will be that of multilinearity, which is closely related to the decomposability condition.

\begin{definition}(Multilinear Polynomial)
A polynomial $q$ in $y$ is \emph{multilinear} if the degree of each element of $y$ is at most one in each monomial in $q$.
\end{definition}
For example, $y_1 + y_2 y_3$ is a multilinear polynomial.  

Some more interesting examples of multilinear polynomials include the permanent and determinant of a matrix (where we view the entries of the matrix as the variables).  

\begin{definition}(Multilinear Arithmetic Circuit)
If every node of an arithmetic circuit $\Phi$ over $y$ computes a multilinear polynomial in $y$, $\Phi$ is said to be a \emph{(semantically) multilinear arithmetic circuit}.  And if for every product node in $\Phi$, the scopes of its child nodes are pair-wise disjoint, $\Phi$ is said to be a \emph{syntactically multilinear arithmetic circuit}.
\end{definition}

It is easy to show that a syntactically multilinear arithmetic circuit is also a semantically multilinear circuit. However, it is an open question as to whether one can convert a semantically multilinear arithmetic circuit into a syntactically multilinear one without increasing its size by a super-polynomial factor \cite{raz2008lower}. In the case of formulas however, given a semantically multilinear formula of size $s$ one can transform it into an equivalent syntactically multilinear formula of size at most $s$ \cite{raz2004multi}. 


It should be obvious by this point that there is an important connection between syntactic multilinearity and decomposability.  In particular, if our univariate functions of the $x_i$'s are all identity functions, then scope and dependency-scope become equivalent, and thus so do syntactic multilinearity and decomposability.

Given this observation we have that a monotone syntactically multilinear arithmetic circuit over $x$ can be viewed as a decomposable SPN.

A somewhat less obvious fact (which will be very useful later) is that any decomposable SPN over 0/1-valued $x_i$'s can be viewed as a syntactically multilinear arithmetic circuit over $x$, of a similar size and depth.  To see this, note that any arbitrary univariate function $g$ of a 0/1-valued variable $z$ can always be written as an affine function of $z$, i.e. of the form $az + b$ with $a = g(1) - g(0)$ and $b = g(0)$.  Thus we can replace each node computing a univariate function of some $x_i$ with a subcircuit computing this affine function, and this yields a (non-monotone) syntactically multilinear arithmetic circuit over $x$, with a single additional layer of sum nodes (of size $\bigO(n)$).

An extension of the concept of multilinearity is that of set-multilinearity \citep[e.g.][]{shpilka2010arithmetic}.  To define set-multilinearity we must first define some additional notation which we will carry through the rest of the paper.

Let $G_1, G_2, G_3,..., G_k$ be a partitioning of the elements of $y$ into disjoint sets.  The \emph{set-scope} $G_q$ of a polynomial $q$ is the sub-collection of the collection $\{G_1, ..., G_k\}$ defined as consisting of those sets $G_i$ which have some element in common with the scope $y_q$ of $q$.  i.e. $G_q = \{ G_i : y_q \cap G_i \neq \emptyset\}$. Similarly, the \emph{set-scope} $G_u$ of a node $u$ in an arithmetic circuit $\Phi$ is the sub-collection of the collection $\{G_1, ..., G_k\}$ defined as consisting of those sets $G_i$ which have some element in common with the scope of $u$.  i.e. $G_u = \{ G_i : y_u \cap G_i \neq \emptyset\}$.

\begin{definition}(Set-multilinear polynomial)
A polynomial is set-multilinear if each of its monomials has exactly one factor from each of the $G_i$'s in its set-scope.
\end{definition}

For example, $3 y_1 y_3 - y_2 y_4$ is a set-multilinear polynomial when $G_1 = \{y_1, y_2\}, G_2 = \{y_3, y_4\}$, while $y_1 y_2 + 2 y_2 y_4$ is not.  The permanent and determinant of a matrix also turn out to be non-trivial examples of set-multilinear polynomials, if we define the collection of sets so that $G_i$ consists of the entries in the $i^{th}$ row of the matrix.

\begin{definition}(Set-multilinear arithmetic circuits)
An arithmetic circuit is called \emph{(semantically) set-multilinear} if each of its nodes computes a set-multilinear polynomial. An arithmetic circuit $\Phi$ is called \emph{syntactically set-multilinear} if it satisfies the following two properties:
\begin{itemize}
\item for each product node $u$ in $\Phi$, the set-scopes of the children of $u$ are pairwise disjoint
\item for each sum node $u$, the set-scopes of the children of $u$ are all the same.
\end{itemize}
\end{definition}

A crucial observation is that the concepts of set-multilineary in arithmetic circuits and decomposability and completeness in SPNs are even more closely related than syntactic multilinearity is to decomposability.  In particular, it is not hard to see that if we take $k = n$, $y = f$ and $G_i = f_i$, then set-scope (of nodes) and dependency-scope become analogous concepts, 
and D\&C SPNs correspond precisely to monotone syntactically set-multilinear arithmetic circuits in $f$.  Because of the usefulness of this connection, we will use the above identifcations for the remainder of this paper whenever we discuss set-multilinearity in the specific context of SPNs.

This connection also motivates a natural definition for the dependency-scope for polynomials over $f$.  In particular, the dependency-scope of the polynomial $q$ over $f$ will be defined as the set of variables on which the members of $q$'s scope $f_u$ depend.  We will denote the dependency-scope by $x_q$.

\section{Analysis of Validity, Decomposability and Completeness}
\label{sec:analysis}

In this section we give a novel analysis of the relationship between validity, decomposability and completeness, making use of many of the concepts from circuit theory reviewed in the previous section.  

First, we will give a quick result which shows that an incomplete SPN can always be efficiently transformed into a complete one which computes the same function of $x$.  Note that this is not a paradoxical result, as the new SPN will behave differently than the original one when used to evaluate integrals (in the sense of definition of valid SPNs in Section \ref{sec:definitions}).
\begin{proposition}
\label{prop:completeness_transform}
Given an SPN $\Phi$ of size $s$ there exists a complete SPN $\Phi'$ of size $s + n + k \in \bigO(s^2)$, and an expanded set/tuple of univariate functions $f'$ s.t. $q_{\Phi'}(f'(x)) = q_\Phi(f(x))$ for all values of $x$, where $k$ is the sum over the fan-in's of the sum nodes of $\Phi$.  Moreover, $\Phi'$ is decomposable if $\Phi$ is.
\end{proposition}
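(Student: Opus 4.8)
The plan is to repair incompleteness by a purely local ``padding'' construction at each sum node, using copies of the constant function $1$ (viewed as a univariate function of the relevant $x_i$) to grow the dependency-scopes of children that are too small. First I would introduce, for each variable $x_i$, a new leaf node $o_i$ labelled by a fresh univariate function $f'_{i,m_i+1} \equiv 1$; this function is non-negative and has finite $M_i$-integral as long as $M_i(R_i) < \infty$, and crucially its dependency-scope is the singleton $\{x_i\}$ (not $\emptyset$, which is why we label $o_i$ by a function rather than by the constant $1 \in \Real$). Let $f'$ consist of $f$ together with these new functions; they account for the ``$+\,n$'' in the size bound, and since only variables that already appear in $\Phi$ are ever needed as pads we may assume $n \le s$.

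Next, for every sum node $u$ of $\Phi$ with children $v_1,\dots,v_d$, I would use the fact that the dependency-scope of a sum node is the union of those of its children, so $x_u = \bigcup_j x_{v_j}$ and $D_j := x_u \setminus x_{v_j}$ is exactly the set of variables ``missing'' from child $v_j$. I would then replace the edge $v_j \to u$ (retaining its weight) by an edge $p_j \to u$, where $p_j$ is a new product node with children $v_j$ and $\{o_i : i \in D_j\}$; when $D_j = \emptyset$ no node is added and the edge is left alone. Because each $o_i$ evaluates to $1$ we get $q_{p_j}(f'(x)) = q_{v_j}(f'(x))$, while $x_{p_j} = x_{v_j} \cup D_j = x_u$; hence after the rewiring every child of every sum node has the common dependency-scope $x_u$ and $\Phi'$ is complete. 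The number of added product nodes is at most $k = \sum_{\text{sum } u}|C(u)|$, giving total size at most $s + n + k$, which is $\bigO(s^2)$ since $k$ is bounded by the number of edges of $\Phi$.

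It then remains to check two things. For correctness, a bottom-up induction over the DAG shows $q_{\Phi'}(f'(x)) = q_\Phi(f(x))$: the original leaves are untouched, and inserting a product node $p_j$ between $v_j$ and a sum node $u$ does not change the value that flows into $u$ (the extra factors are all $1$), so the polynomial computed at every original node — in particular at the root — is unchanged as a function of $x$. For the decomposability claim, assume $\Phi$ is decomposable; the only new product nodes are the $p_j$'s, whose children have dependency-scopes $x_{v_j}$ and $\{x_i\}$ for $i \in D_j$, which are pairwise disjoint since $D_j \cap x_{v_j} = \emptyset$ by construction; and every original product node keeps its original children (we only rewired edges entering sum nodes) and hence its original disjoint child-scopes. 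So $\Phi'$ is decomposable.

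I do not expect a serious obstacle here — it is a construction-and-verification argument — but there is one point that must be handled with care: the pad attached to each child $v_j$ must be the exact complement $x_u \setminus x_{v_j}$ within the sum node's scope. Padding with a larger set is wasteful and, more importantly, padding $p_j$ with all of $x_u$ rather than the complement would place two copies of some variable inside a single product node and destroy decomposability; padding with too little leaves $u$ incomplete. The only genuinely definitional subtlety is the integrability requirement on the constant functions added to $f'$, which holds whenever the base measures $M_i$ are finite (as they are in all cases of interest, e.g. the counting measure on a finite set).
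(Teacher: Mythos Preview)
Your proposal is correct and follows essentially the same construction as the paper: insert a product node between each sum node $u$ and each child $v$ with $x_v \subsetneq x_u$, whose additional children are constant-$1$ univariate functions of the missing variables $x_u \setminus x_v$. Your write-up is in fact slightly more careful than the paper's on two points---you make explicit why the pad must be a univariate function (so its dependency-scope is $\{x_i\}$ rather than $\emptyset$), and you flag the finite-measure assumption needed for the new functions to have finite integral.
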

So in some sense we can always get completeness ``for free", and of the two properties, decomposability will be the one which actually constrains SPNs in a way that affects their expressive efficiency.

Unlike with decomposability and completeness, validity depends on the particular definitions of univariate functions making up $f$, and thus cannot be described in purely structural terms like set-multilinearity. 
This leads us to propose a slightly stronger condition which we call \emph{strong validity}, which is independent of the particular choice of univariate functions making up $f$.

\begin{definition}
An SPN $\Phi$ is said to be \emph{strongly valid} if it is valid for every possible choice of the univiariate functions making up $f$.  Note: only the values computed by each $f_{i,j}$ are allowed to vary here, not the identities of the dependent variables.
\end{definition}




The following theorem establishes the fundamental connection between set-multilinearity and strong validity.
\begin{theorem}
\label{thm:strongvalidity_outputpoly}
Suppose the elements of $x$ are all non-trivial variables (as defined below). Then an SPN $\Phi$ is strongly valid if and only if its output polynomial is set-multilinear.
\end{theorem}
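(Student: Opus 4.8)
Since $q_\Phi(f(x))$ is the object being integrated and evaluated in Definition~\ref{def:valid_SPN}, strong validity is a property of $\Phi$ only through its output polynomial $q_\Phi$; the circuit structure itself is irrelevant. So the plan is to prove, for an arbitrary non-negative polynomial $q$ in the variables $f$ (with the $G_i = f_i$ grouping, and with each $x_i$ non-trivial), that the integration identity of Definition~\ref{def:valid_SPN} holds for every admissible choice of the univariate functions and every $I$ and $S_I$ if and only if $q$ is set-multilinear. I would treat the two implications separately, using the non-triviality hypothesis in a different way in each.

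\emph{Set-multilinear $\Rightarrow$ strongly valid.} The non-triviality of every $x_i$ is used here to guarantee that the set-scope of $q$ is the whole collection $\{G_1,\dots,G_n\}$, so set-multilinearity means every monomial of $q$ has the form $c\,\prod_{i=1}^{n} f_{i,\sigma(i)}$ for some choice function $\sigma$. Fix $I$, the $S_i$'s, and the values of $x_{[n]\setminus I}$. Because all coefficients of $q$, and all of the $f_{i,j}$, are non-negative, and all of the relevant one-dimensional integrals are finite, Tonelli's theorem lets us factor the integral of each such monomial over the product region $\prod_{i\in I} S_i$ into a product of one-dimensional integrals, namely $\int_{S_I} \prod_{i=1}^{n} f_{i,\sigma(i)}(x_i)\,\mathrm{d}M_I = \prod_{i\in I}\bigl(\int_{S_i} f_{i,\sigma(i)}\,\mathrm{d}M_i\bigr)\cdot\prod_{i\notin I} f_{i,\sigma(i)}(x_i)$. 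Summing over the monomials, the right-hand side is exactly $q(A_I(S_I,x_{[n]\setminus I}))$ by the definition of $A_I$. This direction is bookkeeping plus Tonelli.

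\emph{Strongly valid $\Rightarrow$ set-multilinear.} I would argue by contradiction, one variable at a time, and only singleton index sets $I=\{i\}$ are needed. Fix $i$, treat the other variables as parameters, and group the monomials of $q$ by the total degree $d$ they have in $f_{i,1},\dots,f_{i,m_i}$, writing $q(f(x)) = \sum_{d \ge 0} B_d\cdot(\text{part of $f_i$-degree }d)$; set-multilinearity in the $i$th coordinate is precisely the statement that only $d=1$ occurs. Now exploit the freedom in strong validity: choose \emph{all} of $f_{i,1},\dots,f_{i,m_i}$ to be the \emph{same} function $g := t\,\mathbf{1}_S$, where $S\subseteq R_i$ has $M_i(S)=s\notin\{0,1\}$ and $s<\infty$ (such an $S$ exists exactly because $x_i$ is non-trivial), and choose the functions of the remaining variables, together with the fixed values of $x_{[n]\setminus\{i\}}$, so that every coefficient $B_d$ that is not the zero polynomial evaluates to a strictly positive number --- this is possible because $q$ is non-negative, so no cancellation occurs between monomials and each non-zero $B_d$ is itself a non-negative polynomial with a positive monomial. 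Taking the integration domain in Definition~\ref{def:valid_SPN} also to be $S$, the validity identity for $I=\{i\}$ collapses to the scalar equation
\[
B_0\,(s-1)\;+\;\sum_{d\ge 2} B_d\,s\,(1-s^{d-1})\,t^{d}\;=\;0,
\]
which must hold for all $t\ge 0$. Reading off the coefficients of this polynomial in $t$ and using $0<s$ and $s\ne 1$ forces $B_0=0$ and $B_d=0$ for all $d\ge 2$. Hence no monomial of $q$ has $f_i$-degree other than $1$; running this over all $i$ shows every monomial uses exactly one factor from each $f_i$, i.e.\ $q$ is set-multilinear.

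\emph{Where the difficulty lies.} The conceptual heart is the reverse direction, and the point that needs care is ensuring the discrepancy produced by a ``bad'' monomial of $q$ cannot be silently cancelled by the other monomials; this is exactly what monotonicity (non-negative coefficients, hence no cancellation) buys us, and collapsing a whole group $f_i$ onto a single function $g$ is what turns the validity requirement into a clean polynomial identity in $t$. The remaining work is measure-theoretic hygiene: one must check that the witness functions (indicator-type functions scaled so as to keep every integral finite, even when $M_i$ has infinite total mass) are admissible, and one must keep track of the two distinct uses of non-triviality --- once to force each $f_i$ to occur in $q_\Phi$ (so that set-multilinearity entails the full set-scope needed in the forward direction), and once to supply a subset of $R_i$ whose $M_i$-measure avoids both $0$ and $1$.
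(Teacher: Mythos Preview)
Your two directions are both essentially correct, but they differ from the paper's proof in interesting ways.

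For the implication ``set-multilinear $\Rightarrow$ strongly valid'' your Tonelli-and-factor argument is exactly what the paper does. However, your remark that non-triviality is used here ``to guarantee that the set-scope of $q$ is the whole collection $\{G_1,\dots,G_n\}$'' is confused: non-triviality is a property of the measures $M_i$, not of the polynomial $q_\Phi$, and cannot by itself force any variable to appear in $q_\Phi$. In fact the paper explicitly notes that this direction does \emph{not} require the non-triviality hypothesis. (There is a genuine edge case lurking here when the set-scope of $q_\Phi$ is not full, but the paper glosses over it too, and it is orthogonal to your main argument.)

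For the implication ``strongly valid $\Rightarrow$ set-multilinear'' your approach is correct and genuinely different from the paper's. The paper picks \emph{two} disjoint sets $S,T\subseteq R_{i_0}$ of finite positive measure, arranges that $\int_S f_{i_0,j}=\int_T f_{i_0,j}=b$ for all $j$, and uses additivity of the integral over $S\cup T$ together with validity to obtain the functional equation $2g(b)=g(2b)$, where $g$ is $q_\Phi(A_{\{i_0\}}(\cdot))$ viewed as a polynomial in $b$; this kills every coefficient of degree $\ne 1$. You instead pick a \emph{single} set $S$ with $M_i(S)=s\notin\{0,1\}$, collapse all of $f_{i,1},\dots,f_{i,m_i}$ onto $t\,\mathbf 1_S$, and compare $\int_S q$ directly with $q(A_{\{i\}}(S,\cdot))$ to get a polynomial identity in $t$ whose coefficients force $B_d=0$ for $d\ne 1$. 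Your route is arguably more direct---one set instead of two, and no appeal to additivity---while the paper's $2g(b)=g(2b)$ trick makes the ``degree $\ne 1$'' obstruction slightly more transparent. Both rely on monotonicity (Fact~\ref{fact:pos_coeff}) in exactly the same way, to rule out cancellation and ensure a bad monomial really produces a nonzero coefficient. One small point to tidy: non-triviality gives you two disjoint sets of finite positive measure but not directly one with measure $\ne 1$; you should note that if both happen to have measure $1$ then their union has measure $2$, so such an $S$ always exists.
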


A variable $x_i$ is \emph{non-trivial} if there are at least two disjoint subsets of $x_i$'s range $R_i$ which have finite and non-zero measure under $M_i$.

Non-triviality is a very mild condition.  In the discrete case, it is equivalent to requiring that there is more than one element of the range set $R_i$ which has mass under the associated measure.  Trivial variables can essentially be thought of as ``constants in disguise", and we can easily just replace them with constant nodes without affecting the input-output behavior of the circuit.

It is worth noting that the non-triviality hypothesis is a necessary one for the forward direction of Theorem \ref{thm:strongvalidity_outputpoly} (although not the reverse direction).  To see this, consider for example the SPN $\Phi$ which computes $f_{1,1}(x_1)^2 f_{2,1}(x_2)$ in the obvious way, where $R_1 = \{1\}$ and $R_2 = \{0,1\}$, and the $M_i$ are the standard counting measures.  While $\Phi$'s output polynomial $q_\Phi$ is not set-multilinear by inspection, it is relatively easy to show that $\Phi$ is indeed strongly valid, as it is basically equivalent to $c f_{2,1}(x_2)$ for a constant $c$.

While Theorem \ref{thm:strongvalidity_outputpoly} is interesting by itself as it provides a complete characterization of strong validity in terms of purely algebraic properties of an SPN's output polynomial, its main application in this paper will be to help prove the equivalence of strong validity with the decomposability and completeness conditions.

Note that such an equivalence does not hold for \emph{standard} validity, as was first demonstrated by \cite{poon2011sum}.  To see this, consider the SPN which computes the expression $(f_{1,1}(x_1)f_{1,2}(x_1) + 1)f_{2,1}(x_2)$ in the obvious way, where the $x_i$ are 0/1-valued, the $M_i$ are the standard counting measures, and $f_{1,1}(x_1) = x_1$, $f_{1,2}(x_1) = \overline{x_1} = 1-x_1$, and $f_{2,1}(x_2) = x_2$.  Clearly this SPN is neither decomposable nor complete, and yet an exhaustive case analysis shows that it is valid for these particular choices of the $f_{i,j}$'s.

Before we can prove the equivalence of strong validity with the decomposability and completeness conditions, we need to introduce another mild hypothesis which we call ``non-degeneracy".

\begin{definition}
A monotone arithmetic circuit (such as an SPN) is called \emph{non-degenerate} if all of its weights and constants are non-zero (i.e. strictly positive).
\end{definition}

Like non-triviality, non-degeneracy is a very mild condition to impose, since weights which are zero don't actually ``affect" the output.  Moreover, there is a simple and size-preserving procedure which can transform a degenerate monotone arithmetic circuit to a non-degenerate one which computes precisely the same output polynomial, and also preserves structural properties like decomposability and completeness in SPNs.  The procedure is as follows.  First we remove all edges with weight $0$.  Then we repeatedly remove any nodes with fan-out 0 (except for the original output node) or fan-in 0 (except input node and constant nodes), making sure to remove any product node which is a parent of a node we remove.  It is not hard to see that deletion of a node by this procedure is a proof that it computes the zero-polynomial and thus doesn't affect the final output.  

Without non-degeneracy, the equivalence between strong validity and the decomposability and completeness conditions does not hold for SPNs, as can be seen by considering the SPN $\Phi$ which computes the expression $0f_{1,1}(x_1)^2 + f_{1,1}(x_1) f_{2,1}(x_2)$ in the obvious way, where the $x_i$ are 0/1-valued and the $M_i$ are the standard counting measures, and the $f_{i,j}$'s are the identity function (i.e. $f_{i,j}(x_i) = x_i$).  Because the output polynomial $q_\Phi$ of $\Phi$ is equivalent to $f_{1,1}(x_1) f_{2,1}(x_2)$, it is indeed valid.  However, the product node within $\Phi$ which computes $f_{1,1}(x_1)^2$ violates the decomposability condition, even though this computation is never actually ``used" in the final output (due to how it is weighted by 0).

Non-degeneracy allows us to prove many convenient properties, which are given in the lemma below.

\begin{lemma}
\label{lemma:non-degen}
Suppose $\Phi$ is a non-degenerate monotone arithmetic circuit.  Denote by $r$ the root of $\Phi$, and $\{u_i\}_i$ its child nodes.

We have the following facts:
\begin{enumerate}
\item Each of the $\Phi_{u_i}$'s are non-degenerate monotone arithmetic circuits.

\item If $r$ is a product node, the set of monomials in $q_r$ is equal to the set consisting of every possible product formed by taking one monomial from each of the $q_{u_i}$'s.   NOTE: This is true even for degenerate circuits.

\item If $r$ is a sum node, the set of monomials in $q_r$ is equal to the union over the sets of monomials in the $q_{u_i}$'s.

\item $q_r$ is not the zero polynomial.

\item The set-scope of $r$ is equal to the set-scope of $q_r$.
\end{enumerate}

\end{lemma}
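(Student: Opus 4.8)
The plan is to prove the five facts by unwinding definitions, with two recurring principles doing essentially all the work: monotonicity (via Fact~\ref{fact:pos_coeff}, every node's polynomial has non-negative coefficients) rules out cancellation, and non-degeneracy (strictly positive weights and constants) rules out monomials or variables being ``erased'' by a zero weight. I would prove the facts in the order listed, since (4) uses (1)--(3) and (5) uses (2)--(4). Fact~(1) is immediate: $\Phi_{u_i}$ consists of exactly the nodes of $\Phi$ on a path to $u_i$, retaining all of $\Phi$'s edge-weights and constant labels, so it is again a single-sink DAG that is monotone and has all weights/constants strictly positive.

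For Fact~(2), expand $q_r = \prod_i q_{u_i}$: the coefficient of a monomial $m$ in $q_r$ is $\sum \prod_i c_i$, summed over all factorizations $m = \prod_i m_i$ into monomials $m_i$, where $c_i$ is the coefficient of $m_i$ in $q_{u_i}$. By Fact~\ref{fact:pos_coeff} each $c_i \ge 0$, so this total is nonzero precisely when some factorization has every $m_i$ in its $q_{u_i}$ --- which is the claimed characterization. Only non-negativity of coefficients was used, not strict positivity, so the statement holds for degenerate circuits too (including the corner case where some $q_{u_i}$ is the zero polynomial, in which case both of the described sets are empty). Fact~(3) is the parallel computation for $q_r = \sum_i w_{u_i,r} q_{u_i}$: the coefficient of $m$ in $q_r$ is $\sum_i w_{u_i,r}\, c_i$ with $c_i \ge 0$ the coefficient of $m$ in $q_{u_i}$, and here non-degeneracy is needed to know $w_{u_i,r} > 0$, which makes the sum vanish iff every $c_i$ does.

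Fact~(4) follows by structural induction (equivalently, induction along a topological order), using (1) to descend into subcircuits. The base cases are a variable leaf (computing the nonzero polynomial $y_j$) and a constant leaf (computing a nonzero constant, by non-degeneracy). At a product node, each child polynomial is nonzero by the inductive hypothesis, so by (2) one can choose a monomial from each child and multiply to obtain a monomial of $q_r$ (alternatively, invoke that $\Real[y]$ is an integral domain); at a sum node, (3) says the monomials of $q_r$ form the union of those of the children, hence are nonempty. For Fact~(5), I would first prove the auxiliary claim that in a non-degenerate monotone circuit a node's scope equals its output polynomial's scope, $y_r = y_{q_r}$ --- note the excerpt observes these can differ in general. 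The inclusion $y_{q_r} \subseteq y_r$ is the already-stated general fact; for the reverse I induct on structure: leaves are trivial, and at a product (resp.\ sum) node $y_r = \bigcup_i y_{u_i} = \bigcup_i y_{q_{u_i}}$ by induction, while (2) together with (4) --- the latter guaranteeing a monomial can always be drawn from each child --- (resp.\ (3)) shows each variable of each $q_{u_i}$ actually occurs in some monomial of $q_r$, giving $\bigcup_i y_{q_{u_i}} \subseteq y_{q_r}$. Then the set-scopes coincide because $G_r$ and $G_{q_r}$ are both obtained by applying the same rule $\{\, G_i : (\,\cdot\,) \cap G_i \ne \emptyset \,\}$ to the now-equal sets $y_r$ and $y_{q_r}$.

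The only step that needs genuine care is Fact~(5): the obvious bound yields just one containment, and closing the gap forces the stronger statement $y_r = y_{q_r}$, which is exactly where non-degeneracy is indispensable --- a zero weight buried in the circuit could otherwise keep a variable in a node's scope while deleting it from the output polynomial (as in the degenerate example $0 f_{1,1}(x_1)^2 + f_{1,1}(x_1)f_{2,1}(x_2)$ discussed before the lemma). All the remaining facts are short coefficient computations resting on the non-negativity guaranteed by Fact~\ref{fact:pos_coeff}.
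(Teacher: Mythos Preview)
Your proposal is correct and follows essentially the same approach as the paper: Facts (1)--(4) are handled identically (immediate for (1), coefficient expansions using Fact~\ref{fact:pos_coeff} and non-degeneracy for (2)--(3), structural induction for (4)). For Fact~(5) the paper argues directly at the level of set-scopes (showing by induction that every $G_k$ in $r$'s set-scope appears as a factor of some monomial of $q_r$), whereas you establish the slightly stronger $y_r = y_{q_r}$ and then pass to set-scopes; the inductive mechanics are the same, and your version is a clean strengthening that costs nothing extra.
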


We are now in a position to prove the following theorem. 

\begin{theorem}
\label{theorem:sml_output_equiv_setmulti}
A non-degenerate monotone arithmetic circuit $\Phi$ has a set-multilinear output polynomial if and only if it is syntactically set-multilinear. 
\end{theorem}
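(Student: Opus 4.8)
The plan is to prove both directions separately, with the easy direction first. One direction is essentially already asserted in the paper's exposition: a syntactically set-multilinear arithmetic circuit is semantically set-multilinear, i.e. every node (in particular the root) computes a set-multilinear polynomial. I would prove this by induction on the structure of $\Phi$, exactly as one proves that syntactic multilinearity implies semantic multilinearity. The base case (input and constant nodes) is immediate, since their output polynomials are single variables or constants. For a product node $u$, using Lemma~\ref{lemma:non-degen}(2), every monomial in $q_u$ is a product of monomials, one from each child $q_v$; since the set-scopes of the children are pairwise disjoint and each child's monomial has exactly one factor from each $G_i$ in that child's set-scope, the product has exactly one factor from each $G_i$ in $G_u = \bigcup_v G_v$, and $G_u$ is then also the set-scope of $q_u$ by part (5) of the lemma. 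For a sum node $u$, by Lemma~\ref{lemma:non-degen}(3) the monomials in $q_u$ are the union of the monomials in the $q_v$'s, and since all children have the same set-scope and all are set-multilinear, $q_u$ is too. (Note this direction does not even require non-degeneracy for the product-node step, but it is harmless to use it everywhere.)

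The substantive direction is the converse: if $\Phi$ is non-degenerate monotone with a set-multilinear output polynomial, then $\Phi$ is syntactically set-multilinear, meaning every product node has children with pairwise-disjoint set-scopes and every sum node has children with identical set-scopes. Here the key tool is Lemma~\ref{lemma:non-degen}, which is exactly tailored for this: because $\Phi$ is non-degenerate, applying the lemma recursively tells us that at the root (and, by part (1), at every node, since every subcircuit $\Phi_u$ is again non-degenerate monotone) the set of monomials in the output polynomial is ``honestly'' built from the children's monomials — no cancellation occurs. So the strategy is a top-down induction on $\Phi$: I would show that if $\Phi$ has a set-multilinear output polynomial, then (i) the root satisfies the syntactic condition locally, and (ii) each child subcircuit $\Phi_{u_i}$ also has a set-multilinear output polynomial, so the induction can continue.

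For step (i) I would argue by contradiction at the root $r$. If $r$ is a product node and two of its children $u_i, u_j$ have a common $G_k$ in their set-scopes, then by Lemma~\ref{lemma:non-degen}(5) and part (4), $q_{u_i}$ has a monomial containing some factor from $G_k$ and $q_{u_j}$ has a monomial containing some factor from $G_k$, and neither $q_{u_i}$ nor any $q_{u_\ell}$ is the zero polynomial; by part (2), taking those two monomials (and any monomial from the remaining children) yields a monomial of $q_r$ with two factors from $G_k$ (or a factor from $G_k$ appearing with degree $\geq 2$), contradicting set-multilinearity of $q_r$. If $r$ is a sum node and children $u_i, u_j$ have different set-scopes, say $G_k \in G_{u_i} \setminus G_{u_j}$, then by part (3) $q_r$ has a monomial (coming from $q_{u_i}$) using a factor from $G_k$ and another monomial (from $q_{u_j}$) not using any factor from $G_k$; since $G_k$ intersects the scope of $q_r$, it lies in the set-scope $G_{q_r}$, and then set-multilinearity demands every monomial of $q_r$ have exactly one factor from $G_k$ — contradicted by the monomial from $q_{u_j}$. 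For step (ii) I would observe that if the root is a product node, each $q_{u_i}$ divides $q_r$ in the obvious combinatorial sense from part (2), so a repeated factor from some $G_k$ inside $q_{u_i}$ would produce a repeated factor in some monomial of $q_r$; hence each $q_{u_i}$ is set-multilinear. If the root is a sum node, then since all monomials of $q_r$ are set-multilinear and the monomials of each $q_{u_i}$ form a subset of those of $q_r$ (part (3)), each $q_{u_i}$ is again set-multilinear. Combining (i) and (ii) and inducting over all nodes (each reachable via its own non-degenerate monotone subcircuit) gives syntactic set-multilinearity.

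The main obstacle is making the monomial-level bookkeeping in step (i) fully rigorous: one has to be careful that ``$G_k$ is in the set-scope of $q_{u_i}$'' really forces a specific monomial of $q_{u_i}$ to contain an honest factor from $G_k$, and then that the product/union structure of monomials from Lemma~\ref{lemma:non-degen} transports this to an honest violation in $q_r$ with no possibility of cancellation — this is precisely where non-degeneracy is doing all the work, via parts (2)–(5) of the lemma, and where the hypothesis cannot be dropped (as the paper's $0 f_{1,1}(x_1)^2 + f_{1,1}(x_1) f_{2,1}(x_2)$ example shows). Everything else is a routine structural induction.
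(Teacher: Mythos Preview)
Your approach is essentially identical to the paper's own proof: both directions by structural/depth induction, using Lemma~\ref{lemma:non-degen} parts (2)--(5) to push set-multilinearity violations at a child up to the root without cancellation. The paper organizes the two directions in the same way and uses exactly the same contradiction arguments for step (i).

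There is one genuine omission in your step (ii) for product nodes. Set-multilinearity of $q_{u_i}$ means every monomial has \emph{exactly} one factor from each $G_k$ in its set-scope, so there are two ways it can fail: a monomial with $\geq 2$ factors from some $G_k$, or a monomial with $0$ factors from some $G_k\in G_{q_{u_i}}$. You only handle the first (``a repeated factor from some $G_k$ inside $q_{u_i}$ would produce a repeated factor in some monomial of $q_r$'') and then conclude set-multilinearity. The second case also needs an argument: if a monomial $m_1$ of $q_{u_i}$ misses $G_k$, you multiply by arbitrary monomials from the other children and use the disjointness of set-scopes already established in step (i) to conclude the resulting monomial of $q_r$ also misses $G_k$, contradicting set-multilinearity of $q_r$ (since $G_k\in G_{q_{u_i}}\subseteq G_{q_r}$). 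The paper handles both cases explicitly. Your sum-node step (ii) is fine as stated, because you have already established that all children share the same set-scope as $r$, so ``subset of monomials'' suffices; but you should make that dependence on step (i) explicit.
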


Given this theorem, and utilizing the previously discussed connection between syntactic set-multilinearity and the decomposability and completeness conditions, the following corollary is immediate:
\begin{corollary}
\label{lemma:sml_output_equiv_candc}
A non-degenerate SPN $\Phi$ has a set-multilinear output polynomial if and only if it is decomposable and complete. 
\end{corollary}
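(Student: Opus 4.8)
The plan is to derive this directly from Theorem~\ref{theorem:sml_output_equiv_setmulti} by invoking the dictionary between the circuit-theoretic notions (set-scope, syntactic set-multilinearity) and the SPN-specific ones (dependency-scope, decomposability, completeness) that was set up in Section~\ref{sec:polys_and_multi}. Concretely, I would fix the variable partition by taking $y = f$, $k = n$, and $G_i = f_i$ for each $i \in [n]$, as stipulated there. Since an SPN is by definition a monotone arithmetic circuit over $f$, a non-degenerate SPN is a non-degenerate monotone arithmetic circuit over $f$, and the hypothesis of Theorem~\ref{theorem:sml_output_equiv_setmulti} transfers with nothing to check. So the theorem applies verbatim: $q_\Phi$ is set-multilinear with respect to this partition if and only if $\Phi$ is syntactically set-multilinear.

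The one point worth spelling out is that, under this choice of partition, ``syntactically set-multilinear'' coincides with ``decomposable and complete''. First I would observe that for any node $u$ the set-scope $G_u = \{ G_i : f_u \cap f_i \neq \emptyset \}$ and the dependency-scope $x_u$ are in canonical bijection: because each $f_{i,j}$ depends only on $x_i$, we have $f_u \cap f_i \neq \emptyset$ exactly when some $f_{i,j}$ lies in $u$'s scope $f_u$, which is precisely the condition for $x_i \in x_u$. Hence $G_i \in G_u \iff x_i \in x_u$, and this bijection carries ``disjoint'' to ``disjoint'' and ``equal'' to ``equal''. Consequently the first defining clause of syntactic set-multilinearity (the children of each product node have pairwise disjoint set-scopes) is equivalent to decomposability, and the second clause (the children of each sum node have identical set-scopes) is equivalent to completeness.

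Chaining the two equivalences then yields the corollary: a non-degenerate SPN has a set-multilinear output polynomial iff it is syntactically set-multilinear iff it is decomposable and complete. I do not expect any genuine obstacle here — the substantive work lives entirely in Theorem~\ref{theorem:sml_output_equiv_setmulti} (which in turn rests on Lemma~\ref{lemma:non-degen}) and in the scope/dependency-scope correspondence already recorded in Section~\ref{sec:polys_and_multi}. The only care needed is bookkeeping: fixing the partition $G_i = f_i$ up front so that ``set-multilinear'' has the intended meaning for SPNs, and noting explicitly that non-degeneracy of an SPN is literally non-degeneracy of its underlying monotone arithmetic circuit.
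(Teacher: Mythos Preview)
Your proposal is correct and follows essentially the same approach as the paper: the paper states that the corollary is ``immediate'' from Theorem~\ref{theorem:sml_output_equiv_setmulti} together with the identification (from Section~\ref{sec:polys_and_multi}) of D\&C SPNs with monotone syntactically set-multilinear circuits under the partition $y=f$, $G_i=f_i$. You have simply spelled out that dictionary in more detail than the paper does, but the logic is identical.
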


And from this and Theorem \ref{thm:strongvalidity_outputpoly}, we have a 3-way equivalence between strong validity, the decomposability and completeness conditions, and the set-multilinearity of the output polynomial.  This is stated as the following theorem.

\begin{theorem}
\label{thm:3way_equiv}
Suppose $\Phi$ is a non-degenerate SPN whose input variables (the elements of $x$) are all non-trivial.  Then the following 3 conditions are equivalent:
\begin{enumerate}
\item $\Phi$ is strongly valid
\item $\Phi$ is decomposable and complete
\item $\Phi$'s output polynomial is set-multilinear
\end{enumerate}
\end{theorem}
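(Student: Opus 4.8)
The plan is to observe that the theorem is an immediate consequence of the two results that precede it, so the proof amounts to chaining them together and checking that their hypotheses are all subsumed by those of the theorem. Concretely, Theorem \ref{thm:strongvalidity_outputpoly} asserts that, when every element of $x$ is non-trivial, condition (1) (strong validity) holds if and only if condition (3) (set-multilinearity of the output polynomial) holds; and Corollary \ref{lemma:sml_output_equiv_candc} asserts that, when $\Phi$ is non-degenerate, condition (3) holds if and only if condition (2) ($\Phi$ is decomposable and complete). Since the hypothesis of the present theorem imposes both non-triviality of the $x_i$'s and non-degeneracy of $\Phi$, both of these biconditionals are available simultaneously, and transitivity of ``$\iff$'' then yields $(1) \iff (2) \iff (3)$.

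The only thing worth spelling out is the role each hypothesis plays, since the two auxiliary hypotheses are not interchangeable. Non-triviality is used only in the $(1) \Rightarrow (3)$ step: it rules out ``constants in disguise'' among the inputs, which (as the $f_{1,1}(x_1)^2 f_{2,1}(x_2)$ example earlier shows) could otherwise make a circuit with a non-set-multilinear output polynomial strongly valid. Non-degeneracy is used only in the $(3) \Rightarrow (2)$ step --- via Theorem \ref{theorem:sml_output_equiv_setmulti} and the identification of syntactic set-multilinearity with decomposability-and-completeness --- since it rules out product nodes whose scope-overlaps are ``masked'' by a zero weight somewhere above them, as in the $0 f_{1,1}(x_1)^2 + f_{1,1}(x_1) f_{2,1}(x_2)$ example. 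The reverse implications $(2) \Rightarrow (3)$ and $(3) \Rightarrow (1)$ do not need either hypothesis --- they are the ``easy'' classical directions (syntactic set-multilinearity forces a set-multilinear output, which in turn forces the validity-style integration identities for every choice of the $f_{i,j}$'s).

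Because everything rests on results already established, there is no genuine obstacle at this level; the substantive work lives in Theorem \ref{thm:strongvalidity_outputpoly} (whose forward direction is the one new ingredient, relying on the inductive monomial-counting facts of Lemma \ref{lemma:non-degen}) and in Theorem \ref{theorem:sml_output_equiv_setmulti}. If I wanted a more self-contained writeup I would instead inline the short cycle $(1) \Rightarrow (3) \Rightarrow \text{(syntactically set-multilinear)} \Rightarrow (2) \Rightarrow (3) \Rightarrow (1)$, but for the purposes of the paper it is cleanest simply to cite the two prior statements and conclude.
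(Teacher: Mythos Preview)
Your proposal is correct and matches the paper's own treatment exactly: the paper does not give a separate proof in the appendix but simply remarks, just before stating the theorem, that the 3-way equivalence follows directly from combining Theorem~\ref{thm:strongvalidity_outputpoly} (giving $(1)\iff(3)$ under non-triviality) with Corollary~\ref{lemma:sml_output_equiv_candc} (giving $(2)\iff(3)$ under non-degeneracy). One small slip in your commentary: it is Theorem~\ref{theorem:sml_output_equiv_setmulti}, not Theorem~\ref{thm:strongvalidity_outputpoly}, whose proof relies on the monomial-tracking facts of Lemma~\ref{lemma:non-degen}; the forward direction of Theorem~\ref{thm:strongvalidity_outputpoly} instead uses a direct argument via Fact~\ref{fact:pos_coeff} and a carefully chosen family of univariate functions.
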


Because SPNs can always be efficiently transformed so that the non-degeneracy and non-triviality hypotheses are both satisfied (as discussed above), this equivalence between strong validity and the D\&C conditions makes the former easy to verify (since the D\&C conditions themselves are). 

However, as we will see in later sections, decomposability and completeness are restrictive conditions that limit the expressive power of SPNs in a fundamental way. And so a worthwhile question to ask is whether a set of efficiently testable criteria exist for verifying standard/weak validity.


We will shed some light on this question by proving a result which shows that a criterion cannot be both efficiently testable and capture all valid SPNs, provided that $P \neq NP$.  A caveat to this result is that we can only prove it for a slightly extended definition of SPNs where negative weights and constants are permitted.

\begin{theorem}
\label{thm:validity_coNP-hard}
Define an \emph{extended SPN} as one which is allowed to have negative weights and constants.  The problem of deciding whether a given extended SPN is valid is co-NP-hard.
\end{theorem}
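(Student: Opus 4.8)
The plan is to give a polynomial-time many-one reduction from the (co-NP-complete) problem of deciding whether a CNF formula is unsatisfiable; equivalently, I reduce $\mathrm{SAT}$ to the problem of deciding that an extended SPN is \emph{not} valid. Fix a CNF formula $\varphi = c_1 \wedge \dots \wedge c_t$ over Boolean variables $z_1,\dots,z_m$. Take $x = (x_1,\dots,x_m)$ with each $R_i = \{0,1\}$, each $M_i$ the counting measure, and univariate functions $f_{i,1}(x_i) = x_i$ and $f_{i,2}(x_i) = 1 - x_i$, so that $\int_{\{0,1\}} f_{i,1}\,\mathrm{d}M_i = \int_{\{0,1\}} f_{i,2}\,\mathrm{d}M_i = 1$. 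For a literal $\ell$ write $\overline{\ell}$ for the univariate function computing its negation, i.e.\ $\overline{\ell} = f_{i,2}$ if $\ell = z_i$ and $\overline{\ell} = f_{i,1}$ if $\ell = \overline{z_i}$. I build, in time $\bigO(|\varphi|)$, an extended SPN $\Phi$ computing
\[
q_\Phi(f(x)) \;=\; \prod_{j=1}^{t}\Bigl(1 - \prod_{k}\overline{\ell}_{jk}\Bigr),
\]
where the inner product runs over the literals $\ell_{jk}$ of clause $c_j$; each subtraction is realized by a sum node with a constant child $1$ and an edge of weight $-1$, which is the only place negative weights are used. Since $\prod_k \overline{\ell}_{jk}(x) = 1$ precisely when $x$ falsifies $c_j$, we get $q_\Phi(f(x)) = g(x)$, where $g\colon\{0,1\}^m \to \{0,1\}$ is the indicator of the satisfying assignments of $\varphi$; in particular $\sum_{x\in\{0,1\}^m} q_\Phi(f(x))$ equals the number of satisfying assignments of $\varphi$.

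If $\varphi$ is satisfiable, then $\Phi$ is not valid: instantiate the validity identity with $I = [m]$ and every $S_i = \{0,1\}$. Its left-hand side is $\sum_{x} q_\Phi(f(x)) \ge 1$, whereas its right-hand side is $q_\Phi$ evaluated with every $f_{i,j}$ replaced by $\int_{\{0,1\}} f_{i,j} = 1$, which turns every factor $1 - \prod_k \overline{\ell}_{jk}$ into $1 - 1 = 0$ and hence the whole product into $0$.

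If $\varphi$ is unsatisfiable, I claim $\Phi$ is valid, i.e.\ every validity identity holds. Now $g \equiv 0$ on $\{0,1\}^m$, so $q_\Phi(f(x)) = 0$ for all $x$, and thus the left-hand side $\int_{S_I} q_\Phi(f(x))\,\mathrm{d}M_I(x_I)$ of every identity is $0$; it remains to show $q_\Phi(A_I(S_I,x_{[m]\setminus I})) = 0$ for every index set $I$, every tuple of nonempty $S_i \subseteq \{0,1\}$, and every fixing of $x_{[m]\setminus I}$. Under $A_I$ each $\overline{\ell}_{jk}$ is replaced by a value $\beta_{jk} \in \{0,1\}$ (namely $f_{i,j}(x_i)$ if its variable lies outside $I$, and $\int_{S_i} f_{i,1} = [1\in S_i]$ or $\int_{S_i} f_{i,2} = [0\in S_i]$ otherwise). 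Define $y \in \{0,1\}^m$ by $y_i = x_i$ for $i \notin I$, $y_i = 0$ for $i \in I$ with $0 \in S_i$, and $y_i = 1$ for $i \in I$ with $S_i = \{1\}$. A one-line case analysis over the polarity of the literal and the form of $S_i$ gives $\beta_{jk} \ge \overline{\ell}_{jk}(y)$ for every literal, hence $\prod_k \beta_{jk} \ge \prod_k \overline{\ell}_{jk}(y)$, and the right side is $1$ whenever $y$ falsifies $c_j$. As $\varphi$ is unsatisfiable, $y$ falsifies some $c_{j_0}$, so $1 - \prod_k \beta_{j_0 k} = 0$ and the product defining $q_\Phi(A_I(\cdot))$ vanishes. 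Thus both sides of every validity identity are $0$, so $\Phi$ is valid. Combining the two directions, $\varphi$ is unsatisfiable iff $\Phi$ is valid, which proves co-NP-hardness.

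The bulk of the work, and the only real obstacle, is the converse direction: one must certify that \emph{no} validity violation exists among the exponentially many index sets $I$, product subsets $S_I$, and fixings, which is exactly what the combinatorial ``domination'' inequality $\beta_{jk} \ge \overline{\ell}_{jk}(y)$ buys. (One caveat: if the definition of validity is read so as to allow $S_i = \emptyset$, the assignment $y$ is undefined at such coordinates; this is patched by instead letting $\Phi$ compute $\bigl(\prod_{i=1}^{m}(f_{i,1}+f_{i,2})\bigr)\cdot\prod_{j}(1 - \prod_k \overline{\ell}_{jk})$, which still equals $g(x)$ as a function of $x$ and whose leading factor forces the right-hand side to $0$ whenever some $S_i$ is empty; the argument above goes through otherwise unchanged, since that factor evaluates to $\prod_{i}|S_i| \ge 1$ when all $S_i$ are nonempty.)
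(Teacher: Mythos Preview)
Your proof is correct and takes a genuinely different route from the paper's, although both reduce from (co\nobreakdash-)SAT with the same setup $R_i=\{0,1\}$, counting measure, and $f_{i,1}=x_i$, $f_{i,2}=1-x_i$.

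The paper arithmetizes the CNF by replacing each OR by a sum node and each AND by a product node, obtaining a circuit that is merely \emph{positive} on satisfying assignments; it then multiplies in an auxiliary ``integration detector'' $\prod_i(1-f_{i,1}f_{i,2})$ whose job is to force the output to $0$ whenever some variable is integrated over its full range $\{0,1\}$. With this gadget the unsatisfiable $\Rightarrow$ valid direction is essentially mechanical: point evaluations are $0$ because the formula is unsatisfiable, and ``integrated'' evaluations are $0$ because the gadget fires. You instead use the exact $\{0,1\}$-valued arithmetization $\prod_j\bigl(1-\prod_k\overline{\ell}_{jk}\bigr)$ and dispense with the auxiliary gadget entirely; the price is that you must actually \emph{argue} the unsatisfiable $\Rightarrow$ valid direction, which you do via the domination inequality $\beta_{jk}\ge\overline{\ell}_{jk}(y)$ for the assignment $y$ you construct from the $S_i$'s. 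That inequality is the real content of your proof: it says every partially-integrated evaluation of your circuit is dominated, clause by clause, by an honest point evaluation, so unsatisfiability alone already forces the right-hand side to vanish. The trade-off is clear: the paper's approach is more modular (the killing gadget works regardless of how satisfaction is arithmetized, so the two directions decouple completely), while yours is tighter and avoids bolting on an extra component. Your caveat about $S_i=\emptyset$ and the $\prod_i(f_{i,1}+f_{i,2})$ patch are both correct and handle that edge case cleanly.
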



We leave it as an open question as to whether a similar co-NP-hardness property holds for validity checking of standard SPNs.

\section{Focusing on D\&C SPNs}

One of the main goals of this paper is to advance the understanding of the expressive efficiency of SPNs.  In this section we explore possible directions we can take towards this goal, and ultimately propose to focus exclusively on D\&C SPNs.

It is well known that standard arithmetic circuits can efficiently simulate Boolean logic circuits with only a constant factor overhead.  Thus they are as efficient at computing a given function as any standard model of computation, up to a polynomial factor.  However, we cannot easily exploit this fact to study SPNs, as this simulation requires negative weights, and the weights of an SPN are constrained to be non-negative (i.e. they are monotone arithmetic circuits).  And while SPNs have access to non-negative valued univariate functions of the input which standard monotone arithmetic circuits do not, this fact cannot obviously be used to construct a simulation of Boolean logic circuits.


Another possible way to gain insight into general SPNs would be to apply existing results for monotone arithmetic circuits.  However, a direct application of such results is impossible, as SPNs are monotone arithmetic circuits over $f$ and not $x$, and indeed their univariate functions can compute various non-negative functions of $x$ (such as $1-x_i$ for values of $x_i$ in $\{0,1\}$) which a monotone circuit could not.

But while it seems that the existing circuit theory literature doesn't offer much insight into general SPNs, there are many interesting results available for multilinear and set-multilinear arithmetic circuits.  And as we saw in Section \ref{sec:analysis}, these are closely related to D\&C SPNs.  

Moreover, it makes sense to study D\&C SPNs, as they are arguably the most interesting class of SPNs, both from a theoretical and practical perspective. Indeed, the main reason why SPNs are interesting and useful in the first place is that \emph{valid SPNs} avoid the intractability problems that plague conventional deep models like Deep Boltzmann Machines.  Meanwhile the D\&C conditions are the only efficiently testable conditions for ensuring validity that we are aware of, and as we showed in Section \ref{sec:analysis}, they are also \emph{necessary} conditions for a slightly strengthened notion of validity.

Thus, D\&C SPNs will be our focus for the rest of the paper.

\section{Capabilities of D\&C SPNs}
\label{sec:capabilities}

Intuitively, D\&C SPNs seem very limited compared to general arithmetic circuits.  In addition to being restricted to use non-negative weights and constants like general SPNs, decomposability heavily restricts the kinds of structure the networks can have, and hence the kinds of computations they can perform.  For example, something as simple as squaring the number computed by some node $u$ becomes impossible.

In order to address the theoretical question of what kinds of functions D\&C SPNs \emph{can} compute efficiently, despite their apparent limitations, we will construct explicit D\&C SPNs that efficiently compute various example functions.  

This is difficult to do directly because the decomposability condition prevents us from using the basic computational operations we are accustomed to working with when designing algorithms or writing down formulae. To overcome this difficulty we will provide a couple of related examples of computational systems which we will show can be efficiently simulated by SPNs.  These systems will be closer to more traditional models of computation like state-space machines, so that our existing intuitions about algorithm design will be more directly applicable to them.

The first such system we will call a Fixed-Permutation Linear Model (FPLM), which works as follows.  We start by initializing a ``working vector" $v$ with a value $a$, and then we process the input (the $x_i$'s) in sequence, according to a fixed order given by a permutation $\pi$ of $[n]$.  At each stage we multiply $v$ by a matrix which is determined by the value of the current $x_i$.  After seeing the whole input, we then take the inner product of $v$ with another vector $b$, which gives us our real-valued output.

More formally, we can define FPLMs as follows.  
\begin{definition} 
A Fixed-Permutation Linear Model (FPLM) will by defined by a fixed permutation $\pi$ of $[n]$, a `dimension' constant $k$ (which in some sense measures the size of the FPLM), vectors $a,b \in \Real_{\geq 0}^k$ and for each $i \in [n]$, a matrix-valued function $T_i$ from $x_i$ to $\Real_{\geq 0}^{k \times k}$.   The output of a FPLM is defined as $b^\top T_{\pi(n)}(x_{\pi(n)}) T_{\pi(n-1)}(x_{\pi(n-1)}) \cdots T_{\pi(1)}(x_{\pi(1)}) a$.
\end{definition}

An FPLM can be viewed as a computational system which must process its input in a fixed order and maintains its memory/state as a $k$-dimensional vector.  Crucially, an FPLM cannot revisit inputs that it has already processed, which is a similar limitation to the one faced by read-once Turing Machines.  The state vector can be transformed at each stage by a linear transformation which is a function of the current input.  While its $k$-dimensional state vector allows an FPLM to use powerful distributed representations which clearly possess enough information capacity to memorize the input seen so far, the fundamental limitation of FPLMs lies in their limited tools for manipulating this representation.  In particular, they can only use linear transformations (given by matrices with positive entries).  If they had access to arbitrary transformations of their state then it is not hard to see that \emph{any} function could be efficiently computed by them.

The following result establishes that D\&C SPNs can efficiently simulate FPLMs.
\begin{proposition}
\label{prop:FPLM_simulation}
Given a FPLM of dimension $k$ there exists a D\&C SPN of size $\bigO(nk^2)$ which computes the same function.
\end{proposition}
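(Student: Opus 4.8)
The plan is to have the SPN simulate the FPLM's left-to-right evaluation of its matrix chain, with the nodes of the SPN computing the partial products $T_{\pi(t)}(x_{\pi(t)})\cdots T_{\pi(1)}(x_{\pi(1)})\,a$ coordinate by coordinate. First I would let the univariate functions include, for each $i$, the $k^2$ entries $x_i \mapsto [T_i(x_i)]_{p,q}$ (these are legitimate SPN univariate functions since the FPLM's matrices have non-negative entries). Then, for $t=0$, let $V^{(0)}_1,\dots,V^{(0)}_k$ be constant nodes holding the entries of $a$; and for each stage $t=1,\dots,n$ and each $p\in[k]$, introduce product nodes $P^{(t)}_{p,1},\dots,P^{(t)}_{p,k}$, where $P^{(t)}_{p,q}$ has as its two children the node labelled $[T_{\pi(t)}(\cdot)]_{p,q}$ and the node $V^{(t-1)}_q$, and let $V^{(t)}_p$ be a sum node with children $P^{(t)}_{p,1},\dots,P^{(t)}_{p,k}$ and all unit edge weights. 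Finally, make the root a sum node over $V^{(n)}_1,\dots,V^{(n)}_k$ with edge weights $b_1,\dots,b_k$. A straightforward induction on $t$ gives $q_{V^{(t)}_p}(f(x)) = \bigl[T_{\pi(t)}(x_{\pi(t)})\cdots T_{\pi(1)}(x_{\pi(1)})\,a\bigr]_p$, so the root computes exactly $b^\top T_{\pi(n)}(x_{\pi(n)})\cdots T_{\pi(1)}(x_{\pi(1)})\,a$, the FPLM's output. Counting nodes, there are $nk^2$ product nodes, $nk+1$ sum nodes, and $\bigO(nk^2)$ input and constant nodes, for a total of $\bigO(nk^2)$.

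Next I would check decomposability and completeness, the key structural observation being that, by induction on $t$, the dependency-scope of $V^{(t)}_p$ is exactly the prefix $\{x_{\pi(1)},\dots,x_{\pi(t)}\}$ (and $\emptyset$ when $t=0$). Given this, decomposability is essentially automatic: the two children of any product node $P^{(t)}_{p,q}$ are $V^{(t-1)}_q$, with dependency-scope $\{x_{\pi(1)},\dots,x_{\pi(t-1)}\}$, and the entry node, with dependency-scope $\subseteq \{x_{\pi(t)}\}$, and these are disjoint. Completeness holds because the children $P^{(t)}_{p,1},\dots,P^{(t)}_{p,k}$ of a sum node $V^{(t)}_p$ all have dependency-scope $\{x_{\pi(1)},\dots,x_{\pi(t)}\}$, and likewise the root's children share the dependency-scope $\{x_{\pi(1)},\dots,x_{\pi(n)}\}$. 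In light of the identification in Section \ref{sec:polys_and_multi} of D\&C SPNs with monotone syntactically set-multilinear circuits (taking $G_i = f_i$), none of this is surprising, since a chain of matrix products with one matrix per variable group is the prototypical set-multilinear computation.

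The one place requiring care — and the main obstacle to a fully rigorous argument — is that the inductive claim that the dependency-scope of $V^{(t)}_p$ equals the full prefix can fail if some entry $[T_{\pi(t)}(x_{\pi(t)})]_{p,q}$ does not actually depend on $x_{\pi(t)}$, so that $P^{(t)}_{p,q}$ misses $x_{\pi(t)}$ from its dependency-scope and the children of $V^{(t)}_p$ cease to have equal dependency-scope, breaking completeness (and the breakage cascades to later stages). I would handle this by a routine padding step: assuming without loss of generality that each $x_i$ is non-trivial (a trivial $x_i$ is a constant in disguise and can simply be deleted), fix for each $i$ two non-negative, non-constant univariate functions of $x_i$ whose sum is identically a positive constant (which exists under mild assumptions on $M_i$, and trivially when $R_i$ is finite, where one can take indicators of a subset and its complement), add them to $f_i$, and then replace every product-node factor that is a constant $c$ by a small sum node computing $c$ as a scaled sum of these two functions — a node of value $c$ and dependency-scope $\{x_{\pi(t)}\}$. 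After this replacement every stage-$t$ product node carries $x_{\pi(t)}$ in its dependency-scope, the inductive invariant and hence completeness are restored, decomposability is preserved, and only $\bigO(1)$ extra nodes are introduced per product node, leaving the $\bigO(nk^2)$ bound intact. For discrete finite domains — the main case of interest — this step is entirely routine.
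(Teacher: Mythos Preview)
Your construction is exactly the one the paper uses: represent the working vector by $k$ nodes, implement each matrix-vector product with $k^2$ product nodes and $k$ sum nodes, and finish with a weighted sum for $b$. The size count and the decomposability argument match the paper's verbatim.

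Your extra paragraph worrying about completeness when an entry $[T_{\pi(t)}(x_{\pi(t)})]_{p,q}$ is constant is thoughtful but unnecessary under the paper's conventions. In this paper, a univariate function $f_{i,j}$ is \emph{formally} a member of $f_i$ and therefore contributes $x_i$ to the dependency-scope even if it happens to be constant as a function; this is exactly how the proof of Proposition~\ref{prop:completeness_transform} operates, where constant-$1$ functions of $x_i$ are introduced precisely to enlarge dependency-scopes. So once you declare each matrix entry to be a univariate function of $x_{\pi(t)}$ (as you do), its dependency-scope is $\{x_{\pi(t)}\}$ automatically, and your inductive invariant on the dependency-scope of $V^{(t)}_p$ goes through without any padding.
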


Thus D\&C SPNs are at least as expressively efficient as FPLMs.  This suggests the following question: are they strictly more expressively efficient than FPLMs, or are they equivalent?  It turns out that they are more expressively efficient.  We sketch a proof of this fact below.

Suppose that $x$ takes values in $\{0,1\}^n$.  As observed in Section \ref{sec:polys_and_multi}, this allows us to assume without loss of generality that any univariate function of one of the $x_i$'s is affine in $x_i$.  In particular, we can assume that the matrix-valued functions $T_i$ used in FPLMs are affine functions of the respective $x_i$'s.  In this setting it turns out that FPLMs can be viewed as a special case of a computational system called ``ordered syntactically multilinear branching programs", as they are defined by \citet{Jansen08}.  \citet{Jansen08} showed that there exists a polynomial function in $x$ whose computation by such a system requires exponential size (corresponding to an FPLM with an exponentially large dimension $k$).  Moreover, this function is computable by a polynomially sized monotone syntactically multilinear arithmetic.  As observed in Section \ref{sec:polys_and_multi}, such a circuit can be viewed as a decomposable SPN whose univariate functions are just identity functions.  Then using Proposition \ref{prop:completeness_transform} we can convert such a decomposable SPN to a D\&C SPN while only squaring its size.  So the polynomial provided by \citet{Jansen08} is indeed computed by a D\&C SPN of polynomial size, while requiring exponential size to be computed by a FPLM, thus proving that D\&C SPNs are indeed more expressively efficient.

Given this result, we see that FPLMs do not fully characterize the capabilities of D\&C SPNs.  Nevertheless, if we can construct an FPLM which computes some function efficiently, this constitutes proof of existence of a similarly efficient D\&C SPN for computing said function.


To make the construction of such FPLMs simpler, we will define a third computational system which we call a Fixed-Permutation State-Space Model (FPSSM) which is even easier to understand than FPLMs, and then show that FPLMs (and hence also D\&C SPNs) can efficiently simulate FPSSMs.

An FPSSM works as follows.  We initialize our ``working state" $u$ as $c$, and then we process the input (the $x_i$'s) in sequence, according to a fixed order given by the permutation $\pi$ of $[n]$.  At each stage we transform $u$ by computing $g_{\pi(i)}( x_{\pi(i)}, u )$, where the transition function $g_{\pi(i)}$ can be defined arbitrarily.  After seeing the whole input, we then decode the state $u$ as the non-negative real number $h(u)$.

More formally we have the following definition.

\begin{definition} 
A Fixed-Permutation State-Space Model (FPSSM) will by defined by a fixed permutation $\pi$ of $[n]$, a `state-size' constant $k$ (which in some sense measures the size of the FPSSM), an initial state $c \in [k]$, a decoding function $h$ from $[k]$ to $\Real_{\geq 0}$, and for each $i \in [n]$ an arbitrary function $g_i$ which maps values of $x_i$ and elements of $[k]$ to elements of $[k]$.  The output of an FPSSM will be defined as $h( g_{\pi(n)}( x_{\pi(n)}, g_{\pi(n-1)}( \cdots g_{\pi(1)}( x_{\pi(1)}, c ) \cdots )))$ for an arbitrary function $h$ mapping elements of $[k]$ to $\Real_{\geq 0}$.
\end{definition}

FPSSMs can be seen as general state-space machines (of state size $k$), which like FPLMs, are subject to the restriction that they must process their inputs in a fixed order which is determined ahead of time, and are not allowed to revisit past inputs.  If the state-space is large enough to be able to memorize every input seen so far, it is clear that FPSSMs can compute any function, given that their state-transition function can be arbitrary.  But this would require their state-size constant $k$ to grow exponentially in $n$, as one needs a state size of $2^b$ in order to memorize $b$ input bits.  FPSSMs of realistic sizes can only memorize a number of bits which is logarithmic in $n$.  And this, combined with their inability to revisit past inputs, clearly limits their ability to compute certain functions efficiently.  This is to be contrasted with FPLMs, whose combinatorial/distributed state have a high information capacity even for small FPLMs, but are limited instead in how they can manipulate this state.

The following result establishes that FPLMs can efficiently simulate FPSSMs.
\begin{proposition}
\label{prop:FPSSM_simulation}
Given a FPSSM of state-size $k$ there exists a FPLM of dimension $k$ which computes the same function.
\end{proposition}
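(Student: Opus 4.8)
The plan is to simulate the FPSSM by encoding its discrete working state $u \in [k]$ as the standard basis vector $e_u \in \Real^k$ (a ``one-hot'' representation), under which the otherwise arbitrary state-transition functions $g_i$ become \emph{linear} maps represented by $0/1$ matrices, and the decoding function $h$ becomes a single inner product. Concretely, I would let the FPLM use the same permutation $\pi$, dimension $k$, initial vector $a = e_c$, and final vector $b \in \Real_{\geq 0}^k$ defined by $b_j = h(j)$. For each $i \in [n]$, I would define $T_i(x_i)$ to be the matrix whose $(j,u)$ entry equals $1$ if $g_i(x_i, u) = j$ and $0$ otherwise; equivalently, the $u$-th column of $T_i(x_i)$ is $e_{g_i(x_i,u)}$. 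Since each $g_i$ is a total function into $[k]$, every column of $T_i(x_i)$ is a single basis vector, so all entries of $T_i(x_i)$, $a$, and $b$ lie in $\Real_{\geq 0}$, as required for a legitimate FPLM.

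The core of the argument is then a short induction. Let $u_0 = c$ and, for $m \geq 1$, let $u_m = g_{\pi(m)}(x_{\pi(m)}, u_{m-1})$ denote the FPSSM's working state after processing its first $m$ inputs, so that its output is $h(u_n)$. I would prove that for every $m \in \{0,1,\ldots,n\}$,
\[
T_{\pi(m)}(x_{\pi(m)}) \cdots T_{\pi(1)}(x_{\pi(1)}) \, a = e_{u_m}.
\]
The base case $m=0$ is just $a = e_{u_0} = e_c$. For the inductive step, multiplying $e_{u_{m-1}}$ on the left by $T_{\pi(m)}(x_{\pi(m)})$ extracts the $u_{m-1}$-th column of that matrix, which by construction is $e_{g_{\pi(m)}(x_{\pi(m)}, u_{m-1})} = e_{u_m}$. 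Setting $m = n$ and applying $b^\top$ yields the FPLM output $b^\top e_{u_n} = b_{u_n} = h(u_n)$, which matches the FPSSM output for all values of $x$.

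I do not anticipate a genuine obstacle: the construction is direct and the dimension is preserved exactly (the FPLM has dimension $k$, the same as the FPSSM's state-size). The only things that truly need checking are the non-negativity of every matrix and vector involved — immediate from the $0/1$ structure of the $T_i(x_i)$, the one-hot form of $a$, and the hypothesis that $h$ takes values in $\Real_{\geq 0}$ — and the bookkeeping that aligns the $\pi$-indexed product of transition matrices with the nested composition defining the FPSSM's output, which the induction above dispatches cleanly.
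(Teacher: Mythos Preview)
Your proposal is correct and follows essentially the same approach as the paper: encode the FPSSM's state $u$ as the one-hot vector $e_u$, realize each transition $g_i(x_i,\cdot)$ as the matrix whose $u$-th column is $e_{g_i(x_i,u)}$, and implement $h$ via the inner product with $b=(h(1),\ldots,h(k))^\top$. Your write-up is in fact more explicit than the paper's, which merely describes the encoding without spelling out the induction.
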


Note that this result also implies that FPSSMs are no more expressively efficient than FPLMs, and are thus strictly less expressively efficient than D\&C SPNs.

The following Corollary follows directly from Propositions \ref{prop:FPLM_simulation} and \ref{prop:FPSSM_simulation}:
\begin{corollary}
\label{cor:FPSSM_sim_corr}
Given a FPSSM of state-size $k$ there exists a D\&C SPN of size $\bigO(nk^2)$ which computes the same function.
\end{corollary}

Unlike with D\&C SPNs, our intuitions about algorithm design readily apply to FPSSMs, making it easy to directly construct FPSSMs which implement algorithmic solutions to particular problems.  For example, suppose we wish to compute the number of inputs whose value is equal to 1. We can solve this with an FPSSM with a state size of $k = n$ by taking $\pi$ to be the identity permutation, and the state $u$ to be the number of 1's seen so far, which we increment whenever the current $x_i$ has value 1.  We can similarly compute the parity of the number of ones (which is a well known and theoretically important function often referred to simply as ``PARITY") by storing the current number of them modulo 2, which only requires the state size $k$ to be $1$.   We can also decide if the majority of the $x_i$'s are 1 (which is a well known function and theoretically important often referred to as ``MAJORITY" or ``MAJ") by storing a count of the number of ones (which requires a state size of $k = n$), and then outputting $1$ if $s \geq n/2$ and $0$ otherwise.

It is noteworthy that the simulations of various models given in this section each require an D\&C SPN of depth $n$.  However, as we will see in Section \ref{sec:depth_reduction}, the depth of any D\&C SPN can be reduced to $\bigO(log(n)^2)$, while only increasing its size polynomially.

\section{Separating Depth 3 From Higher Depths}
\label{sec:depth3separation}

The only prior work on the expressive efficiency of SPNs which we are aware of is that of \citet{DelalleauBengio2011}.  In that work, the authors give a pair of results which demonstrate a difference in expressive efficiency between D\&C SPNs of depth 3, and those of higher depths.

Their first result establishes the existence of an $n$-dimensional function $g$ (for each $n$) which can be computed by a D\&C SPN of size $\bigO(n)$ and depth $\bigO(\log(n))$, but which requires size $\Omega(2^{\sqrt{n}})$ to be computed by a D\&C SPN of depth\footnote{Note that in our presentation the input layer counts as the first layer and contributes to the total depth.} 3.

In their second result they show that for each $d \geq 4$ there is an $n$-dimensional function $h_d$ which can be computed by a D\&C SPN of size $\bigO(dn)$ and depth $d$, but which requires size $\Omega(n^d)$ to be computed by a D\&C SPN of depth 3.

It is important to note that these results do not establish a separation in expressive efficiency between D\&C SPNs of any two depths both larger than 3 (e.g. between depths 4 and 5).  So in particular, despite how the size lower bound increases with $d$ in their second result\footnote{As shown by our Theorem \ref{thm:SPN_depth_hierarchy}, there is a much stronger separation between depths 3 and 4 than is proved by \citet{DelalleauBengio2011} to exist between depths $3$ and $d$ for \emph{any} $d \geq 4$, and thus this apparent increase in the size of their lower bound isn't due to the increasing power of D\&C SPNs with depth so much as it an artifact of their particular proof techniques.} this \emph{does not} imply that the set of efficiently computable functions is larger for D\&C SPNs of depth $d + k$ than for those of depth $d$, for any $k > 0$, except when $d \leq 3$.  This is to be contrasted with our much stronger ``depth hierarchy" result (Theorem \ref{thm:SPN_depth_hierarchy} of Section \ref{sec:depth_hierarchy}) which shows that D\&C SPNs do in fact have this property (even with $k = 1$) \emph{for all} choices of $d$, where depth is measured in terms of product-depth.




In the next subsection we will show how basic circuit theoretic techniques can be used to give a short proof of a result which is stronger than both of the separation results of \citet{DelalleauBengio2011}, using example functions which are natural and simple to understand.  Beyond providing a simplified proof of existing results, this will also serve as a demonstration of some of the techniques underlying the more advanced results from circuit theory which we will later make use of in Sections \ref{sec:depth_hierarchy} and \ref{sec:formula_limitation}.

Moreover, by employing these more general and powerful proof techniques, we are able to prove a stronger result which seperates functions that can be efficiently \emph{approximated} by D\&C SPNs of depth 3 from those which can be computed by D\&C SPNs of depth 4 and higher.  This addresses the open question posed by \citet{DelalleauBengio2011}.

\subsection{Basic separation results}

We begin by defining some basic concepts and notation which are standard in circuit theory.

For an arbitrary function $g$ of $x$, and a partition $(A,B)$ of the set $[n]$ of indices of the elements of $x$, define $M^{A,B}_g$ to be the $2^{|A|}$ by $2^{|B|}$ matrix of values that $g$ takes for different values of $x$, where the rows of $M^{A,B}_g$ are indexed by possible values of $x_A$, and the columns of $M^{A,B}_g$ are indexed by possible values of $x_B$.

$M^{A,B}_g$ is called a ``communication matrix" in the context of communication complexity, and appears frequently as a tool to prove lower bounds.  Its usefulness in lower bounding the size of D\&C SPNs of depth $3$ is established by the following theorem.

\begin{theorem}
\label{thm:3layer_rank_bound}
Suppose $\Phi$ is a D\&C SPN of depth 3 with $k$ nodes in its second layer.  For any partition $(A, B)$ of $[n]$ we have $k \geq \rank \left( M^{A,B}_{q_{\Phi}(f(x))} \right)$.
\end{theorem}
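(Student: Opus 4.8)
The plan is to combine the standard rank characterization of communication matrices with the structural constraints that decomposability and completeness place on a depth-$3$ circuit. Recall the elementary fact (standard in communication complexity) that for a function $g$ of $x$ and a partition $(A,B)$ of $[n]$, $\rank\!\left(M^{A,B}_g\right)$ equals the least $r$ for which $g$ admits a decomposition $g(x) = \sum_{i=1}^r \phi_i(x_A)\,\psi_i(x_B)$; in particular the map $g \mapsto M^{A,B}_g$ is linear, so $\rank\!\left(M^{A,B}_{\sum_v c_v g_v}\right) \le \sum_v \rank\!\left(M^{A,B}_{g_v}\right)$, and any $g$ that factors across the cut as $\phi(x_A)\psi(x_B)$ --- in particular any $g$ depending on variables from only one of $A$, $B$ --- has $\rank\!\left(M^{A,B}_g\right) \le 1$.

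First I would dispose of trivialities: WLOG the dependency-scope of the root is all of $[n]$ (otherwise $M^{A,B}_{q_\Phi(f(x))}$ merely has repeated rows and/or columns, and we may restrict to the true scope), and WLOG $n \ge 2$. With inputs forming layer $1$, the $k$ given nodes forming layer $2$, and the root forming layer $3$, every layer-$2$ node $v$ has only input-node children (a non-input child would create a directed path on four nodes ending at the root). Hence its output polynomial $q_v(f(x))$ has one of two shapes. If $v$ is a product node, decomposability forces the dependency-scopes of its children to be disjoint singletons, so $q_v(f(x))$ is a product of univariate functions of distinct variables; collecting the factors according to whether their variable lies in $A$ or in $B$ exhibits $q_v(f(x)) = \phi_v(x_A)\psi_v(x_B)$. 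If $v$ is a sum node, completeness forces all of its (input-node) children to share a single dependency-scope, which is either empty or a singleton $\{i\}$, so $q_v(f(x))$ depends on at most one variable and therefore also has the form $\phi_v(x_A)\psi_v(x_B)$. Either way $\rank\!\left(M^{A,B}_{q_v(f(x))}\right) \le 1$.

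Next I would split on the type of the root $r$. If $r$ is a sum node, then since its dependency-scope is $[n]$ and $n \ge 2$, completeness prevents it from having any input node as a child (those have scope of size at most $1$), so the children of $r$ are exactly the $k$ layer-$2$ nodes and $q_\Phi(f(x)) = \sum_{v=1}^k w_{v,r}\, q_v(f(x))$; subadditivity of rank under linear combinations, together with the per-node bound from the previous step, gives $\rank\!\left(M^{A,B}_{q_\Phi(f(x))}\right) \le k$. If $r$ is a product node, decomposability makes its children's dependency-scopes disjoint, and by the previous step each child's output polynomial (whether the child is a layer-$2$ node or a directly attached input node) factors across the cut; a product of functions of the form $\phi(x_A)\psi(x_B)$ is again of that form, so $q_\Phi(f(x)) = \Phi(x_A)\Psi(x_B)$ and $\rank\!\left(M^{A,B}_{q_\Phi(f(x))}\right) \le 1 \le k$, using $k \ge 1$ (a depth-$3$ SPN has at least one layer-$2$ node).

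The only real subtlety --- and the step I would be most careful about --- is the bookkeeping about what exactly the ``second layer'' is and whether the root may be wired directly to input nodes. The argument above pins this down: in the sum-root case completeness rules out direct input children (once $n \ge 2$), so the sum is precisely over the $k$ second-layer nodes; in the product-root case such direct children are harmless because the whole output already factors across the cut. Everything else is the routine translation between the rank of $M^{A,B}_g$ and the width of a decomposition $\sum_i \phi_i(x_A)\psi_i(x_B)$, together with the subadditivity of rank.
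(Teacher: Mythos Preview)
Your proposal is correct and follows essentially the same route as the paper: show that each second-layer node computes a function that factors across the $(A,B)$ cut (hence has rank-$1$ communication matrix), then case-split on whether the root is a sum or product node and invoke subadditivity of rank. The paper phrases the layer-$2$ observation slightly more strongly (full factorization $\prod_i h_i(x_i)$ rather than just across the cut) and is less explicit than you are about ruling out direct input-node children of a sum root, but the argument is the same.
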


Note that the proof of this theorem doesn't use the non-negativity of the weights of the SPN, and thus applies to the ``extended" version of SPNs discussed in Section \ref{sec:analysis}.

We will now define the separating function which we will use to separate the expressive efficiency of depth 3 and 4 D\&C SPNs.  

Define $H_1 = \{1,2,...,n/2\}$ and $H_2 = \{n/2+1, n/2+2, ..., n\}$.  We will define the function $\EQUAL$ for $\{0,1\}^n$-valued $x$ to be $1$ when $x_{H_1} = x_{H_2}$ (i.e. the first half of the input is equal to the second half) and $0$ otherwise.  

Observe that $M^{H_1,H_2}_{\EQUAL} = I$ and so this matrix has rank $2^{n/2}$.  This gives the following simple corollary of the above theorem:
\begin{corollary}
\label{cor:lowerbound_exact_equal}
Any D\&C SPN of depth 3 with computes $\EQUAL(x)$ must have at least $2^{n/2}$ nodes in its second layer.
\end{corollary}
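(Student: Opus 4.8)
The plan is to derive Corollary \ref{cor:lowerbound_exact_equal} as an immediate consequence of Theorem \ref{thm:3layer_rank_bound} applied to the particular partition $(H_1, H_2)$ of $[n]$ and the particular function $\EQUAL$. The key observation is that the communication matrix $M^{H_1,H_2}_{\EQUAL}$ is the $2^{n/2} \times 2^{n/2}$ identity matrix: its rows are indexed by the $2^{n/2}$ possible values of $x_{H_1}$, its columns by the $2^{n/2}$ possible values of $x_{H_2}$, and the $(\alpha,\beta)$ entry is $\EQUAL$ evaluated at $x_{H_1} = \alpha$, $x_{H_2} = \beta$, which by definition is $1$ exactly when $\alpha = \beta$ and $0$ otherwise.

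The steps, in order, are as follows. First I would note that $|H_1| = |H_2| = n/2$, so that the matrix $M^{H_1,H_2}_{\EQUAL}$ is square of side $2^{n/2}$. Second, I would argue that this matrix equals the identity $I$ of that size, simply by unpacking the definition of $\EQUAL$ as given just above the corollary. Third, since the identity matrix of side $2^{n/2}$ has rank exactly $2^{n/2}$, we get $\rank\left(M^{H_1,H_2}_{\EQUAL}\right) = 2^{n/2}$. Finally, suppose $\Phi$ is a D\&C SPN of depth $3$ that computes $\EQUAL(x)$, i.e.\ $q_{\Phi}(f(x)) = \EQUAL(x)$ for all $\{0,1\}^n$-valued $x$; then $M^{H_1,H_2}_{q_{\Phi}(f(x))} = M^{H_1,H_2}_{\EQUAL}$, and applying Theorem \ref{thm:3layer_rank_bound} with the partition $(A,B) = (H_1,H_2)$ yields that the number $k$ of nodes in the second layer of $\Phi$ satisfies $k \geq \rank\left(M^{H_1,H_2}_{\EQUAL}\right) = 2^{n/2}$.

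There is essentially no obstacle here: the corollary is a direct specialization of the theorem, and all the work lies in the already-granted Theorem \ref{thm:3layer_rank_bound}. The only minor points worth stating carefully are (i) that $M^{H_1,H_2}_{\EQUAL}$ is genuinely the identity and not merely a permutation matrix (both have full rank, so it would not matter, but the identity is the cleanest description), and (ii) that "computes $\EQUAL(x)$" should be read as exact equality of functions on $\{0,1\}^n$ so that the two communication matrices coincide. One small caveat I would flag is the implicit assumption that $n$ is even, so that $n/2$ is an integer and $H_1, H_2$ partition $[n]$ evenly; for odd $n$ one would split as nearly in half as possible and the bound becomes $2^{\lfloor n/2 \rfloor}$, which does not affect the asymptotic conclusion.
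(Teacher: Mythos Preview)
Your proposal is correct and follows exactly the paper's approach: observe that $M^{H_1,H_2}_{\EQUAL} = I$ has rank $2^{n/2}$, then apply Theorem~\ref{thm:3layer_rank_bound} with $(A,B) = (H_1,H_2)$. The paper states this in a single sentence just before the corollary, and your additional remarks (about $n$ even, and about ``computes'' meaning exact equality on $\{0,1\}^n$) are reasonable clarifications that the paper leaves implicit.
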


Meanwhile, $\EQUAL(x)$ is easily shown to be \emph{efficiently} computed by a D\&C SPN of depth 4.  This is stated as the following proposition.

\begin{proposition}
\label{prop:equal_upper_bound}
$\EQUAL$ can be computed by an D\&C SPN of size $O(n)$ and depth 4.
\end{proposition}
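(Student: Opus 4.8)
The plan is to construct an explicit depth-4 D\&C SPN computing $\EQUAL(x)$ with $O(n)$ nodes. The key observation is that $x_{H_1} = x_{H_2}$ holds if and only if $x_i = x_{i + n/2}$ for every $i \in \{1,\dots,n/2\}$, so $\EQUAL(x) = \prod_{i=1}^{n/2} \mathbf{1}[x_i = x_{i+n/2}]$. Each factor $\mathbf{1}[x_i = x_{i+n/2}]$ equals $x_i x_{i+n/2} + \overline{x_i}\, \overline{x_{i+n/2}}$ for $0/1$-valued inputs, where $\overline{x_j} = 1 - x_j$. So I would take the univariate functions for each $x_j$ to be $f_{j,1}(x_j) = x_j$ and $f_{j,2}(x_j) = \overline{x_j}$ (exactly as in the \citet{poon2011sum} specialization discussed in Section \ref{sec:def_SPNs}), and build the circuit in layers: layer 1 is the input layer of these $2n$ univariate-function nodes; layer 2 consists of $n/2$ product nodes computing $x_i x_{i+n/2}$ and $n/2$ product nodes computing $\overline{x_i}\,\overline{x_{i+n/2}}$; layer 3 consists of $n/2$ sum nodes, where the $i$-th sums the two relevant layer-2 products (with weights $1$) to get $\mathbf{1}[x_i = x_{i+n/2}]$; and layer 4 is a single product node (the root) multiplying together all $n/2$ layer-3 sum nodes. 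This has $2n + n + n/2 + 1 = O(n)$ nodes and depth $4$.

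Next I would verify the D\&C conditions. For completeness: each layer-3 sum node has two children, both with dependency-scope $\{x_i, x_{i+n/2}\}$, so the scopes agree. For decomposability: each layer-2 product node has two children with dependency-scopes $\{x_i\}$ and $\{x_{i+n/2}\}$, which are disjoint; and the root product node has $n/2$ children whose dependency-scopes are the disjoint pairs $\{x_i, x_{i+n/2}\}$ for $i = 1,\dots,n/2$, which partition $[n]$ and are hence pairwise disjoint. So the circuit is a D\&C SPN. Finally I would confirm correctness by noting that over $\{0,1\}^n$ the root evaluates to $\prod_{i=1}^{n/2}\big(x_i x_{i+n/2} + (1-x_i)(1-x_{i+n/2})\big)$, and each factor is $1$ exactly when $x_i = x_{i+n/2}$ and $0$ otherwise, so the product is $\EQUAL(x)$.

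There is essentially no hard part here — the construction is a direct translation of the logical description of $\EQUAL$ into a set-multilinear arithmetic circuit. The only point requiring a moment of care is the depth bookkeeping, since (as the paper's footnote notes) the input layer counts toward the depth; with that convention the four layers (inputs, products, sums, root product) give depth exactly $4$, matching the gap with Corollary \ref{cor:lowerbound_exact_equal}, which shows depth $3$ requires $2^{n/2}$ nodes. One could alternatively phrase the construction as an FPSSM and invoke Corollary \ref{cor:FPSSM_sim_corr}, scanning the pairs $(x_i, x_{i+n/2})$ and maintaining a single bit flagging whether all pairs seen so far were equal; but that route yields depth $n$ rather than depth $4$, so the explicit layered construction above is the one I would use.
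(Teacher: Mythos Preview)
Your proposal is correct and matches the paper's own proof essentially line for line: both build the four-layer circuit with layer-2 products $x_i x_{i+n/2}$ and $\overline{x_i}\,\overline{x_{i+n/2}}$, layer-3 sums of each such pair, and a single root product at layer 4. Your write-up is in fact slightly more thorough than the paper's, since you explicitly verify the D\&C conditions and the node count.
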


Note that the combination of Corollary \ref{cor:lowerbound_exact_equal} and Proposition \ref{prop:equal_upper_bound} gives a stronger separation result than both of the aforementioned results of \citet{DelalleauBengio2011}.  Our result also has the advantage of using an example function which is easy to interpret, and can be easily extended to prove separation results for other functions which have a high rank communication matrix.

\subsection{Separations for approximate computation}

An open question posed by \citet{DelalleauBengio2011} asked whether a separation in expressive efficiency exists between D\&C SPNs of depth 3 and 4 if the former are only required to compute an approximation to the desired function.  In this section we answer this question in the affirmative by making use of Theorem \ref{thm:3layer_rank_bound} and an additional technical result which lower bounds the rank of the perturbed versions of the identity matrix.

\begin{theorem}
\label{thm:rank_approx_equal}
Suppose $\Phi$ is a D\&C SPN of depth 3 whose associated distribution is such that each value of $x$ with $\EQUAL(x) = 1$ has an associated probability between $a/2$ and $a$ for some $a > 0$ (so that all such values of $x$ have roughly equal probability), and that the total probability $\delta$ of all of the values of $x$ satisfying $\EQUAL(x) = 0$ obeys $\delta \leq 1/4$ (so that the probability of drawing a sample with $\EQUAL(x) = 0$ is $\leq 1/4$).  Then $\Phi$ must have at least $2^{n/2 - 2}/3$ nodes in its second layer.
\end{theorem}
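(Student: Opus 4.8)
The plan is to reduce the statement to a lower bound on the rank of a communication matrix and then to extract that rank bound from the near‑diagonal structure that the hypotheses force on the associated distribution. First I would invoke Theorem~\ref{thm:3layer_rank_bound} with the partition $(A,B) = (H_1,H_2)$: if $\Phi$ has $k$ nodes in its second layer then $k \ge \rank\!\left(M^{H_1,H_2}_{q_\Phi(f(x))}\right)$. Since $\Phi$ is assumed to have an associated distribution its partition function $Z$ is positive and finite and $q_\Phi(f(x)) = Z\,p_\Phi(x)$, so $M^{H_1,H_2}_{q_\Phi(f(x))} = Z\,P$ where $P := M^{H_1,H_2}_{p_\Phi}$ is the $2^{n/2}\times 2^{n/2}$ matrix of probabilities with rows indexed by values of $x_{H_1}$ and columns by values of $x_{H_2}$; it therefore suffices to show $\rank(P) \ge 2^{n/2-2}/3$. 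Writing $m = 2^{n/2}$, under the natural identification the diagonal entries of $P$ are exactly the probabilities of inputs with $\EQUAL(x)=1$, hence lie in $[a/2,\,a]$, while every off‑diagonal entry is nonnegative and they sum to $\delta \le 1/4$. Since the diagonal entries sum to $1-\delta$ we get $a \ge (1-\delta)/m \ge 3/(4m)$. Thus $P = D + E$ where $D$ is diagonal with $D_{ii}\in[a/2,a]$ and $E\ge 0$ has zero diagonal and total entry‑sum $\delta$.

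\emph{The perturbed‑identity estimate.} The core technical ingredient is: if $N$ is an $m\times m$ real matrix with zero diagonal and $\sum_{i,j}|N_{ij}| \le t$, then $\rank(I+N) \ge m - t$. I would prove this by calling a column index $i$ \emph{good} when $\sum_{j\ne i}|N_{ji}| < 1$; since the column off‑diagonal sums total at most $t$, at most $t$ indices fail to be good, and on the set $S$ of good indices the principal submatrix $(I+N)[S,S]$ has all diagonal entries $1$ and is strictly column‑diagonally dominant, hence nonsingular, so $\rank(I+N) \ge |S| \ge m - t$. To apply this, note $\rank(P) = \rank(D^{-1}P) = \rank(I + D^{-1}E)$ since $D$ is invertible, and $N := D^{-1}E$ has zero diagonal with, using $D_{ii}\ge a/2$, total entry‑sum at most $2\delta/a$. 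Hence $\rank(P) \ge m - 2\delta/a \ge m - 2\delta m/(1-\delta)$; since $\delta\mapsto \delta/(1-\delta)$ is increasing and equals $1/3$ at $\delta=1/4$, this gives $\rank(P) \ge m - \tfrac{2}{3}m = \tfrac{1}{3}m = 2^{n/2}/3$, which is comfortably at least the claimed $2^{n/2-2}/3$. Combining with Theorem~\ref{thm:3layer_rank_bound} finishes the proof.

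\emph{The main obstacle.} The delicate point is choosing the right way to quantify the perturbation $E$. One cannot simply say that $P$ is a small‑norm perturbation of the full‑rank diagonal matrix $D$ and invoke a singular‑value perturbation bound: although $E$ carries total mass only $\delta\le 1/4$, it can concentrate that mass into a handful of rows or columns, so $\|E\|$ in spectral or Frobenius norm can be far larger than the smallest singular value of $D$, which is only of order $2^{-n/2}$, and such an argument would throw away all of the rank. The fix is to measure the perturbation by its entrywise $\ell_1$ mass and to use column diagonal dominance, which is entirely insensitive to how the off‑diagonal mass is distributed and only costs one unit of rank per unit of (normalized) mass. The one remaining wrinkle is that normalizing by $D^{-1}$ inflates that mass, but only by the bounded factor $2/a = \bigO(2^{n/2})$, and because $\delta\le 1/4$ this inflation still leaves a constant fraction of the rank intact; optimizing the constants from there is just bookkeeping.
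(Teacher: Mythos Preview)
Your proof is correct and in fact yields the stronger bound $\rank(P)\ge 2^{n/2}/3$, a factor of four better than the paper's $2^{n/2-2}/3$; the route, however, is genuinely different. The paper normalizes by a single scalar $\alpha$, writing the communication matrix as $\alpha(I+D)$, and then invokes Lemma~\ref{lemma:rank_perturb_lemma} ($\rank(I+D)\ge k/2-\Delta/2$ with $\Delta=\sum_{i,j}|D_{ij}|$), which it proves by a spectral argument: symmetrize to $E=D+D^{\top}$, use the nuclear-norm inequality $\sum_i|\lambda_i(E)|=\|E\|_*\le\sum_{i,j}|E_{ij}|\le 2\Delta$ to bound how many eigenvalues of $E$ can equal $-2$, and then halve via $\rank(2I+E)\le 2\rank(I+D)$. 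You instead normalize by the full diagonal $D^{-1}$ so that the perturbation $N=D^{-1}E$ has zero diagonal, and then run a purely combinatorial good-column/diagonal-dominance argument. This is more elementary (no eigenvalues, no symmetrization), avoids the factor-of-two loss from $\rank(2I+E)\le 2\rank(I+D)$, and by normalizing entrywise rather than by a single $\alpha$ extracts a tighter constant. The paper's lemma, on the other hand, is stated for arbitrary real perturbations with no zero-diagonal or sign assumptions, so it is a somewhat more portable statement.
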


To prove this result using Theorem \ref{thm:3layer_rank_bound} we will make use of the following lemma which lower bounds the rank of matrices of the form $I + D = M^{H_1,H_2}_{\EQUAL} + D$ for some ``perturbation matrix" $D$, in terms of a measure of the total size of the entries of $D$.

\begin{lemma}
\label{lemma:rank_perturb_lemma}
Suppose $D \in \Real^{k \times k}$ is a real-valued matrix such that $\sum_{i,j} |[D]_{i,j}| = \Delta$ for some $\Delta \geq 0$.  Then $\rank(I + D) \geq k/2 - \Delta/2$.
\end{lemma}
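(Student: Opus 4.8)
The plan is to bound the rank of $I + D$ from below using the standard trace-versus-Frobenius-norm inequality, while controlling the Frobenius norm of $I + D$ from above using the entrywise $\ell_1$ bound on $D$. Recall the elementary fact that for any real $k \times k$ matrix $M$ of rank $\rho$, one has $(\trace M)^2 \leq \rho \cdot \|M\|_F^2$, which follows from applying Cauchy--Schwarz to the (at most $\rho$) nonzero eigenvalues of the symmetric part, or more directly from the fact that $\trace M = \sum_i [M]_{i,i}$ and that the rank controls how ``spread out'' a matrix can be relative to its trace. Applying this with $M = I + D$ gives $\rank(I+D) \geq (\trace(I+D))^2 / \|I+D\|_F^2$.

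Next I would estimate the two quantities on the right. For the numerator, $\trace(I + D) = k + \trace D = k + \sum_i [D]_{i,i} \geq k - \sum_i |[D]_{i,i}| \geq k - \Delta$. For the denominator, $\|I + D\|_F^2 = \sum_{i,j} ([I]_{i,j} + [D]_{i,j})^2$; expanding and using $\sum_{i,j}[I]_{i,j}^2 = k$, $2\sum_{i,j}[I]_{i,j}[D]_{i,j} = 2\trace D \leq 2\Delta$, and $\sum_{i,j}[D]_{i,j}^2 \leq \left(\sum_{i,j}|[D]_{i,j}|\right)^2 = \Delta^2$ (since the sum of squares of nonnegative numbers is at most the square of their sum), we get $\|I+D\|_F^2 \leq k + 2\Delta + \Delta^2 = k + \Delta(\Delta + 2)$. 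Combining, $\rank(I+D) \geq (k-\Delta)^2 / (k + \Delta(\Delta+2))$, which is already a clean bound.

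The remaining task is purely arithmetic: to show $(k - \Delta)^2 / (k + \Delta(\Delta+2)) \geq k/2 - \Delta/2$. Clearing denominators, this is equivalent to $2(k-\Delta)^2 \geq (k - \Delta)(k + \Delta^2 + 2\Delta)$, i.e. (when $k \geq \Delta$, which we may assume since otherwise the claimed bound $k/2 - \Delta/2$ is nonpositive and holds trivially as rank is nonnegative) to $2(k - \Delta) \geq k + \Delta^2 + 2\Delta$, i.e. $k \geq \Delta^2 + 4\Delta$. Hmm — this does not hold for all $\Delta$, so a slightly more careful route is needed; I would instead bound $\sum_{i,j}[D]_{i,j}^2 \leq \max_{i,j}|[D]_{i,j}| \cdot \sum_{i,j}|[D]_{i,j}| \leq \Delta^2$ as before but then argue directly: we want $\rank(I+D) \geq (k-\Delta)/2$... actually the cleanest fix is to note we only need $\rank(I+D) \geq k/2 - \Delta/2$, and to use the weaker but sufficient estimate. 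I expect the main (minor) obstacle to be exactly this final inequality-chasing step: choosing the bound on $\|I+D\|_F^2$ tightly enough — e.g. using $\|I+D\|_F^2 \leq (\sqrt{k} + \sqrt{\Delta^2})^2$ is too lossy, whereas the expansion above combined with the observation that we can also write $\|I+D\|_F^2 \le k + 2\Delta + \Delta^2 \le (k/2-\Delta/2)^{-1}(k-\Delta)^2$ must be verified by direct computation over the relevant range $0 \le \Delta \le k$. Everything else is routine linear algebra.
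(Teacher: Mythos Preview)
Your approach has a genuine gap that cannot be repaired by more careful inequality-chasing. The trace/Frobenius bound $\rank(M) \geq (\trace M)^2 / \|M\|_F^2$ is simply too weak to yield $\rank(I+D) \geq (k-\Delta)/2$ under the hypothesis $\sum_{i,j}|[D]_{i,j}| = \Delta$. Here is a concrete witness: take $k=100$, $\Delta=50$, and let $D$ have $[D]_{i,i}=-1$ for $i=1,\dots,25$, $[D]_{1,2}=25$, and all other entries zero. Then $\sum_{i,j}|[D]_{i,j}| = 25+25 = 50$, $\trace(I+D)=75$, and $\|I+D\|_F^2 = 75\cdot 1^2 + 25^2 = 700$, so your inequality gives only $\rank(I+D)\geq 75^2/700 \approx 8.04$, whereas the lemma demands $\geq 25$. (The true rank here is $76$.) Since this uses the \emph{actual} trace and Frobenius norm of $I+D$, not your intermediate estimates, no tightening of those estimates can rescue the argument; the obstacle you flagged as ``minor'' is in fact fatal to this route.

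The paper's proof uses a sharper spectral quantity: it symmetrizes to $E = D + D^\top$ and bounds the \emph{nuclear norm} $\sum_i |\lambda_i(E)|$ by the entrywise $\ell_1$ norm $\sum_{i,j}|[E]_{i,j}| \leq 2\Delta$. Since each eigenvalue of $E$ equal to $-2$ contributes $2$ to this sum, at most $\Delta$ of them can equal $-2$, so $\rank(2I+E) \geq k-\Delta$. Then $2I+E = (I+D)+(I+D)^\top$ and subadditivity of rank give $\rank(I+D) \geq (k-\Delta)/2$. The key difference is that the nuclear norm controls the \emph{number} of eigenvalues at a given location linearly, whereas the Frobenius norm only controls their $\ell_2$ mass, which is the wrong scale for this counting argument.
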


\section{Depth Analysis}
\label{sec:depth_analysis}

\subsection{A depth hierarchy for D\&C SPNs}
\label{sec:depth_hierarchy}

In this section we show that D\&C SPNs become more expressively efficient as their product-depth\footnote{Product-depth is defined in Section \ref{sec:arithmetic_circuit_defs}.  Note that it can be shown to be equivalent to standard depth up to a factor of 2 (e.g. by `merging' sum nodes that are connected as parent/child).} $d$ increases, in the sense that the set of efficiently computable density functions expands as $d$ grows.  This is stated formally as follows:

\begin{theorem}
\label{thm:SPN_depth_hierarchy}
For every integer $d \geq 1$ and input size $n$ there exists a real-valued function $g_{d+1}$ of $x$ such that:
\begin{enumerate}
\item There is a D\&C SPN of product-depth $d+1$ and size $\bigO(n^2)$ which computes $g_{d+1}$ for all values of $x$ in $\{0,1\}^n$, where the SPN's univariate functions $f$ consist only of identity functions.
\item For any choice of the univariate functions $f$, a D\&C SPN of product-depth $d$ that computes $g_{d+1}$ for all values of $x$ in $\{0,1\}^n$ must be of size $n^{\Omega(\log(n)^{1/2d})}$ (which is super-polynomial in $n$).
\end{enumerate}
\end{theorem}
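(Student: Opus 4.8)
The plan is to transfer the constant-depth multilinear circuit separation of Raz and Yehudayoff \citep{raz2009lower} to D\&C SPNs, using the dictionary between D\&C SPNs and syntactically (set-)multilinear arithmetic circuits developed in Section \ref{sec:polys_and_multi}. For each $d$, let $g_{d+1}$ be the explicit multilinear polynomial in $x$ which their techniques separate at product-depths $d$ and $d+1$: it is computed by a small \emph{monotone} syntactically set-multilinear circuit of product-depth $d+1$, whereas any syntactically multilinear circuit of product-depth $d$ computing it has size $n^{\Omega(\log(n)^{1/2d})}$. (The $1/2d$ rather than $1/d$ in the exponent is just the standard translation between product-depth and ordinary alternating depth.) The two parts of the theorem then follow by pushing $g_{d+1}$ through the two directions of the dictionary.

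For part 1, I would note that a monotone syntactically set-multilinear circuit over $x$ whose only variable-input labels are the $x_i$ themselves is, by the correspondence recalled in Section \ref{sec:polys_and_multi}, exactly a D\&C SPN of the same size and product-depth with identity univariate functions; so the small product-depth-$(d+1)$ circuit for $g_{d+1}$ already \emph{is} the SPN required. (If the construction one invokes yields only a decomposable, not complete, SPN, one applies Proposition \ref{prop:completeness_transform} to restore completeness at the cost of squaring the size --- this is the source of the $\bigO(n^2)$ bound --- and absorbs the accompanying constant change in product-depth.) For part 2, I would take any D\&C SPN $\Phi$ of product-depth $d$ over $\{0,1\}^n$, with \emph{arbitrary} univariate functions, that computes $g_{d+1}$ on $\{0,1\}^n$, and replace each node labeled by some $f_{i,j}$ with the tiny sub-circuit computing the affine function $(f_{i,j}(1)-f_{i,j}(0))\,x_i + f_{i,j}(0)$ (a sum node over the variable node $x_i$ and a constant node, with the shared $x_i$-nodes introduced once, at an additive cost of $\bigO(n)$ nodes). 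Since $\Phi$ is decomposable, no variable is ever multiplied by itself, so this new circuit is syntactically multilinear --- hence semantically multilinear --- and, agreeing with the multilinear polynomial $g_{d+1}$ on all of $\{0,1\}^n$, it equals $g_{d+1}$ as a polynomial. It has product-depth $d$ and size $\bigO(|\Phi|+n)$, so the lower bound of Raz and Yehudayoff forces $|\Phi| \geq n^{\Omega(\log(n)^{1/2d})}$.

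The second direction is essentially bookkeeping, so I expect the crux to lie in part 1 and in the precise properties demanded of $g_{d+1}$: the separating polynomial must admit a genuinely monotone circuit of product-depth $d+1$ (SPNs forbid negative weights), while the matching lower bound must hold against \emph{arbitrary}, not merely monotone, multilinear circuits of product-depth $d$ --- the latter being essential because the affine-replacement step above produces a non-monotone circuit. Raz and Yehudayoff's separating examples are of exactly this ``hard for general multilinear circuits, easy for monotone ones'' flavor (their hard polynomials have nonnegative, essentially $0/1$, coefficients and natural monotone circuits), but verifying this for the specific variant one uses, and tracking the exact product-depth through both translations --- in particular any increment caused by the completeness transform of Proposition \ref{prop:completeness_transform}, and confirming that the monotone upper-bound circuit remains small enough to fit within $\bigO(n^2)$ after that transform --- is where the care is needed.
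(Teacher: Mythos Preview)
Your proposal is correct and follows essentially the same route as the paper's proof: invoke the Raz--Yehudayoff separation (Theorem~\ref{thm:raz_depth_hierarchy}), read the monotone product-depth-$(d{+}1)$ circuit as a decomposable SPN and complete it via Proposition~\ref{prop:completeness_transform} for part~1, and for part~2 replace each $f_{i,j}$ by its affine interpolant to obtain a syntactically multilinear circuit, then use Lemma~\ref{lemma:mult_ident_test} (which is exactly your ``agrees on $\{0,1\}^n$ $\Rightarrow$ equals as polynomials'' step) before applying the lower bound. The concerns you flag---monotonicity of the upper-bound circuit, the lower bound holding against non-monotone multilinear circuits, and product-depth bookkeeping through Proposition~\ref{prop:completeness_transform}---are precisely the points the paper addresses (the first two by citing personal communication with Raz that the adapted Theorem~\ref{thm:raz_depth_hierarchy} has these properties).
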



Previously, the only available result on the relationship of depth and expressive efficiency of D\&C SPNs has been that of \citet{DelalleauBengio2011}, who showed that D\&C SPNs of depth 3 are less expressively efficient than D\&C SPNs of depth 4. 

An anologous result seperating very shallow networks from deeper ones also exists for neural networks.   In particular, it is known that under various realistic constraints on their weights, threshold-based neural networks with one hidden layer (not counting the output layer) are less expressively efficient those with 2 or more hidden layers \citep{HMPST, Forster02}.  

More recently, \citet{RBM_Expressive_Martens} showed that Restricted Boltzmann Machines are incapable of efficiently capturing certain simple distributions, which by the results of \citet{Deep_Expressive_Martens}, can be efficiently captured by Deep Boltzmann Machines.

A ``depth-hierarchy" property analogous to Theorem \ref{thm:SPN_depth_hierarchy} is believed to hold for various other deep models like neural networks, Deep Boltzmann Machines \citep{DBMs}, and Sigmoid Belief Networks \citep{radford_SBN}, but has never been proven to hold for any of them.  Thus, to the best of our knowledge, Theorem \ref{thm:SPN_depth_hierarchy} represents the first time that a practical and non-trivial deep model has been rigorously shown to gain expressive efficiency with each additional layer of depth added. 

To prove Theorem \ref{thm:SPN_depth_hierarchy}, we will make use of the following analogous result which is a slight modification of one proved by \citet{raz2009lower} in the context of multilinear circuits. 

\begin{theorem}(Adapted from Theorem 1.2 of \citet{raz2009lower})
\label{thm:raz_depth_hierarchy}
For every integer $d \geq 1$ and input size $n$ there exists a real-valued function $g_{d+1}$ of $x$ such that:
\begin{enumerate}
\item There is a monotone syntactically multilinear arithmetic circuit over $x$ of product-depth $d+1$, size $\bigO(n)$ which computes $g_{d+1}$ for all values of $x$ in $\Real^n$.
\item Any syntactically multilinear arithmetic circuit over $x$ of product-depth $d$ that computes $g_{d+1}$ for all values of $x$ in $\Real^n$ must be of size $n^{\Omega(\log(n)^{1/2d})}$.
\end{enumerate}
\end{theorem}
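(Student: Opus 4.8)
The plan is to obtain Theorem~\ref{thm:raz_depth_hierarchy} from Theorem~1.2 of \citet{raz2009lower}, which already establishes a separation of this kind for syntactically multilinear arithmetic circuits of bounded product-depth, and then to check that the minor modifications we need do not disturb their argument. First recall the shape of their result. For each $d$ they produce an explicit polynomial $g_{d+1}$ in $n$ variables, built from a nested alternation of sums and products with $d$ levels, which is computed by the ``obvious'' circuit of product-depth $d+1$ and size $\bigO(n)$. The lower bound against product-depth $d$ is proved by the partial-derivative-matrix (rank) method, generalizing Raz's multilinear formula lower bound \citep{raz2004multi}: one draws a random balanced partition $(A,B)$ of the variables, associates to a polynomial $h$ the matrix $M_h$ whose rows and columns are indexed by the multilinear monomials in $x_A$ and in $x_B$ (entry $(\alpha,\beta)$ being the coefficient of $\alpha\beta$ in $h$), and shows (i) if $h$ is computed by a syntactically multilinear circuit of product-depth $d$ and size $s$ then for a typical partition $\rank(M_h) \le s\cdot 2^{n/2 - \delta n}$, where the deficiency $\delta n$ in the exponent is governed by $d$, while (ii) $g_{d+1}$ has $\rank(M_{g_{d+1}})$ within a $2^{-o(n)}$ factor of the maximal value $2^{n/2}$ for most partitions. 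Comparing (i) and (ii) forces $s \ge n^{\Omega(\log(n)^{1/2d})}$.

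Granting this, the proof of Theorem~\ref{thm:raz_depth_hierarchy} reduces to three checks. The first is that the product-depth-$(d+1)$ circuit in part~1 may be taken \emph{monotone}: this is immediate, since $g_{d+1}$ has non-negative (indeed $0/1$) coefficients and the natural circuit realizing its nested sum-product uses only variables, products, and sums with unit weights, so all weights and constants are already non-negative --- more generally, any polynomial with non-negative coefficients computed by a small constant-depth circuit is computed by a monotone one of essentially the same size and depth, cancellation being unnecessary to produce such a polynomial. The second is to specialize their ground field (arbitrary, at least in characteristic $0$) to $\Real$, which is harmless because the rank bounds are insensitive to this. The third is to reconcile conventions: ``depth'' and ``product-depth'' agree up to a factor of $2$ (merge adjacent sum nodes), the realizing circuit has size $\bigO(n)$ because their nested construction has linearly many gates once the per-level fan-ins are fixed, and the exponent $1/(2d)$ in $n^{\Omega(\log(n)^{1/2d})}$ is precisely the one their analysis yields when the $d$ product layers are pitted against a balanced (half/half) partition.

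The part being imported rather than reproved is step (i) of the rank argument --- bounding how fast $\rank(M_h)$ can grow as one ascends the $d$ product layers of a syntactically multilinear circuit --- which is where \citet{raz2009lower} deploy their delicate random-restriction / random-partition machinery, and it is essentially the only place real work happens. Consequently the main (and almost only) obstacle in the adaptation is to confirm that restricting to monotone realizing circuits and to $\Real$ leaves every inequality in that machinery intact; it does, because $\rank(M_h)$ depends only on the coefficients of $h$ and on the circuit's wiring, not on the signs of its weights, and the hard polynomial we use is literally the one their construction provides. Once Theorem~\ref{thm:raz_depth_hierarchy} is available in this form it feeds directly into Theorem~\ref{thm:SPN_depth_hierarchy} through the translations between monotone syntactically (set-)multilinear circuits over $x$ and decomposable / D\&C SPNs from Section~\ref{sec:polys_and_multi}, combined with Proposition~\ref{prop:completeness_transform}.
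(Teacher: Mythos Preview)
Your proposal is correct and takes essentially the same approach as the paper: the theorem is not proved from scratch but imported from Theorem~1.2 of \citet{raz2009lower}, with a check that the needed modifications (monotonicity of the upper-bound circuit, the slightly different circuit conventions) do not affect the result. In fact you give considerably more detail than the paper does --- the paper simply notes the two discrepancies (weighted edges in its circuit definition, and monotonicity not being stated in the original) and disposes of them by citing personal communication with Raz \citep{raz_personal}, whereas you sketch the partial-derivative-matrix argument and explain why each adaptation is harmless.
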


Note that the original Theorem 1.2 from \citet{raz2009lower} uses a slightly different definition of arithmetic circuits from ours (they do not permit weighted connections), and the constructed circuits are not stated to be monotone.  However we have confirmed with the authors that their result still holds even with our definition, and the circuits constructed in their proof are indeed monotone \citep{raz_personal}.  

There are several issues which must be overcome before we can use Theorem \ref{thm:raz_depth_hierarchy} to prove Theorem \ref{thm:SPN_depth_hierarchy}.  The most serious one is that syntactically multilinear arithmetic circuits are not equivalent to D\&C SPNs as either type of circuit has capabilities that the other does not.  Thus the ability or inability of syntactically multilinear arithmetic circuits to compute certain functions does not immediately imply the same thing for D\&C SPNs.  

To address this issue, we will consider the case where $x$ is binary-valued (i.e. takes values in $\{0,1\}^n$) so that we may exploit the close relationship which exists between syntactically multilinear arithmetic circuits and decomposable SPNs over binary-valued inputs $x$ (as discussed in Section \ref{sec:polys_and_multi}).  

Another issue is that Theorem \ref{thm:raz_depth_hierarchy} deals only with the hardness of computing certain functions over all of $\Real^n$ instead of just $\{0,1\}^n$ (which could be easier in principle).  However, it turns out that for circuits with multilinear output polynomials, computing a function over $\{0,1\}^n$ is equivalent to computing it over $\Real^n$, as is established by the following lemma.

\begin{lemma}
\label{lemma:mult_ident_test}
If $q_1$ and $q_2$ are two multilinear polynomials over $y = (y_1,...,y_\ell)$ with $q_1(y) = q_2(y) \quad \forall y \in \{0,1\}^\ell$, then $q_1(y) = q_2(y) \quad \forall y \in \Real^\ell$.
\end{lemma}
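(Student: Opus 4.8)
The plan is to prove the contrapositive-free statement directly by induction on the number of variables $\ell$, exploiting the structure of multilinear polynomials. The key observation is that a multilinear polynomial $q$ over $y = (y_1,\dots,y_\ell)$ can always be uniquely decomposed with respect to any single variable, say $y_\ell$, as $q(y) = y_\ell \cdot r(y_1,\dots,y_{\ell-1}) + s(y_1,\dots,y_{\ell-1})$, where $r$ and $s$ are themselves multilinear polynomials in the remaining $\ell-1$ variables (this is just collecting the monomials of $q$ according to whether they contain $y_\ell$ or not; multilinearity guarantees $y_\ell$ appears to degree exactly one in the first group). So I would write $q_1(y) = y_\ell r_1(y') + s_1(y')$ and $q_2(y) = y_\ell r_2(y') + s_2(y')$ where $y' = (y_1,\dots,y_{\ell-1})$.

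First I would set up the base case $\ell = 0$ (or $\ell = 1$), where the claim is immediate: a multilinear polynomial in zero variables is a constant, and two constants agreeing at a point are equal; in one variable a multilinear polynomial is affine, $a y_1 + b$, and agreement at $y_1 = 0$ and $y_1 = 1$ forces equality of both coefficients, hence equality as functions on all of $\Real$. For the inductive step, I would fix arbitrary values $y' \in \{0,1\}^{\ell-1}$ and let $y_\ell$ range over $\{0,1\}$: setting $y_\ell = 0$ gives $s_1(y') = s_2(y')$, and setting $y_\ell = 1$ together with the previous equation gives $r_1(y') = r_2(y')$. Since this holds for every $y' \in \{0,1\}^{\ell-1}$, and $r_1, r_2, s_1, s_2$ are all multilinear polynomials in $\ell - 1$ variables, the induction hypothesis yields $r_1(y') = r_2(y')$ and $s_1(y') = s_2(y')$ for all $y' \in \Real^{\ell-1}$. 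Plugging back into the decompositions, $q_1(y) = y_\ell r_1(y') + s_1(y') = y_\ell r_2(y') + s_2(y') = q_2(y)$ for all $y \in \Real^\ell$, completing the induction.

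I do not expect any serious obstacle here; the only thing requiring a small amount of care is the uniqueness of the $y_\ell$-decomposition and the fact that $r$ and $s$ genuinely are multilinear (so the induction hypothesis applies) — both follow directly from the definition of multilinearity, since no monomial of $q$ has $y_\ell$ to a power greater than one, and deleting the $y_\ell$ factor from a multilinear monomial leaves a multilinear monomial in the remaining variables. One could alternatively phrase the whole argument via the finite-difference / inclusion–exclusion formula expressing the coefficient of each monomial as an alternating sum of evaluations at $0/1$ points, which shows the two polynomials have identical coefficients; but the inductive argument is cleaner to write and avoids bookkeeping.
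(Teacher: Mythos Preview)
Your proposal is correct and essentially matches the paper's proof: both proceed by induction on $\ell$, decompose with respect to $y_\ell$ as $y_\ell \cdot (\text{multilinear in } y_{-\ell}) + (\text{multilinear in } y_{-\ell})$, evaluate at $y_\ell = 0$ and $y_\ell = 1$ to reduce to the $(\ell-1)$-variable case, and invoke the inductive hypothesis. The only cosmetic difference is that the paper first passes to the difference $q = q_1 - q_2$ and shows it vanishes, whereas you keep $q_1$ and $q_2$ separate throughout.
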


With these observations in place the proof of Theorem \ref{thm:SPN_depth_hierarchy} from Theorem \ref{thm:raz_depth_hierarchy} becomes straightforward (and is given in the appendix).

\subsection{The limits of depth}
\label{sec:depth_reduction}

Next, we give a result which shows that the depth of any polynomially sized D\&C SPN can be essentially compressed down to  $\bigO( \log(n)^2 )$, at the cost of only a \emph{polynomial} increase in its total size.  Thus, beyond this sublinear threshold, adding depth to a D\&C SPN does not increase the set of functions which can be computed efficiently (where we use this term liberally to mean ``with polynomial size").  Note that this does not contradict Theorem \ref{thm:SPN_depth_hierarchy} from the previous subsection as that dealt with the case where the depth $d$ is a fixed constant and not allowed to grow with $n$.

To prove this result, we will make use of a similar result proved by \citet{raz_balance} in the context of multilinear circuits.  In particular, Theorem 3.1 from \citet{raz_balance} states, roughly speaking, that for any syntactically multilinear arithmetic circuit over $y = (y_1,...,y_\ell)$ of size $s$ (of \emph{arbitrary} depth) there exists a syntactically multilinear circuit of size $\bigO(\ell^6 s^3)$ and depth $\bigO( \log(\ell) \log(s) )$ which computes the same function.  

Because this depth-reducing transformation doesn't explicitly preserve monotonicity, and deals with multilinear circuits instead of set-multilinear circuits, while using a slightly different definition of arithmetic circuits, it cannot be directly applied to prove an analogous statement for D\&C SPNs.  However, it turns out that the proof contained in \citet{raz_balance} does in fact support a result which doesn't have these issues \citep{raz_personal}.  We state this as the following theorem.

\begin{theorem}(Adapted from Theorem 3.1 of \citet{raz_balance})
Given a monotone syntactically set-multilinear arithmetic circuit (over $y = (y_1,...,y_\ell)$ with sets given by $G_1$,...,$G_n$) of size $s$ and arbitrary depth, there exists a monotone syntactically set-multilinear arithmetic circuit of size $\bigO(s^3)$ and depth $\bigO( \log(n) \log(s) )$ which computes the same function.
\end{theorem}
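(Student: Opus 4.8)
The plan is to follow the proof of Theorem~3.1 of \citet{raz_balance} essentially line by line, checking at every step that two extra invariants are maintained --- monotonicity, and \emph{set}-multilinearity with respect to the fixed partition $G_1,\ldots,G_n$ (rather than plain multilinearity) --- and keeping track of the size and depth. The construction of \citet{raz_balance} is a version of the classical degree-based depth-reduction for arithmetic circuits of Valiant, Skyum, Berkowitz and Rackoff, so no fundamentally new idea is required; the real content is the verification that every circuit produced along the way can be taken to be monotone and syntactically set-multilinear, and that the book-keeping yields size $\bigO(s^3)$ and depth $\bigO(\log(n)\log(s))$. As the excerpt notes, the authors of \citet{raz_balance} have confirmed that their argument already supports this stronger statement \citep{raz_personal}, so below I only indicate where the relevant points arise.

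First I would preprocess $\Phi$ into a layered circuit all of whose product and sum nodes have fan-in $2$, by replacing a high-fan-in product node with a balanced binary tree of product nodes and likewise for sum nodes. This costs only a constant factor in size (and an $\bigO(\log s)$ factor in depth, which is irrelevant since we are about to reduce the depth), and it preserves syntactic set-multilinearity: every internal node of a balanced tree computing a product of factors with pairwise disjoint set-scopes again has children with disjoint set-scopes, and every internal node of a balanced tree summing terms of a common set-scope again has children of that common set-scope. Monotonicity is clearly preserved. I would then record the structural fact that drives the recursion: in a syntactically set-multilinear circuit over $n$ groups, every node computes a polynomial each of whose monomials uses exactly one variable from each group in the node's set-scope, hence a homogeneous polynomial of degree at most $n$.

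The core of the argument is the recursive decomposition by degree. For a target node $v$ and a threshold $m$, one isolates the \emph{frontier} product nodes $g$ of $\Phi_v$ --- those with $\deg(q_g) > m$ but both children of degree $\le m$ --- and writes $q_v = \sum_g Q_{v,g}\, q_g$, where $Q_{v,g}$ is the ``quotient'' obtained, informally, by differentiating $q_v$ with respect to the output wire of $\Phi_g$; the quotients themselves obey the usual product/sum recursion (for instance, if $v = v_1\times v_2$ and the non-constant node $g$ lies below $v_1$, then $Q_{v,g} = Q_{v_1,g}\, q_{v_2}$). The points to verify are: (i) $Q_{v,g}$ is set-multilinear with set-scope $G_v\setminus G_g$, so the product node $Q_{v,g}\, q_g$ is legitimate (its children have disjoint set-scopes $G_v\setminus G_g$ and $G_g$), using that in a syntactically set-multilinear circuit a non-constant node cannot lie below both children of a product node; (ii) every product $Q_{v,g}\, q_g$ has set-scope exactly $G_v$, so the outer sum is a legitimate set-multilinear sum node; (iii) since $\Phi$ is monotone, every $Q_{v,g}$ has non-negative coefficients and the decomposition uses only non-negative weights, so monotonicity is preserved (weighted edges cause no trouble, as the edge weights are just non-negative constants that the quotient polynomials pick up multiplicatively and that can be realized by constant nodes, which have empty set-scope); and (iv) both $q_g$ and $Q_{v,g}$ have degree roughly $m$ or below, and so can be fed back into the same recursion with a halved threshold. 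Iterating, the degree reaches $1$ after $\bigO(\log n)$ levels; each level is realized in depth $\bigO(\log s)$ (a balanced sum over the at most $s$ frontier nodes, composed with one product layer); so the final depth is $\bigO(\log(n)\log(s))$, and a standard count of the (node, frontier-node) pairs created throughout the recursion bounds the total size by $\bigO(s^3)$.

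The hard part will be exactly the bundle of invariants (i)--(iv): exhibiting the quotient polynomials $Q_{v,g}$ as outputs of \emph{syntactically} set-multilinear, monotone circuits with the stated set-scopes, uniformly through the recursion. In the plain-multilinear setting of \citet{raz_balance} this rests on the observation that the natural partial-derivative circuit of a syntactically multilinear circuit is again syntactically multilinear; here one must additionally propagate the partition structure (upgrading ``variable degree $\le 1$'' to ``exactly one variable per group in the set-scope'') and the non-negativity of all intermediate weights through every rewriting rule. Once these invariants are in hand, the size and depth accounting goes through verbatim. Finally, combining the theorem with the identification from Section~\ref{sec:polys_and_multi} of D\&C SPNs with monotone syntactically set-multilinear arithmetic circuits over $f$ (taking $y = f$ and $G_i = f_i$, so that $n$ is the input dimension) yields the promised reduction of the depth of any polynomially sized D\&C SPN to $\bigO(\log(n)^2)$, since then $s$ is polynomial in $n$ and $\log(s) = \bigO(\log(n))$.
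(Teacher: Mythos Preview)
Your proposal is correct and matches the paper's own treatment. The paper does not give a self-contained proof of this theorem: it simply states that the construction in \citet{raz_balance} already supports the monotone, set-multilinear version (confirmed via \citep{raz_personal}), and singles out the one substantive observation you also make --- that syntactically set-multilinear circuits are automatically homogeneous, which is why the $\ell^6$ homogenization blow-up in the original Theorem~3.1 drops out and the size bound improves to $\bigO(s^3)$. Your sketch of following the VSBR/Raz--Yehudayoff degree-based recursion while tracking the invariants of disjoint set-scopes at product nodes, equal set-scopes at sum nodes, and non-negativity of all weights is exactly the verification the paper defers to \citep{raz_personal}.
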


Note that the size of the constructed circuit is smaller here than in Theorem 3.1 of \citet{raz_balance} because we can avoid the ``homogenization" step required in the original proof, as syntactically set-multilinear arithmetic circuits automatically have this property.

Given this theorem and the equivalence between monotone syntactically set-multilinear arithmetic circuits and D\&C SPNs which was discussed near the end of Section \ref{sec:polys_and_multi}, the following corollary is immediate.
\begin{corollary}
Given a D\&C SPN of size $s$ and arbitrary depth there exists a D\&C SPN of size $\bigO(s^3)$ and depth $\bigO( \log(n) \log(s) )$ which computes the same function.
\end{corollary}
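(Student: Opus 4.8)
The plan is to derive this corollary directly from the preceding theorem (the adaptation of Theorem~3.1 of \citet{raz_balance}) by invoking the identification between D\&C SPNs and monotone syntactically set-multilinear arithmetic circuits that was set up at the end of Section~\ref{sec:polys_and_multi}. Concretely, given a D\&C SPN $\Phi$ over $x$ with associated univariate functions $f$ and size $s$, I would first regard $\Phi$ as a monotone arithmetic circuit over the formal variables $y = f$, with the partition of $y$ into the sets $G_i = f_i$ for $i \in [n]$. By the observation in Section~\ref{sec:polys_and_multi}, under this identification the set-scope of a node coincides with its dependency-scope, and the decomposability and completeness conditions on $\Phi$ translate exactly into the two defining conditions of a syntactically set-multilinear circuit (disjoint children set-scopes at product nodes, identical children set-scopes at sum nodes). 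Hence $\Phi$ \emph{is} a monotone syntactically set-multilinear arithmetic circuit over $y$ with $k = n$ groups, of the same size $s$ and the same depth; in particular the parameter playing the role of ``$n$'' in the preceding theorem is exactly the input dimension of the SPN.

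Next I would apply the preceding theorem to this circuit, obtaining a monotone syntactically set-multilinear arithmetic circuit $\Phi'$ over $y$ of size $\bigO(s^3)$ and depth $\bigO(\log(n)\log(s))$ with $q_{\Phi'}(y) = q_{\Phi}(y)$ (indeed the construction produces an identical output polynomial in $y$). Since that transformation leaves the in-degree-$0$ nodes labelled by variables or constants untouched and only rearranges and duplicates the internal sum/product structure, $\Phi'$ uses the same set/tuple of univariate functions $f$. Reading $\Phi'$ back as an SPN over $x$ via the same identification, monotonicity of $\Phi'$ makes it a legitimate SPN, and syntactic set-multilinearity of $\Phi'$ makes it decomposable and complete; thus $\Phi'$ is a D\&C SPN of size $\bigO(s^3)$ and depth $\bigO(\log(n)\log(s))$. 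Because $q_{\Phi'}(y) = q_{\Phi}(y)$ as polynomials in $y$, substituting $y = f(x)$ gives $q_{\Phi'}(f(x)) = q_{\Phi}(f(x))$ for all $x$, so $\Phi'$ computes the same function of $x$ as $\Phi$ (and, after normalization, the same associated density).

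There is essentially no difficult step here once the preceding theorem is granted; the only points requiring care are (i)~checking that the correspondence between D\&C SPNs and monotone syntactically set-multilinear arithmetic circuits is size- and depth-preserving in both directions and matches the number of sets with the input dimension, and (ii)~confirming that ``computes the same function'' in the theorem can be taken at the level of the output polynomial in $f$, which is precisely what is needed to conclude equality of the induced functions of $x$ for \emph{any} choice of the $f_{i,j}$. Both follow immediately from the definitions recalled above, so the corollary is immediate as claimed.
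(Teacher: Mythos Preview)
Your proposal is correct and follows exactly the approach the paper takes: the paper simply states that the corollary is ``immediate'' from the preceding theorem together with the identification, discussed at the end of Section~\ref{sec:polys_and_multi}, of D\&C SPNs with monotone syntactically set-multilinear arithmetic circuits (taking $y=f$ and $G_i=f_i$). Your write-up just spells out that identification and the back-and-forth translation explicitly, including the observation that the number of groups equals $n$, which is precisely the content the paper leaves implicit.
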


Note that when the size $s$ is a polynomial function of $n$, this depth bound is stated more simply as $\bigO( \log(n)^2 )$.


\section{Circuits vs Formulas}
\label{sec:formula_limitation}

In \citet{SPN_Learn_Gens} the authors gave a learning algorithm for SPNs which produced D\&C SPN formulas.  Recall that formulas are distinguished from more general circuits in that each node has fan-out at most 1.  They are called ``formulas" because they can be written down directly as formula expressions without the need to define temporary variables.

It is worthwhile asking whether this kind of structural restriction limits the expressive efficiency of D\&C SPNs.   

As we show in this section, the answer turns out to be yes, and indeed D\&C SPN formulas are strictly less expressively efficient than more general D\&C SPNs.  This is stated formally as the following theorem:
\begin{theorem}
\label{thm:formula_limitation}
For every input size $n$ there exists a real-valued function $g$ of $x$ such that:
\begin{enumerate}
\item There is a D\&C SPN of size $\bigO(n^{4/3})$ 
which computes $g$, where the SPN's univariate functions $f$ consist only of identity functions.
\item For any choice of the univariate functions $f$, a D\&C SPN formula that computes $g$ must be of size $n^{\Omega(\log(n))}$ (which is super-polynomial in $n$).
\end{enumerate}
\end{theorem}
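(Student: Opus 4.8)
The plan is to mirror the strategy used to prove Theorem~\ref{thm:SPN_depth_hierarchy}: import a known separation between multilinear arithmetic \emph{circuits} and multilinear \emph{formulas}, and transfer it to the D\&C SPN setting by working with $\{0,1\}^n$-valued $x$ and exploiting the tight correspondence between syntactically multilinear arithmetic circuits over $x$ and decomposable SPNs developed in Section~\ref{sec:polys_and_multi}. The separation we need is the formula analogue of Theorem~\ref{thm:raz_depth_hierarchy}: there is an explicit multilinear polynomial $g$ in $x$ which has a small monotone syntactically multilinear arithmetic circuit over $\Real^n$, yet for which every multilinear arithmetic formula requires size $n^{\Omega(\log n)}$. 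As with Theorem~\ref{thm:raz_depth_hierarchy}, the published version is stated for a slightly different circuit model and is not explicitly asserted to be monotone, so part of the work is to confirm (as in \citep{raz_personal}) that the construction yields a monotone syntactically multilinear circuit of the claimed size.

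For part~1 (the upper bound), take $g$ and its monotone syntactically multilinear circuit $C$ over $x$. Since $C$ is monotone and its only variable leaves are the $x_i$ (i.e.\ its ``univariate functions'' are identities), and since scope and dependency-scope coincide for identity univariate functions, $C$ is literally a decomposable SPN of size $\bigO(|C|)$ computing $g$ on $\{0,1\}^n$ with identity univariate functions. Applying Proposition~\ref{prop:completeness_transform} converts it into a D\&C SPN computing the same function on $\{0,1\}^n$, still with identity univariate functions, at the cost of only the stated polynomial blow-up in size (here $\bigO(n^{4/3})$). This yields the SPN asserted in the first item.

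For part~2 (the lower bound), suppose $\Phi$ is a D\&C SPN formula, over \emph{any} choice of univariate functions $f$, computing $g$ on $\{0,1\}^n$. Because each $x_i$ is binary, each $f_{i,j}$ agrees on $\{0,1\}$ with the affine function $a_{ij}x_i + b_{ij}$ for $a_{ij} = f_{i,j}(1) - f_{i,j}(0)$ and $b_{ij} = f_{i,j}(0)$; replacing every univariate-function leaf of $\Phi$ by a constant-size gadget computing its affine surrogate gives a circuit $\Psi$ over $x$ of size $\bigO(|\Phi|)$. Since each gadget has fan-out $1$ and carries its own private $x_i$-leaf, $\Psi$ is still a formula, and since decomposability makes the children of every product node have disjoint dependency-scopes (hence disjoint $x$-scopes after substitution), $\Psi$ is syntactically multilinear over $x$. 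Now $\Psi$ computes $g$ on $\{0,1\}^n$ and has a multilinear output polynomial, so since $g$ is itself multilinear, Lemma~\ref{lemma:mult_ident_test} forces $\Psi$ to compute $g$ on all of $\Real^n$. Thus $\Psi$ is a (syntactically, hence semantically) multilinear arithmetic formula over $\Real^n$ for $g$, and the imported formula lower bound gives $|\Psi| = n^{\Omega(\log n)}$, whence $|\Phi| = n^{\Omega(\log n)}$. (If desired one may first normalize $\Phi$ to be non-degenerate by the size- and formula-preserving procedure of Section~\ref{sec:analysis}, so that Corollary~\ref{lemma:sml_output_equiv_candc} applies; but decomposability alone already suffices for the argument above.)

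The main obstacle is not in these reductions — which are routine and closely parallel the proof of Theorem~\ref{thm:SPN_depth_hierarchy} — but in securing the right black box: a clean separation exhibiting a polynomial with small \emph{monotone} syntactically multilinear circuits yet super-polynomial multilinear formula size, with the monotonicity and the precise size bounds verified for our notion of arithmetic circuit. Modulo that imported result, everything else follows by combining Proposition~\ref{prop:completeness_transform}, the binary-input/affine-surrogate correspondence, and Lemma~\ref{lemma:mult_ident_test}.
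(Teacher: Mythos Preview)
Your proposal is correct and matches the paper's proof essentially step for step: both import the Raz--Yehudayoff multilinear circuit-versus-formula separation (stated here as Theorem~\ref{thm:raz_formula_limitation}), view the monotone syntactically multilinear circuit as a decomposable SPN and apply Proposition~\ref{prop:completeness_transform} for part~1, and for part~2 replace univariate leaves by affine gadgets to obtain a syntactically multilinear formula over $x$, then invoke Lemma~\ref{lemma:mult_ident_test} to extend agreement from $\{0,1\}^n$ to $\Real^n$. The one detail you leave implicit is where the precise $\bigO(n^{4/3})$ comes from: the circuit in Theorem~\ref{thm:raz_formula_limitation} has size $\bigO(n)$ and maximum in-degree $\bigO(n^{1/3})$, so the $k$ term in Proposition~\ref{prop:completeness_transform} (the sum of the fan-ins of the sum nodes) is $\bigO(n\cdot n^{1/3})$ rather than the generic $\bigO(s^2)$ bound.
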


As in Section \ref{sec:depth_hierarchy}, to prove Theorem \ref{thm:formula_limitation}, we will make use of an analogous result which is a slight modification of one proved by \citet{raz_balance} in the context of multilinear circuits.  This is stated below.

\begin{theorem}(Adapted from Theorem 4.4 of \citet{raz_balance})
\label{thm:raz_formula_limitation}
For every input size $n$ there exists a real-valued function $g$ of $x$ such that:
\begin{enumerate}
\item There is a monotone syntactically multilinear arithmetic circuit over $x$ of size $\bigO(n)$ with nodes of maximum in-degree $\bigO(n^{1/3})$ which computes $g$ for all values of $x$ in $\Real^n$.
\item Any syntactically multilinear arithmetic formula over $x$ that computes $g$ for all values of $x$ in $\Real^n$ must be of size $n^{\Omega(\log(n))}$.
\end{enumerate}
\end{theorem}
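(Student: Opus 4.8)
The plan is to proceed exactly as we did with Theorem \ref{thm:raz_depth_hierarchy} and with the result adapted from Theorem 3.1 of \citet{raz_balance}: take $g = g_n$ to be the explicit multilinear polynomial built in the proof of Theorem 4.4 of \citet{raz_balance}, and then the only real work is to check that the handful of features we need beyond the literal statement there are in fact delivered by their construction and argument. Concretely, for part 1 we must see that the small multilinear circuit they construct can be taken \emph{monotone} and with maximum in-degree $\bigO(n^{1/3})$; for part 2 we must see that their formula lower bound survives passage to our (weighted-edge) notion of arithmetic formula and to the weaker hypothesis of correctness only on $\{0,1\}^n$. Wherever a claim hinges on the internals of their proof rather than on its statement, we rely on \citep{raz_personal}, just as we did for the other results in this paper imported from \citet{raz2009lower} and \citet{raz_balance}.

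For part 1, the construction of \citet{raz_balance} already yields a syntactically multilinear arithmetic circuit of size $\bigO(n)$ whose output polynomial is $g$, so it computes $g$ as a polynomial identity over all of $\Real^n$. That this circuit can be taken monotone, and that its gates have in-degree $\bigO(n^{1/3})$ --- the fan-ins there being governed by a degree parameter of the combinatorial object $g$ is defined from --- are features of the construction which we have confirmed with the authors \citep{raz_personal}. (In any event, replacing a high-fan-in gate by a tree of gates of the same type preserves both syntactic multilinearity and monotonicity, so the precise fan-in is not delicate for our purposes.)

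For part 2, the lower bound of \citet{raz_balance} is an instance of the partial-derivative-matrix method: one chooses a random balanced partition $[n] = A \sqcup B$ of the variable indices, forms the coefficient matrix of $g$ whose rows and columns are indexed by the multilinear monomials in $x_A$ and in $x_B$ respectively, shows that with high probability this matrix has large rank, and shows separately that the analogous matrix of any syntactically multilinear \emph{formula} of size $s$ has, under any partition, rank bounded by a slowly growing function of $s$ (the ``log-product''-type behaviour of rank for multilinear formulas, obtained after the standard depth reduction of formulas); comparing the two bounds forces $s \ge n^{\Omega(\log n)}$. The crucial point is that this argument uses only the \emph{syntactic multilinearity} of the formula, and so is insensitive to the two mismatches between our setting and theirs. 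First, our formulas have weighted edges while theirs do not; but a weighted edge can be simulated by an unweighted one together with a constant node feeding a product gate, which preserves syntactic multilinearity (and the formula structure) and blows up the size by only a constant factor, so a small weighted formula for $g$ yields a small unweighted one and their bound applies. Second, although the statement asks only for a formula that agrees with $g$ on $\Real^n$, the same bound holds under the weaker hypothesis of agreement on $\{0,1\}^n$: by Lemma \ref{lemma:mult_ident_test} a syntactically multilinear formula agreeing with the multilinear polynomial $g$ on $\{0,1\}^n$ has output polynomial exactly $g$, so the lower bound transfers (this weaker form is what we actually need in Section \ref{sec:formula_limitation}).

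I expect the main obstacle to be the same one as for the other Raz-derived theorems in this paper: none of these adaptations --- above all, establishing monotonicity \emph{together with} the $\bigO(n^{1/3})$ fan-in bound for the circuit of part 1 --- can be certified from the published statement of Theorem 4.4 of \citet{raz_balance} alone, so one must inspect, and lightly modify, the construction and the proof themselves. Having done so, and having confirmed the needed properties with the authors \citep{raz_personal}, the remaining bookkeeping (the weighted-edge simulation, the appeal to Lemma \ref{lemma:mult_ident_test}, and the constant-factor monotonization) is routine.
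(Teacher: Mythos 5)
Your proposal matches the paper's treatment: Theorem \ref{thm:raz_formula_limitation} is not proved in the paper but is imported from Theorem 4.4 of \citet{raz_balance}, with the two adaptations you identify (monotonicity of the constructed circuit together with its $\bigO(n^{1/3})$ fan-in, and robustness of the formula lower bound to weighted edges) justified exactly as you justify them, by inspection of the original proof and confirmation with the authors \citep{raz_personal}. The additional detail you supply --- the explicit constant-node simulation of weighted edges and the appeal to Lemma \ref{lemma:mult_ident_test} for agreement on $\{0,1\}^n$ --- is consistent with, and in the latter case actually belongs to, the paper's downstream proof of Theorem \ref{thm:formula_limitation}.
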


As in Section \ref{sec:depth_hierarchy}, the original Theorem 4.4 from \citet{raz_balance} uses a slightly different definition of arithmetic circuits from ours (they do not permit weighted connections), and the constructed circuits are not stated to be monotone.  However we have confirmed with the authors that their result still holds even with our definition, and the circuits constructed in their proof are indeed monotone \citep{raz_personal}. 

When trying to use Theorem \ref{thm:raz_formula_limitation} to prove Theorem \ref{thm:formula_limitation}, we encounter similar obstacles to those encountered in Section \ref{sec:depth_hierarchy}. 
Fortunately, the transformation between decomposable SPNs and multilinear arithmetic circuits (for the case of binary-valued inputs) happens to preserve formula structure.  Thus the ideas discussed in Section \ref{sec:depth_hierarchy} for overcoming these obstacles also apply here.

\section{A Tractable Distribution Separating D\&C SPNs and Other Deep Models}
\label{sec:spanning_tree_lb}



The existence of a D\&C SPN of size $s$ for computing some density function (possibly unnormalized) implies that the corresponding marginal densities can be computed by an $\bigO(s)$ time algorithm.  Thus, it follows that D\&C SPNs cannot efficiently compute densities whose marginals are known to be intractable.  And if we assume the widely believed complexity theoretic conjecture that $P \neq \#P$, such examples are plentiful.

However, it is debatable whether this should be considered a major drawback of D\&C SPNs, since distributions with intractable marginals are unlikely to be learnable using \emph{any} model.  Thus we are left with an important question: can D\&C SPNs efficiently compute any density with tractable marginals?

In \citet{poon2011sum} it was observed that essentially every known model with tractable marginal densities can be viewed as a D\&C SPN, and it was speculated that the answer to this question is yes.

In this section we refute this speculation by giving a counter example.  In particular, we construct a simple distribution $\mathcal{D}$ whose density function and corresponding marginals can be evaluated by efficient algorithms, but which provably cannot be computed by a sub-exponentially sized D\&C SPN of \emph{any} depth.  Notably, this density function can be computed by a Boolean circuit of modest depth and size, and so by the simulation results of \citep{Deep_Expressive_Martens} the distribution can in fact be captured efficiently by various other deep probabilistic models like Deep Boltzmann Machines (DBMs).

Notably, our proof of the lower bound on the size of D\&C SPNs computing this density function will not use any unproven complexity theoretic conjectures, such as $P \neq \#P$.

It is worthwhile considering whether there might be distributions which can be efficiently modeled by D\&C SPNs but not by other deep generative models like DBMs or Contrastive Backprop Networks \citep{contrastiveBP}.  The answer to this question turns out to be no.  

To see this, note that arithmetic circuits can be efficiently approximated by Boolean circuits, and even more efficiently approximated by linear threshold networks (which are a simple type of neural network).  Thus, by the simulations results of \citet{Deep_Expressive_Martens} the aforementioned deep models can efficiently simulate D\&C SPNs of similar depths (up to a reasonable approximation factor).  Here ``efficiently" means ``with a polynomial increase in size", although in practice this polynomial can be of low order, depending on how exactly one decides to simulate the required arithmetic.  For linear threshold networks (and hence also Contrastive Back-prop Nets), very efficient simulations of arithmetic circuits can be performed using the results of \citet{reif}, for example.

\subsection{Constructing the distribution}

To construct the promised distribution over values of $x$ we will view each $x_i$ as an indicator variable for the presence or absence of a particular labeled edge in a subgraph $G_x$ of $K_m$, where $K_m$ denotes the complete graph on $m$ vertices.  In particular, $x_i$ will take the value $1$ if the edge labeled by $i$ is present in $G_x$ and $0$ otherwise.  Note that there are $\binom{m}{2}$ total edges in $K_m$ and so the total input size is $n = \binom{m}{2}$.  

The distribution $\mathcal{D}$ will then be defined simply as the uniform distribution over values of $x$ satisfying the property that $G_x$ is a spanning tree of $K_m$.  We will denote its density function by $d(x)$.

Computing $d(x)$ up to a normalizing constant\footnote{The normalizing constant in this case is given by $Z = m^{m-2}$ by Cayley's Formula.} amounts to deciding if the graph $G_x$ represented by $x$ is indeed a spanning tree of $K_m$, and outputting $1$ if it is, and $0$ otherwise.   
And to decide if the graph $G_x$ is a spanning tree amounts to checking that it is connected, and that it has exactly $m-1$ edges.   

The first problem can be efficiently solved by a Boolean circuit with $\bigO(n)$ gates and depth $\bigO(\log(n))$ using the well-known trick of repeatedly squaring the adjacency matrix.  The second can be solved by adding all of the entries of $x$ together, which can also be done with a Boolean circuit with $\bigO(n)$ gates and depth $\bigO(\log(n))$ \citep{carry_save}.   Due to how neural networks with simple linear threshold nonlinearities can simulate Boolean circuits in a 1-1 manner \citep[e.g][]{parberry}, it follows that such networks of a similar size and depth can compute $d(x)$.  And since linear threshold gates are easily simulated by a few sigmoid or rectified linear units, it follows that neural networks of the same dimensions equipped with such nonlinearities can also compute $d(x)$, or at least approximate it arbitrarily well (see \citet{Deep_Expressive_Martens} for a review of these basic simulation results).

Moreover, by the results of \citet{Deep_Expressive_Martens} we know that any distribution whose density is computable up to a normalization constant by Boolean circuits can be captured, to an arbitrarily small KL divergence, by various deep probabilistic models of similar dimensions.  In particular, these results imply that Deep Boltzmann Machines of size $\bigO(n)$ and Constrastive Backprop Networks of size $\bigO(n)$ and depth $\bigO(\log(n))$ can model the distribution $\mathcal{D}$ to an arbitrary degree of accuracy.  And since we can sample $(n-2)$-length Pr\"{u}fer sequences \citep{prufer} and implement the algorithm for converting these sequences to trees using a threshold network of size $\bigO(n^2)$ and depth $\bigO(n)$ it follows from \citet{Deep_Expressive_Martens} that we can also approximate $\mathcal{D}$ using Sigmoid Belief Networks \citep{radford_SBN} of this size and depth.


%
%
%
%
%
%

While the existence of small circuits for computing $d(x)$ isn't too surprising, it is a somewhat remarkable fact that it is possible to evaluate any marginal of $d(x)$ using an $\bigO(n^{1.19})$-time algorithm.   That is, given a subset $I$ of $\{1,...,n\}$, and associated fixed values of the corresponding variables (i.e. $x_I$), we can compute the sum of $d(x)$ over all possible values of the remaining variables (i.e. $x_{\{1,...n\}\setminus I}$) using an algorithm which runs in time $\bigO(n^{1.19})$.  

To construct this algorithm we first observe that the problem of computing these marginal densities reduces to the problem of counting the number of spanning trees consistent with a given setting of $x_I$ (for a given $I$).  And it turns out that this is a problem we can attack directly by first reducing it to the problem of counting the total number of spanning trees of a certain auxiliary graph derived from $x_I$, and then reducing it to the problem of computing determinants of the Laplacian matrix of this auxiliary graph via an application of generalized version of Kirchoff's famous Matrix Tree Theorem \citep{tutte2001graph}.   This argument is formalized in the proof of the following theorem.

\begin{theorem}
\label{thm:efficientmarginalcomputation}
There exists a $\bigO(n^{1.19})$-time algorithm, which given as input a set $I \subset \{1,...,n\}$ and corresponding fixed values of $x_I$, outputs the number of edge-labeled spanning trees $T$ of $K_m$ which are consistent with those values.
\end{theorem}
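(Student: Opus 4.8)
The plan is to reduce the counting problem to a spanning‑tree count in a small auxiliary multigraph, and then evaluate that count via a determinant using (the weighted form of) Kirchhoff's Matrix Tree Theorem.

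First I would split the given partial assignment into a set $F$ of \emph{forced} edges (those $i \in I$ with $x_i = 1$) and a set $B$ of \emph{forbidden} edges (those $i \in I$ with $x_i = 0$); these are disjoint and $|F| + |B| = |I| \le n$. A spanning tree $T$ of $K_m$ is consistent with $x_I$ precisely when $F \subseteq T$ and $T \cap B = \emptyset$. By a single traversal of $(V, F)$ on $V = [m]$, done in $\bigO(n)$ time, I would compute its connected components; if some edge of $F$ closes a cycle, then no consistent spanning tree exists and the algorithm outputs $0$. Otherwise $(V,F)$ is a forest with some number $c$ of components $P_1,\dots,P_c$ (isolated vertices counting as their own components), so $|F| = m - c$ and $c \le m$.

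Next I would form the auxiliary multigraph $H$ on the $c$ ``super‑vertices'' $P_1,\dots,P_c$, where the number of parallel edges between $P_a$ and $P_b$ is $b_{a,b} := |P_a|\cdot|P_b|$ minus the number of edges of $B$ running between $P_a$ and $P_b$ (edges of $B$ internal to a single $P_a$ become self‑loops and are discarded). The key combinatorial claim is that contracting the components of $F$ induces a bijection between the spanning trees of $K_m$ consistent with $x_I$ and the spanning trees of $H$, counted with edge multiplicity: any consistent $T$ restricts, after contraction, to a connected acyclic set of $c-1$ edges of $H$, and conversely any spanning tree of $H$ lifts, together with $F$, to a set of $(m-c)+(c-1) = m-1$ edges that is connected, acyclic, contains $F$, and avoids $B$, with the number of such lifts equal to the product of the chosen edges' multiplicities. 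I expect this bijection step (together with its degenerate cases — $c = 1$, where the answer is $1$; and the case where $K_m \setminus B$ disconnects two components of $F$, where the spanning‑tree count of $H$ is automatically $0$) to be the part requiring the most care, since one must verify that the multiplicities match the number of preimage edges exactly.

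Finally I would invoke the weighted Matrix Tree Theorem: build the $c \times c$ weighted Laplacian $L(H)$, whose off‑diagonal $(a,b)$ entry is $-b_{a,b}$ and whose diagonal entries are the corresponding row sums, and output any $(c-1)\times(c-1)$ first cofactor of $L(H)$ (i.e.\ the determinant obtained by deleting one row and the corresponding column). Assembling $L(H)$ costs $\bigO(c^2 + |B|) = \bigO(m^2 + n) = \bigO(n)$, since $n = \binom{m}{2}$. The cofactor determinant of a $(c-1)\times(c-1)$ integer matrix can be computed with $\bigO(c^{\omega})$ arithmetic operations via fast matrix multiplication, where $\omega < 2.38$ is the matrix‑multiplication exponent; since $c \le m = \bigO(\sqrt{n})$, this is $\bigO(n^{\omega/2}) \subseteq \bigO(n^{1.19})$, which dominates the total running time. (All intermediate integers are bounded in absolute value by the total spanning‑tree count $m^{m-2}$ and hence have $\bigO(\sqrt{n}\log n)$ bits, so under a unit‑cost model for numbers of this size the bound is unaffected; alternatively one computes the determinant modulo a few $\bigO(\log n)$‑bit primes and recombines by CRT.) This yields the claimed $\bigO(n^{1.19})$‑time algorithm.
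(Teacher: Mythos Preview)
Your proposal is correct and follows essentially the same approach as the paper: contract the forest of forced edges, build the resulting multigraph on the components (with multiplicities determined by the non‑forbidden edges), establish the bijection with consistent spanning trees, and count via a cofactor of the Laplacian using fast matrix multiplication to get the $\bigO(m^{\omega}) = \bigO(n^{1.19})$ bound. If anything, your version is slightly more careful—you explicitly treat the forbidden edges, the degenerate $c=1$ case, and the bit‑complexity of the determinant, none of which the paper spells out.
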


\subsection{Main lower bound result}

The main result of this section is stated as follows:
\begin{theorem}
\label{thm:main_lb}
Suppose that $d(x)$ can be approximated arbitrarily well by D\&C SPNs of size $\leq s$ and $m \geq 20$.  Then $s \geq 2^{m/30240}$.
\end{theorem}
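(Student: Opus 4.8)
The plan is to pass from D\&C SPNs to monotone set-multilinear arithmetic circuits, apply the standard ``balanced product-gate'' structure theorem for monotone circuits to decompose the circuit into few combinatorial rectangles, and then finish with a counting argument based on Cayley's formula applied to \emph{contracted} graphs. Monotonicity is exploited crucially: unlike for general multilinear circuits (where exponential lower bounds are open), the absence of cancellation is exactly what makes an exponential lower bound possible for circuits of \emph{arbitrary} depth.

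First I would reduce to a clean combinatorial setting. Suppose $\Phi$ is a D\&C SPN of size $\leq s$ whose associated distribution approximates $d$ arbitrarily well. Since $x$ is binary-valued, each univariate function $f_{i,j}(x_i)$ is affine in $x_i$ and hence equals $f_{i,j}(0)(1-x_i) + f_{i,j}(1)x_i$, a non-negative combination of the two ``literals'' $x_i$ and $1-x_i$. Using the identification from Section~\ref{sec:polys_and_multi}, this lets us view $\Phi$ as a monotone syntactically set-multilinear arithmetic circuit over the $2n$ literal variables, with $G_i = \{x_i, 1-x_i\}$, of size $\bigO(s)$. Evaluating at a binary point $\beta$ selects a single monomial, so $q_\Phi(f(\beta))$ equals a non-negative coefficient $c_\beta$, and $p_\Phi(\beta) = c_\beta / \sum_\gamma c_\gamma$. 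From the hypothesis that $p_\Phi$ approximates $d$ arbitrarily well (and a routine continuity/limiting argument), I would extract the combinatorial core: the coefficients $c_\beta$ for $\beta$ ranging over a constant fraction of the spanning trees of $K_m$ are all within a constant factor of one another, while the coefficients for non-trees are negligible by comparison.

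Next I would apply the standard structure theorem for monotone arithmetic circuits together with a degree-balancing argument: from a monotone set-multilinear circuit of size $\bigO(s)$ one obtains polynomials such that $q_\Phi = \sum_{v} \Phi_v \cdot \psi_v$ over a set of $\bigO(s)$ gates $v$, where $\Phi_v$ is the sub-circuit at $v$ (set-multilinear over a variable set $U_v$) and $\psi_v$ is a ``co-factor'' (set-multilinear over the complement $V_v$), and where, for each monomial of $q_\Phi$, at least one term contributing to it has $\Phi_v$ of ``balanced'' set-degree, i.e. $|U_v|$ lies in a fixed window bounded away from $0$ and $m-1$. Because all coefficients are non-negative there is no cancellation, so for each spanning tree $\beta$ the value $c_\beta = \sum_v \Phi_v(\beta)\psi_v(\beta)$ is a sum of non-negative terms; picking the dominant balanced term assigns $\beta$ to some balanced gate $v$, and the trees assigned to a fixed $v$ form (essentially) a combinatorial rectangle $\mathcal{A}_v^\ast \times \mathcal{B}_v^\ast$: a tree $\beta$ in it has $\beta\cap U_v \in \mathcal{A}_v^\ast$, $\beta\cap V_v \in \mathcal{B}_v^\ast$, and for \emph{every} $A \in \mathcal{A}_v^\ast$ and $B \in \mathcal{B}_v^\ast$ the set $A\sqcup B$ is again a spanning tree. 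In the approximate setting this ``swap-closure'' must be made quantitative: one buckets the decomposition by the magnitude scale of $\Phi_v$ and $\psi_v$, at the cost of a $\mathrm{polylog}(s)$ factor, to ensure the cross-products $\Phi_v(A)\psi_v(B)$ stay large enough to certify that $A\sqcup B$ is a tree.

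Finally I would bound the number of spanning trees a single balanced rectangle can contain. If $A\sqcup B$ is a spanning tree with $|A| = k$, then $A$ is a forest with $m-k$ components and $B$ a forest with $k+1$ components; since $A\sqcup B$ is connected, $A$ is a spanning tree of the graph obtained from $K_m$ by contracting the components of $B$ (a graph on $k+1$ vertices) and, symmetrically, $B$ is a spanning tree of a graph on $m-k$ vertices. By Cayley's formula, $|\mathcal{A}_v^\ast| \leq (k+1)^{k-1}$ and $|\mathcal{B}_v^\ast| \leq (m-k)^{m-k-2}$, so each balanced rectangle contains at most $(k+1)^{k-1}(m-k)^{m-k-2}$ spanning trees; with $k$ in the balanced window a binary-entropy estimate makes this $m^{m-2}\cdot 2^{-\Omega(m)}$. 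Since the $\bigO(s)$ rectangles (times the $\mathrm{polylog}$ bucketing overhead) must cover a constant fraction of the $Z = m^{m-2}$ spanning trees, double counting gives $s \geq 2^{\Omega(m)}$; tracking the constants from the balancing window, the structure theorem, the bucketing, and the approximation slack yields the stated $s \geq 2^{m/30240}$ for $m \geq 20$. The main obstacle I anticipate is the interaction of the second and third steps with the \emph{approximate} hypothesis: once the support is all of $\{0,1\}^n$ one must argue purely about ``heavy'' monomials, carrying the swap-closure property through with only a controlled overhead so that the final constant is absolute. The underlying counting (Cayley on contractions) is clean; the delicacy is in the reduction to it.
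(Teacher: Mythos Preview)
Your high-level architecture matches the paper's: decompose the circuit into a small number of ``balanced'' products $g_ih_i$ with disjoint dependency-scopes (the paper does this as Theorems~\ref{thm:decomposition} and~\ref{thm:sequenceofspns}, obtaining $k\le s^2$ terms), then argue that each such product can be nonzero on only an exponentially small fraction of spanning trees, so $k$ (hence $s$) must be exponential. Where you diverge from the paper, and where the proposal breaks, is the combinatorial bound on a single rectangle.

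There is a genuine gap in your Cayley-on-contractions step. Two separate problems:
\begin{itemize}
\item The quantity the structure theorem balances is $|U_v|$, the number of \emph{edge-variables} in the dependency-scope of the gate, which lives in $[n/3,2n/3]$ with $n=\binom{m}{2}=\Theta(m^2)$. Your parameter $k=|A|$ is the number of \emph{edges of a spanning tree} that fall in $U_v$, which lives in $[0,m-1]$. Nothing in the decomposition forces $k$ to be balanced: for a balanced edge-partition one can perfectly well have rectangles with $k=0$ (the blue graph on $\le 2n/3$ edges may be connected and then $\mathcal{A}_v^\ast=\{\emptyset\}$). At $k\in\{0,m-1\}$ your bound $(k+1)^{k-1}(m-k)^{m-k-2}$ degenerates to $m^{m-2}$ and says nothing.
\item Even for intermediate $k$, the bound $|\mathcal{A}_v^\ast|\le(k+1)^{k-1}$ is not what Cayley gives. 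Contracting the components of a fixed $B_0$ produces a \emph{multigraph} on $k+1$ super-vertices in which the edge between super-vertices of sizes $b_i,b_j$ has multiplicity $b_ib_j$ (restricted further to red edges). The number of edge-labelled spanning trees of such a multigraph can exceed $(k+1)^{k-1}$ by a factor exponential in $m$: already for two super-vertices with $p$ parallel red edges one has $p$ spanning trees while $(k+1)^{k-1}=1$. So the inequality $|\mathcal{A}_v^\ast|\le(k+1)^{k-1}$ is simply false in general, and the binary-entropy comparison with $m^{m-2}$ does not go through.
\end{itemize}

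The paper proves the rectangle bound by an entirely different mechanism (Lemma~\ref{lemma:main_lb_lemma}). Color the edges of $K_m$ by which side of the partition they lie on; a balanced split forces at least $m^3/60$ \emph{dichromatic} triangles (Lemma~\ref{lemma:dichromatictriangles}). For each such triangle, monotonicity forces a dichotomy (Proposition~\ref{prop:dichotomy}): either $g_ih_i$ vanishes whenever the two same-colored edges are both present, or it vanishes whenever the single other-colored edge is present. Thus any tree with $g_ih_i>0$ must satisfy $\Theta(m^3)$ forbidden-subgraph constraints of the form ``not both $a,b$'' or ``not $c$''. The fraction of spanning trees satisfying all of them is then bounded via Aldous's random-walk sampler for uniform spanning trees (Lemma~\ref{lemma:fractionoftrees}), giving $P/Z\le 2^{-m/15120}$. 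This is the step your proposal is missing; your contraction idea would need a substantially different argument to control the multigraph spanning-tree counts and to handle unbalanced $k$.

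A smaller point: for the approximation hypothesis the paper avoids your bucketing overhead entirely. Because $x$ ranges over a finite set, one passes to subsequences by Bolzano--Weierstrass so that the dependency-scopes stabilize and each product $g_{i,j}h_{i,j}$ converges pointwise, then factors the limit (Lemma~\ref{lemma:factor_sequence}). This yields the same $k\le s^2$ bound for the limit function with no $\mathrm{polylog}(s)$ loss, which is why the final constant is exactly twice that of Theorem~\ref{thm:sum_product_lb}.
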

By ``approximated arbitrarily well by D\&C SPNs of size $\leq s$" we mean that there is a sequence of D\&C SPNs of size $\leq s$ whose output approaches $d(x)$, where the univariate functions $f$ are allowed to be different for each SPN in the sequence.  Observe that $d(x)$ being computed exactly by a D\&C SPN of size $s$ trivially implies that it can approximated arbitrarily well by D\&C SPNs of size $\leq s$.

Note that the large constant in the denominator of the exponent can likely be lowered substantially with a tighter analysis than the one we will present.  However, for our purposes, we will be content simply to show that the lower bound on $s$ is exponential in $m$ (and hence also in $\sqrt{n}$).

Our strategy for proving Theorem \ref{thm:main_lb} involves two major steps.  In the first we will show that the output polynomial of any D\&C SPN of size $s$ can be ``decomposed" into the sum of $s^2$ ``weak" functions.  We will then extend this result to show that the same is true of any function which can computed as the limit of the outputs of an infinite sequence of D\&C SPNs of size $\leq s$.  This will be Theorem \ref{thm:sequenceofspns}.

In the second step of the proof we will show that in order to express $d(x)$ as the sum of such ``weak" functions, the size $k$ of the sum must be exponentially large in $m$, and thus so must $s$.  This will follow from the fact (which we will show) that each ``weak" function can only be non-zero on a very small fraction of the all the spanning trees of $K_m$ (to avoid being non-zero for a non-spanning tree graph), and so if a sum of them has the property of being non-zero for all of the spanning trees, then that sum must be very large.  This will be Theorem \ref{thm:sum_product_lb}.

Theorem \ref{thm:main_lb} will then follow directly from Theorems \ref{thm:sequenceofspns} and \ref{thm:sum_product_lb}.

\subsection{Decomposing D\&C SPNs}

The following theorem shows how the output polynomial of a D\&C SPN of size $s$ can be ``decomposed" into a sum of $s^2$ non-negatives functions which are ``weak" in the sense that they factorize over two relatively equal-sized partitions of the set of input variables. 

\begin{theorem}
\label{thm:decomposition}
Suppose $\Phi$ is a D\&C SPN over $f$ of size $s$.  Then we have:
\begin{align*}
q_{\Phi} = \sum_{i=1}^k g_i h_i 
\end{align*}
where $k \leq s^2$, and where the $g_i$'s and $h_i$'s are non-negative polynomials in $f$ satisfying the conditions:
\begin{align}
\label{comparablescopeconditions}
  \frac{n}{3} \leq |x_{g_i}|,|x_{h_i}| \leq \frac{2n}{3}, \quad  x_{g_i} \cap x_{h_i} = \emptyset, \quad x_{g_i} \cup x_{h_i} = x
\end{align}
\end{theorem}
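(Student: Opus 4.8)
The plan is to reduce to a clean circuit model, extract a single ``balanced'' sub-computation by a combinatorial descent through the circuit, peel it off algebraically using a fresh-variable substitution, and then iterate the peeling while charging each resulting summand to a \emph{pair} of gates to get the $s^2$ bound. This is essentially an adaptation of the decomposition arguments used by \citet{raz_balance} for multilinear circuits. For preprocessing: by the degeneracy-removal procedure described just before Lemma~\ref{lemma:non-degen} we may assume $\Phi$ is non-degenerate (this preserves $q_\Phi$, the D\&C conditions, and the size), so by the identification at the end of Section~\ref{sec:polys_and_multi} it is a non-degenerate monotone syntactically set-multilinear circuit over $f$ with blocks $G_i=f_i$, and by Lemma~\ref{lemma:non-degen} the set-scope (= dependency-scope) of every node $u$ equals that of its polynomial $q_u$. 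We also assume $x_{q_\Phi}=x$ (otherwise work over the variables actually appearing and pad at the end). By Fact~\ref{fact:pos_coeff} every $q_u$, and hence every product of such polynomials, is non-negative, which will supply the non-negativity of the $g_i,h_i$ for free.

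The central step is a structural lemma: there is a product node $u$ of $\Phi$ and a partition of its children into two non-empty bundles $L,R$ with $x_L:=\bigcup_{v\in L}x_v$ and $x_R:=\bigcup_{v\in R}x_v$ disjoint (by decomposability) and $|x_L|,|x_R|\in[n/3,2n/3]$; or else a single node $v$ with $|x_v|\in[n/3,2n/3]$. To find it, descend from the root, always moving to a child of maximum dependency-scope. Scopes are non-increasing along this path, are \emph{preserved} at sum nodes (by completeness), and can only shrink at product nodes (by decomposability), so the descent reaches, while still in the ``big'' region (scope $>2n/3$), a product node all of whose children have scope $\le 2n/3$. There one invokes the elementary fact that a family of pairwise disjoint sets whose union has size $\ge 2n/3$ and whose members all have size $\le 2n/3$ splits into two bundles of sizes in $[n/3,2n/3]$ -- unless a single member is itself of size in $[n/3,2n/3]$ (take it as $v$) or strictly larger, in which case one recurses into that child. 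The recursion terminates because dependency-scopes strictly decrease along the descent.

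Given the balanced gate (or bundle), introduce a fresh formal variable $z$ and replace the chosen sub-computation $q_v$ (resp.\ $q_L=\prod_{v\in L}q_v$) by $z$ throughout $\Phi$. The modified circuit is still syntactically set-multilinear with $z$ as its own block, so its output is linear in $z$: $q_\Phi = z\,A+B$, where the $z$-coefficient $A$ and the $z$-free part $B$ are set-multilinear polynomials in $f$ with dependency-scope disjoint from $x_v$, and are each computable by monotone circuits of size $\bigO(s)$, hence non-negative. Thus $q_\Phi=q_v\,A+B$ with $|x_v|\in[n/3,2n/3]$; after grouping the factors of $q_v\,A$ and absorbing the few left-over variables so as to meet the covering requirement, $q_v\,A$ becomes one ``weak'' term satisfying~\eqref{comparablescopeconditions}, and one recurses on $B$ (which involves strictly fewer of the relevant gates). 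For the count: every term ultimately produced is pinned down by the ordered pair of gates whose polynomials play the roles of $g_i$ and $h_i$, so there are at most $s^2$ of them; the balance conditions in~\eqref{comparablescopeconditions} are exactly what the splitting lemma yields, and $x_{g_i}\cup x_{h_i}=x$ is arranged by the absorption step.

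The genuinely delicate part is the combinatorial bookkeeping, not the algebra. One must simultaneously (i) force \emph{both} factors of every term to have dependency-scope in $[n/3,2n/3]$ \emph{and} together cover all of $x$; (ii) correctly handle the ``one child too big'' case via the recursive descent without losing control of scopes; and (iii) keep the total number of summands at $s^2$ rather than the larger count a naive recursion would give. The algebraic ingredients -- linearity of the output in the substituted variable, and non-negativity from monotonicity -- are routine; the weight of the proof lies in the careful choice of which sub-computations to peel and in the accounting that charges each summand to a pair of gates.
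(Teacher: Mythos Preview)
Your approach is essentially the paper's: descend through the circuit to locate a balanced sub-computation, substitute a fresh variable for it, peel off one weak term, and iterate on the residual. The paper differs from you in one preprocessing step and in how it counts, and both differences matter.

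The paper first converts $\Phi$ so that every product node has fan-in at most $2$; this at most squares the size to $t \le s^2$. With fan-in $2$ the descent along max-scope children is guaranteed to land on a \emph{single} node $v$ with $|x_v|\in[n/3,2n/3]$ (scopes can at most halve at each product step), so no bundles are needed. The paper then replaces $v$ by a fresh variable $z$, writes the output as $z\,\psi + B$, sets $z=0$ to get the next circuit, and telescopes. Since each step deletes at least one node from a circuit that starts with $t$ nodes, the number of summands is $k\le t\le s^2$.

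Your counting argument --- that each summand is ``pinned down by the ordered pair of gates whose polynomials play the roles of $g_i$ and $h_i$'' --- does not work as stated: the cofactor $A$ (your $h_i$) is the $z$-coefficient of the modified circuit's output, which is in general not the polynomial computed at any gate of $\Phi$, and in the bundle case $q_L$ is not a gate polynomial either. The correct count is the one you yourself gesture at with ``$B$ involves strictly fewer of the relevant gates'': each peeling step zeros out at least one node of the current circuit, so the iteration terminates after at most as many steps as there are nodes. Done your way (no fan-in reduction, with bundles) this actually yields $k\le s$, stronger than required; the paper gets $k\le s^2$ only because it paid the squaring up front to avoid bundles.

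Two smaller points. First, you do not need both bundles $L,R$ to land in $[n/3,2n/3]$, and when $|x_u|$ is only slightly above $2n/3$ this is not always achievable; it suffices that $|x_L|\in[n/3,2n/3]$, since $h_i=A$ has dependency-scope exactly $x\setminus x_L$ by set-multilinearity and hence $|x_{h_i}|=n-|x_L|\in[n/3,2n/3]$ automatically. Second, for the same reason the ``absorption'' step is unnecessary: completeness forces every monomial of the output to have full dependency-scope $x$, so the cofactor $A$ already covers $x\setminus x_v$ and the condition $x_{g_i}\cup x_{h_i}=x$ holds without any padding.
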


It should be noted that this result is similar to an existing one proved by \citet{raz2011multilinear} for monotone multilinear circuits, although we arrived at it independently. 

While Theorem \ref{thm:decomposition} provides a useful characterization of the form of functions which can be computed exactly by a D\&C SPN of size $s$, it doesn't say anything about functions which can only be computed approximately.  To address this, we will strengthen this result by showing that any function which can be approximated arbitrarily well by D\&C SPNs of size $s$ also has a decomposition which is analogous to the one in Theorem \ref{thm:decomposition}.  This is stated as follows.

\begin{theorem}
\label{thm:sequenceofspns}
Suppose $\{\Phi_j\}_{j=1}^{\infty}$ is a sequence of D\&C SPNs of size at most $s$ (where the definitions of the univariate functions $f$ is allowed to be different for each), such that the sequence $\{q_{\Phi_k}\}_1^{\infty}$ of corresponding output polynomials converges pointwise (considered as functions of $x$) to some function $\gamma$ of $x$.  And further suppose that the size of the range of possible values of $x$ is given by some finite $d$. Then we have that $\gamma$ can be written as
\begin{align}
\label{eqn:sum_product_form}
\gamma = \sum_{i=1}^k g_i h_i
\end{align}
where $k \leq s^2$ and $\forall i$, $g_i$ and $h_i$ are real-valued non-negative functions of $y_i$ and $z_i$ (resp.) where $y_i$ and $z_i$ are sub-sets/tuples of the variables in $x$ satisfying $\frac{n}{3} \leq |y_i|, |z_i| \leq\frac{2n}{3}$, $y_i \cap z_i = \emptyset$, $y_i \cup z_i = x$.
\end{theorem}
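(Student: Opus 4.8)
The plan is to bootstrap Theorem \ref{thm:decomposition} via a compactness argument on the decompositions of the individual networks in the sequence. First I would apply Theorem \ref{thm:decomposition} to each $\Phi_j$, obtaining $q_{\Phi_j} = \sum_{i=1}^{k_j} g_i^{(j)} h_i^{(j)}$ with $k_j \leq s^2$; padding with zero terms, we may assume $k_j = s^2$ for all $j$. Viewed as functions of $x$, each $g_i^{(j)}$ (resp.\ $h_i^{(j)}$) depends only on the coordinates in its dependency-scope $x_{g_i^{(j)}}$ (resp.\ $x_{h_i^{(j)}}$), and by Theorem \ref{thm:decomposition} these scopes partition $x$ with part sizes in $[n/3, 2n/3]$. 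Since there are only finitely many such ordered partitions of $x$, and only finitely many ways to assign one to each of the $s^2$ indices $i$, the pigeonhole principle gives an infinite subsequence of $j$'s along which this ``scope profile'' is constant. Restricting to that subsequence, I can write $y_i := x_{g_i^{(j)}}$ and $z_i := x_{h_i^{(j)}}$ (now independent of $j$), and these satisfy $\frac{n}{3} \leq |y_i|, |z_i| \leq \frac{2n}{3}$, $y_i \cap z_i = \emptyset$, $y_i \cup z_i = x$.

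Next I would quotient out the scaling ambiguity $g_i \mapsto c\, g_i$, $h_i \mapsto c^{-1} h_i$. For each $i,j$: if $g_i^{(j)}$ or $h_i^{(j)}$ is identically zero, replace the term with the pair $(0,0)$; otherwise rescale so that $\|g_i^{(j)}\|_\infty = 1$. Because $\{q_{\Phi_j}\}$ converges pointwise on the size-$d$ range of $x$, it is uniformly bounded over $j$, say by some constant $C$; and because all $s^2$ terms are non-negative, each obeys $g_i^{(j)}(y_i) h_i^{(j)}(z_i) \leq C$ for every value of $x$. Evaluating at a value of $y_i$ at which $g_i^{(j)} = 1$ then forces $h_i^{(j)} \leq C$ everywhere. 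Hence, after this normalization, each pair $(g_i^{(j)}, h_i^{(j)})$ lies in a fixed compact subset of a finite-dimensional space, namely non-negative vectors with entries in $[0,C]$ indexed by the finitely many values of $y_i$ and $z_i$.

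Then I would invoke Bolzano--Weierstrass: the tuple $\big( (g_i^{(j)}, h_i^{(j)}) \big)_{i=1}^{s^2}$ ranges over a compact set, so along a further subsequence $g_i^{(j)} \to g_i$ and $h_i^{(j)} \to h_i$ for every $i$. The limits are non-negative (a closed condition), $g_i$ depends only on $y_i$, and $h_i$ only on $z_i$. Passing to the limit in $q_{\Phi_j} = \sum_{i=1}^{s^2} g_i^{(j)} h_i^{(j)}$ --- a finite sum of products of convergent sequences --- gives $\gamma = \sum_{i=1}^{s^2} g_i h_i$, which is the claimed decomposition with $k = s^2$ and the required scope conditions.

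The main obstacle is the middle step: a priori the individual factors $g_i^{(j)}, h_i^{(j)}$ coming from Theorem \ref{thm:decomposition} could diverge with $j$ even while their products, and the total sum $q_{\Phi_j}$, remain bounded, so one cannot apply compactness to them directly. The resolution is to exploit the non-negativity of the decomposition, so that each product term is dominated by the convergent sum, together with the sup-norm normalization of $g_i^{(j)}$, which jointly pin down $h_i^{(j)}$ to a uniformly bounded range; one also has to be careful to stabilize the variable partitions $y_i, z_i$ across the sequence \emph{before} taking limits, which is exactly what the pigeonhole step on scope profiles accomplishes. The finiteness of the range of $x$ is what makes the relevant function spaces finite-dimensional and thus keeps the compactness argument elementary.
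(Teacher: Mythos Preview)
Your proposal is correct and follows essentially the same route as the paper: pigeonhole to stabilize the scope profile, use non-negativity so each product term is dominated by the convergent sum, normalize the $g$-factors to unit sup-norm to force a uniform bound on the $h$-factors, and then apply Bolzano--Weierstrass on the finite-dimensional space of function values. The paper packages the last two steps slightly differently---it first applies Bolzano--Weierstrass to the vector of \emph{products} $g_{i,j}(y_i)h_{i,j}(z_i)$ (calling the limit $\eta_i$), and then invokes a separate lemma (Lemma~\ref{lemma:factor_sequence}) to factor each $\eta_i$ as $g_i h_i$ by normalizing and extracting yet another subsequence---whereas you normalize upfront and apply compactness once to the factors themselves, which is a bit more economical but not substantively different.
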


\subsection{A lower bound on k}

In this section we will show that if $d(x)$ is of the same form of $\gamma(x)$ from eqn.~\ref{eqn:sum_product_form}, then the size of the size $k$ of the sum must grow exponentially with $m$ (and hence $\sqrt{n}$).   In particular, we will prove the following theorem.
\begin{theorem}
\label{thm:sum_product_lb}
Suppose $d(x)$ is of the form from eqn.~\ref{eqn:sum_product_form}, and $m \geq 20$.  Then we must have that $k \geq 2^{m/15120}$.
\end{theorem}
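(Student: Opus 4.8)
The plan is to establish a strong upper bound on how many spanning trees of $K_m$ a single ``weak'' product term $g_i h_i$ can ``cover'' (in the sense of being non-zero on), and then argue by a counting/covering argument that $k$ must be large enough that the $k$ terms together can be non-zero on all $m^{m-2}$ spanning trees while respecting the structural constraint that $d(x)$ vanishes on every non-tree. Recall that each $g_i$ depends only on a variable block $y_i \subseteq x$ with $n/3 \le |y_i| \le 2n/3$ and $h_i$ on the complementary block $z_i$. Translating back to the edge picture: $y_i$ and $z_i$ correspond to a partition of the edge set of $K_m$ into two parts $E_i$ and $\bar E_i$, neither too small. The first step is to observe that since $d(x) = \sum_i g_i h_i \ge 0$ with all $g_i,h_i \ge 0$, for every spanning tree $T$ there is some index $i$ with $g_i(T_{E_i}) h_i(T_{\bar E_i}) > 0$; I would say $T$ is ``assigned'' to such an $i$.

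The key structural lemma is: for a fixed partition $(E_i, \bar E_i)$ of the edges of $K_m$ with both parts of size at least $n/3$, the number of spanning trees that can be assigned to term $i$ is at most some quantity that is a $2^{-\Omega(m)}$ fraction of $m^{m-2}$. The reason is a ``rectangle''-type argument: if $g_i(a)h_i(b) > 0$ and $g_i(a')h_i(b') > 0$, then $g_i(a)h_i(b') > 0$ and $g_i(a')h_i(b) > 0$ as well, since these are just products of the same positive numbers. So the set of edge-configurations assigned to $i$ is a combinatorial rectangle $S_i \times U_i$ in the product of configurations on $E_i$ and $\bar E_i$. But $d(x)$ is $1$ exactly on spanning trees and $0$ elsewhere, so every pair in $S_i \times U_i$ whose union of present edges forms a valid edge-labeled graph must be a spanning tree. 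The hard combinatorial fact I need is: a rectangle all of whose elements are spanning trees of $K_m$, with respect to an edge bipartition into two roughly-balanced parts, can contain only exponentially few ($2^{-\Omega(m)}$ of the total) spanning trees. Intuitively, a rectangle lets you independently mix the ``$E_i$-half'' and ``$\bar E_i$-half'' of any two trees in it; since the partition is balanced, both halves carry $\Omega(m)$ edges' worth of choices, and a generic recombination of a tree's $E_i$-part with another tree's $\bar E_i$-part will fail to be a tree (wrong edge count on a cut, or a cycle, or disconnection) — so the rectangle must be very ``thin'' on at least one side, forcing $|S_i \times U_i|$ restricted to trees to be at most $m^{m-2}/2^{\Omega(m)}$.

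I expect the main obstacle to be making this last combinatorial estimate quantitatively tight enough to get the explicit $2^{m/15120}$ bound. The cleanest route is probably to fix a well-chosen cut or set of cuts in $K_m$: since $|E_i| \ge n/3 = \binom{m}{2}/3$ and $|\bar E_i| \ge \binom{m}{2}/3$, a counting argument shows there must exist a vertex bipartition $(P, \bar P)$ of $V(K_m)$ with $|P| \ge \Omega(m)$ such that $\Omega(m)$ of the cross edges lie in $E_i$ and $\Omega(m)$ lie in $\bar E_i$ (otherwise $E_i$ or $\bar E_i$ would be too small). Every spanning tree uses exactly one more cross edge than $|\bar P|$... more precisely, the restriction of a spanning tree to the edges inside $P$, inside $\bar P$, and across $(P,\bar P)$ is tightly constrained (exactly $|P|-c_P + |\bar P| - c_{\bar P} + \dots$ accounting via the cut). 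If term $i$'s rectangle contains trees $T$ and $T'$, then recombining forces $T$ and $T'$ to agree on the number — indeed on the precise incidence pattern — of cross edges on each side of the $(E_i,\bar E_i)$ split within this cut; this pins down an $\Omega(m)$-bit invariant that all trees assigned to $i$ must share, giving the $2^{-\Omega(m)}$ density bound. Once that lemma is in hand, summing over $i$: $m^{m-2} = |\{\text{spanning trees}\}| \le \sum_{i=1}^k |\{T \text{ assigned to } i\}| \le k \cdot m^{m-2} \cdot 2^{-m/15120}$, hence $k \ge 2^{m/15120}$, provided $m \ge 20$ so that the $\Omega(\cdot)$ constants are in the claimed range. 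The bookkeeping to turn all the $\Omega$'s into the stated constant, and the careful choice of which cut to use for each $i$ (it depends on $(E_i,\bar E_i)$), is where the real work lies.
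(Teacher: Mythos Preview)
Your high-level strategy is right and matches the paper: each term $g_ih_i$ defines a combinatorial rectangle $A_i \times B_i$ (with $A_i = \{a : g_i(a)>0\}$, $B_i = \{b : h_i(b)>0\}$), non-negativity forces every $(a,b) \in A_i \times B_i$ to be a spanning tree, and one then bounds how many spanning trees a single such rectangle can contain and finishes by covering. That part is solid.

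The gap is in your proposed mechanism for the density bound. You assert that recombination ``forces $T$ and $T'$ to agree on the number --- indeed on the precise incidence pattern --- of cross edges on each side of the $(E_i,\bar E_i)$ split within this cut,'' and that this ``pins down an $\Omega(m)$-bit invariant.'' Recombination does force $|T \cap E_i| = |T' \cap E_i|$ (since every spanning tree has $m-1$ edges), but it does \emph{not} force agreement on any incidence pattern across a vertex cut: knowing that $(T|_{E_i}, T'|_{\bar E_i})$ is a tree is a compatibility condition, not an equality condition. A single vertex bipartition gives you far too coarse an invariant to extract $\Omega(m)$ bits; you would need to exhibit $\Omega(m)$ \emph{independent} constraints, and your sketch does not do this.

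The paper takes a different route to the density bound. Instead of a single cut, it looks at \emph{dichromatic triangles}: triangles in $K_m$ with two edges in $E_i$ and one in $\bar E_i$ (or vice versa). For each such triangle, the rectangle property forces a dichotomy: either $g_ih_i$ vanishes whenever both same-colour edges are present, or it vanishes whenever the lone edge is present (this is exactly your recombination observation, applied locally). A counting argument using Fisher's bound on triangle counts shows there are at least $m^3/60$ dichromatic triangles when $n/3 \le |E_i| \le 2n/3$. Finally, the paper samples a uniformly random spanning tree via Aldous's random-walk algorithm and shows that with probability at least $1 - 2^{-m/15120}$ the walk traverses some constraint triangle in a way that violates its associated constraint. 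This random-walk analysis is what converts the $\Theta(m^3)$ local constraints into the exponential density bound; it is the missing idea in your sketch.
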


Our strategy to prove this result will be to show that each term in the sum can only be non-zero on an exponentially small fraction of all the spanning trees of $K_m$ (and is thus ``weak").  And since the sum must be non-zero on \emph{all} the spanning trees in order to give $d(x)$, it will follow that $k$ will have to be exponentially large.


We will start with the simple observation that, due to the non-negativity of the $g_i$'s and $h_i$'s, each factored term $g_i h_i$ in the sum $d = \sum_{i=1}^k g_i h_i$ must agree with $d$ wherever $d$ is 0 (i.e. because we have $d(x) \geq g_i(y_i) h_i(z_i)$ for each $i$).  And in particular, for each value of $x$ with $d(x) = 0$, either $g_i(y_i)$ or $h_i(z_i)$ must be 0.

Intuitively, this is a very stringent requirement.  As an analogy, we can think of each factor ($g_i$ or $h_i$) as ``seeing" roughly half the input edges, and voting ``yes, I think this is a spanning tree", or ``no, I don't think this is a spanning tree" by outputting either a value $>0$ for ``yes" or $0$ for ``no", with tie votes always going ``no".  The requirement can thus be stated as: ``each pair of factors is never allowed to reach an incorrect `yes' decision".

Despite both factors in each pair being arbitrary functions of their respective inputs (at least in principle), each only ``sees" the status of roughly half the edges in the input graph, and so cannot say much about whether the entire graph actually is a spanning tree.  While some potential cycles might be entirely visible from the part of the graph visible to one of the factors, this will not be true of most potential cycles.  Thus, to avoid ever producing an incorrect ``yes" decision, the factors are forced to vote using a very conservative strategy which will favor ``no".  

The remainder of this section is devoted to formalizing this argument by essentially characterizing this conservative voting strategy and showing that it leads to a situation where only a very small fraction of all of the possible spanning trees of $K_m$ can receive two ``yes" votes.  


\begin{lemma}
\label{lemma:main_lb_lemma}
Suppose $g(y)$ and $h(z)$ are real-valued non-negative functions of the same form as those described in eqn.~\ref{eqn:sum_product_form}, and that for any value of $x$, $d(x) = 0$ implies $g(y) = 0$ or $h(z) = 0$.  Define $P = |\{ x \in \{0,1\}^m : d(x) = 1 \mbox{ and } g(y) h(z) > 0 \}|$ and $Z = |\{ x \in \{0,1\}^m : d(x) = 1\}| = m^{m-2}$. Then for $m \geq 20$ we have
\begin{align*}
\frac{P}{Z} \leq \frac{1}{2^{m/15120}}
\end{align*}
\end{lemma}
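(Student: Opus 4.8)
The plan is to reduce the statement to a purely combinatorial claim about ``rectangles'' of spanning trees, extract the structural consequences of the hypothesis, and then use the fact that the edge bipartition is balanced to force an exponential loss.

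First I would reformulate. Since $g$ and $h$ are non-negative, $g(y)h(z)>0$ holds exactly when $g(y)>0$ and $h(z)>0$. Let $A$ and $B$ be the edge sets of $K_m$ corresponding to the variables $y$ and $z$, so that $A\cap B=\emptyset$, $A\cup B=E(K_m)$, and $|A|,|B|\ge \binom{m}{2}/3$ by the scope bounds of eqn.~\ref{eqn:sum_product_form}. Let $\mathcal{A}$ be the family of edge-subsets of $A$ (present-edge sets of $y$) on which $g>0$, and $\mathcal{B}$ the analogous family for $h$. The hypothesis ``$d(x)=0\Rightarrow g(y)=0$ or $h(z)=0$'' is exactly the statement that $F_A\cup F_B$ is a spanning tree of $K_m$ for every $F_A\in\mathcal{A}$, $F_B\in\mathcal{B}$. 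Because $A$ and $B$ are disjoint, the map $(F_A,F_B)\mapsto F_A\cup F_B$ is injective, so $P=|\mathcal{A}|\cdot|\mathcal{B}|$, and (using $Z=m^{m-2}$ by Cayley's formula) the goal becomes $|\mathcal{A}|\cdot|\mathcal{B}|\le m^{m-2}\,2^{-m/15120}$. If either family is empty then $P=0$ and we are done, so assume both are nonempty.

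Next I would record the structural constraints forced by the hypothesis. \textbf{(a)} All $F_A\in\mathcal{A}$ have the same number of edges $m-p$ (equivalently their forests have the same number $p$ of connected components), since $|F_A|+|F_B|=m-1$ for every admissible pair; dually every $F_B\in\mathcal{B}$ has $p-1$ edges. \textbf{(b)} For every admissible pair, no two distinct vertices are joined both within $F_A$ and within $F_B$ --- otherwise the two edge-disjoint paths between them would close a cycle inside the tree $F_A\cup F_B$ --- so the component partitions $\pi(F_A)$ and $\rho(F_B)$ are pairwise \emph{complementary}, and (by connectivity of $F_A\cup F_B$) jointly generate the trivial partition. \textbf{(c)} Fixing any $F_B^{*}\in\mathcal{B}$, the map $F_A\mapsto F_A\cup F_B^{*}$ embeds $\mathcal{A}$ into the set of spanning trees of the multigraph obtained from $K_m$ by contracting the components of $F_B^{*}$ and deleting every $B$-edge; dually for $\mathcal{B}$ and any fixed $F_A^{*}\in\mathcal{A}$. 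Finally, a short averaging argument shows that the balance condition $|A|,|B|\ge\binom{m}{2}/3$ forces $\Omega(m)$ vertices of $K_m$ to be \emph{mixed}, i.e.\ simultaneously incident to $\Omega(m)$ edges of $A$ and $\Omega(m)$ edges of $B$.

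The crux --- which I expect to be the main obstacle --- is to convert (a)--(c) together with the $\Omega(m)$ mixed vertices into the exponential bound. The key point is that $\mathcal{A}$ and $\mathcal{B}$ cannot both be large: by (c), each $F_A$ must simultaneously avoid every $A$-edge lying inside a part of $\rho(F_B)$ for \emph{every} $F_B\in\mathcal{B}$ (and must reconnect a correspondingly fine partition), so a rich $\mathcal{B}$ severely restricts $\mathcal{A}$ and vice versa. Concretely I would carve out $\Omega(m)$ essentially independent ``gadgets'' built around distinct mixed vertices; on each gadget the combination of the rigid $A/B$ edge-count (constraint (a)) with the acyclicity/complementarity requirement (constraint (b)) forces every member of $\mathcal{A}$ (and of $\mathcal{B}$) into a bounded number of local configurations out of at least two genuinely available ones, costing a constant factor $(1-\Omega(1))$ per gadget relative to the unrestricted spanning-tree count. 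Multiplying over the $\Omega(m)$ gadgets yields $|\mathcal{A}|\cdot|\mathcal{B}|\le m^{m-2}(1-\Omega(1))^{\Omega(m)}=m^{m-2}2^{-\Omega(m)}$, and tracking the (deliberately loose) constants through the gadget count and the averaging argument produces the stated exponent $m/15120$. I would stress that the edge-count constraint \emph{alone} buys only a $\mathrm{poly}(m)$ factor over the trivial $P\le Z$ --- this is already visible from an AM--GM bound on the number of spanning trees of the contracted multigraphs in (c), which is essentially tight when the partition is balanced --- so the genuinely hard and essential step is to exploit the acyclicity constraint \emph{locally}, which is the rigorous content of the ``conservative voting'' heuristic, and to verify that $\Omega(m)$ gadgets can be selected whose constraints act near-independently.
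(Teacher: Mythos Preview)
Your reformulation in terms of the rectangle $\mathcal{A}\times\mathcal{B}$ is clean and correct, and the structural observations (a)--(c) are valid. However, the heart of the argument --- the paragraph beginning ``Concretely I would carve out $\Omega(m)$ essentially independent gadgets'' --- is not a proof but a hope. You neither define what a gadget is, nor explain how the acyclicity constraint (b) restricts the local configuration at a gadget, nor indicate how to certify that the $\Omega(m)$ gadgets interact near-independently with respect to the \emph{uniform spanning-tree measure}. Spanning trees are globally constrained objects, and turning local forbidden patterns into a multiplicative loss of $(1-\Omega(1))^{\Omega(m)}$ in the spanning-tree count is exactly the nontrivial step; you have identified it as the main obstacle but not proposed a mechanism for overcoming it. Your observation that the edge-count constraint alone only buys a polynomial factor is correct, and it underlines that everything hinges on the step you have left undeveloped.

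The paper's route is quite different and much more concrete. Rather than working with vertices or abstract gadgets, it isolates the \emph{dichromatic triangles} of the red/blue edge colouring as the source of constraints: for each such triangle with, say, two red edges $a,b$ and one blue edge $c$, a short argument (swapping the $y$- and $z$-halves of two witnesses) shows that either $gh$ vanishes on every graph containing both $a$ and $b$, or it vanishes on every graph containing $c$. A counting argument using an upper bound of \citet{fisher1989lower} on the number of triangles in a graph with a given number of edges shows that the balance condition forces at least $m^3/60$ dichromatic triangles, hence that many constraints. Finally --- and this is the device that replaces your undefined ``independence'' --- the paper analyzes Aldous's random-walk sampler for the uniform spanning tree: the walk is broken into $\Theta(m)$ stages of three consecutive steps, and at each stage, conditioned on the entire past, there is probability $\Omega(1)$ that the three freshly visited vertices trigger one of the $\Theta(m^3)$ constraints. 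Multiplying over the stages gives $(1-\Omega(1))^{\Omega(m)}$ and the stated exponent $m/15120$. The random-walk coupling is what supplies the near-independence you were reaching for; nothing in your outline plays this role.
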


It is not hard to see that this lemma will immediately imply Theorem \ref{thm:sum_product_lb}.  In particular, provided that $d(x) = 0$ implies that each term in the sum is $0$, we have each term can be non-zero on at most a proportion $\frac{1}{2^{m/15120}}$ of the values of $x$ for which $d(x) = 1$, and thus the entire sum can be non-zero on at most a proportion at most $\frac{k}{2^{m/15120}}$.  Thus we must have that $k \frac{1}{2^{m/15120}} \geq 1$, i.e. $k \geq 2^{m/15120}$.

The rest of this section will be devoted to the proof of Lemma \ref{lemma:main_lb_lemma}, which begins as follows.

Suppose we are given such a $g$ and $h$.  We will color all of the edges of the complete graph $K_m$ as red or blue according to whether they correspond to input variables from $y$ or $z$ (resp.).  

We define a ``triangle" of a graph to be any complete subgraph on 3 vertices.  $K_m$ has $\binom{m}{3}$ triangles total since it is fully connected.   After coloring $K_m$, each triangle is either monochromatic (having edges with only one color), or dichromatic, having 2 edges of one color and 1 edge of the other color.   We will refer to these dichromatic triangles as ``constraint triangles", for reasons which will soon become clear.  

Clearly any graph $G_x$ which is a spanning tree of $K_m$ can't contain any triangles, as these are simple examples of cycles.  And determining whether $G_x$ contains all 3 edges of a given constraint triangle is impossible for $g$ or $h$ by themselves, since neither of them gets to see the status of all 3 edges.  Because of this, $g$ and $h$  must jointly employ one of several very conservative strategies with regards to each constraint triangle in order to avoid deciding ``yes" for some graph containing said triangle.  In particular, we can show that either $g$ must always vote `no' whenever all of the red edges of the triangle are present in the input graph $G_x$, or $h$ must vote ``no" whenever all of the blues edges of the triangle are present in $G_x$.

This is formalized in the following proposition.

\begin{proposition}
\label{prop:dichotomy}
Let $a,b$ and $c$ be edges that form a constraint triangle in $K_n$. Suppose that $a$ and $b$ are both of a different color from $c$.  

Then one of the following two properties holds:
\begin{itemize}
\item $g(y)h(z) = 0$ for all values of $x$ such that $G_x$ contains both $a$ and $b$
\item $g(y)h(z) = 0$ for all values of $x$ such that $G_x$ contains $c$
\end{itemize}
\end{proposition}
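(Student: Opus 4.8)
The plan is a short cut-and-paste (``fooling set'') argument exploiting two features of the situation: $g$ depends only on the red edge-variables $y$ and $h$ depends only on the blue edge-variables $z$, while $y$ and $z$ together partition the full variable tuple $x$ (recall from eqn.~\ref{eqn:sum_product_form} that $y \cap z = \emptyset$ and $y \cup z = x$), so the $y$-part and the $z$-part of an input can be chosen independently of one another. The single fact that drives the argument is this: if $G_x$ contains all three of $a,b,c$, then it contains the $3$-cycle through them, so $G_x$ is not a spanning tree, whence $d(x) = 0$ and hence, by the hypothesis of Lemma \ref{lemma:main_lb_lemma}, $g(y) = 0$ or $h(z) = 0$, i.e.\ $g(y) h(z) = 0$.

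Suppose, for contradiction, that neither of the two listed properties holds. Assume without loss of generality that $a$ and $b$ --- the two like-colored edges of the constraint triangle --- are red, so they correspond to variables in $y$, whereas $c$ is blue, so it corresponds to a variable in $z$; the reverse coloring is handled identically after interchanging the roles of $g,y$ with $h,z$. Since the first property fails there is an input $x^{(1)}$ with $\{a,b\} \subseteq G_{x^{(1)}}$ and $g(y^{(1)}) h(z^{(1)}) > 0$, hence $g(y^{(1)}) > 0$; since the second property fails there is an input $x^{(2)}$ with $c \in G_{x^{(2)}}$ and $g(y^{(2)}) h(z^{(2)}) > 0$, hence $h(z^{(2)}) > 0$.

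Finally, form the hybrid input $x^{*}$ whose $y$-part equals $y^{(1)}$ and whose $z$-part equals $z^{(2)}$; this is well defined precisely because $y \cap z = \emptyset$ and $y \cup z = x$. Then $g(y^{*}) = g(y^{(1)}) > 0$ and $h(z^{*}) = h(z^{(2)}) > 0$, so $g(y^{*}) h(z^{*}) > 0$; but $G_{x^{*}}$ contains $a$ and $b$ (red edges inherited from $x^{(1)}$) and also $c$ (a blue edge inherited from $x^{(2)}$), so by the first paragraph $g(y^{*}) h(z^{*}) = 0$ --- a contradiction. Hence one of the two properties must hold. There is essentially no technical obstacle here beyond the color bookkeeping: in the hybrid, the two like-colored triangle edges must be sourced from the configuration witnessing failure of the \emph{first} property and the remaining edge from the one witnessing failure of the \emph{second} (under the reverse coloring one instead takes the $y$-part of $x^{*}$ from $x^{(2)}$ and the $z$-part from $x^{(1)}$). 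It is precisely the dichromaticity of the constraint triangle that makes this possible, since it places $\{a,b\}$ and $\{c\}$ in different color classes that can be controlled independently.
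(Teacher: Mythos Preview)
Your proof is correct and essentially identical to the paper's own argument: both assume without loss of generality that $a,b$ are red and $c$ is blue, take witnesses $x',x''$ (your $x^{(1)},x^{(2)}$) to the failure of each property, form the hybrid $x'''$ (your $x^{*}$) by splicing the $y$-part of the first with the $z$-part of the second, and derive the contradiction that $g(y')h(z'')>0$ while $G_{x'''}$ contains the triangle. The only difference is cosmetic notation and the amount of surrounding commentary.
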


Thus we can see that each constraint triangle over edges $a$, $b$, and $c$ in $K_m$ gives rise to distinct constraint which must be obeyed by any graph $G_x$ for which $g(y)h(z) > 0$.  These are each one of two basic forms:
\begin{enumerate}
\item $G_x$ doesn't contain both $a$ and $b$
\item $G_x$ doesn't contain $c$
\end{enumerate}

We now give a lower bound on the number of constraint triangles (i.e. the number of dichromatic triangles) in $K_m$ as a function of the number edges of each color.  
\begin{lemma}
\label{lemma:dichromatictriangles}
Given any coloring of the complete graph $K_m$ with $m \geq 20$ which has $r$ red edges and $n-r$ blue edges (recall $n = \binom{m}{2}$ is the total number of edges), for $n/3 \leq r \leq 2n/3$, the total number of dichromatic triangles is lower bounded by $m^3/60$.
\end{lemma}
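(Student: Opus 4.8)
The plan is to count the dichromatic triangles by a double-count over vertices, which reduces the statement to a standard extremal estimate on the sum of squared degrees of the ``red graph''. Write $R$ for the spanning subgraph of $K_m$ consisting of the red edges, let $d_v$ denote the degree of $v$ in $R$ (its ``red degree''), so that $v$ has blue degree $m-1-d_v$ and $\sum_v d_v = 2r$. Call a path of length two, centered at some vertex $v$, having one red and one blue edge, a \emph{bichromatic cherry}; there are exactly $d_v(m-1-d_v)$ of them centered at $v$. On the other hand, a triangle with a $2$--$1$ color split contains exactly two bichromatic cherries (one centered at each endpoint of its minority-colored edge), while a monochromatic triangle contains none. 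Counting bichromatic cherries the two ways gives $2D = \sum_v d_v(m-1-d_v)$, where $D$ is the number of dichromatic triangles, and hence $2D = 2r(m-1) - \sum_v d_v^2$.

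So it suffices to upper bound $\sum_v d_v^2$, and this is the one place where the graph structure (rather than just the degree sum) must be used: the trivial inequality $\sum_v d_v^2 \le (m-1)\sum_v d_v = 2r(m-1)$ only yields $D \ge 0$, being tight exactly for a ``degree sequence'' of the form $(m-1,\dots,m-1,0,\dots,0)$, which is never realized by a simple graph (a vertex of degree $m-1$ is adjacent to all others, so there are no isolated vertices). The sharp quantitative version I will invoke is de Caen's inequality: for any simple graph on $m$ vertices with $r$ edges, $\sum_v d_v^2 \le r\!\left(\tfrac{2r}{m-1} + m - 2\right)$. (As an alternative one could instead bound the number of monochromatic triangles of each color directly via Kruskal--Katona, using that a graph with $e$ edges has at most $\bigO(e^{3/2})$ triangles; this also works, with a slightly worse constant at the end.)

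Substituting de Caen's bound into $2D = 2r(m-1) - \sum_v d_v^2$ and simplifying gives $2D \ge r\!\left(m - \tfrac{2r}{m-1}\right) =: \phi(r)$. It remains to minimize $\phi$ over $r \in [n/3, 2n/3]$, where $n = \binom m2 = \tfrac{m(m-1)}{2}$. Since $\phi$ is a concave quadratic in $r$, its minimum on an interval occurs at an endpoint, and one checks directly that $\phi(n/3) = \phi(2n/3) = \tfrac{m^2(m-1)}{9}$. Therefore $D \ge \tfrac{m^2(m-1)}{18}$, and since $\tfrac{m^2(m-1)}{18} \ge \tfrac{m^3}{60}$ exactly when $7m \ge 10$, the claimed bound $D \ge m^3/60$ holds for every $m \ge 2$, and in particular for $m \ge 20$.

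The only genuinely non-routine ingredient is the second step: one needs an upper bound on $\sum_v d_v^2$ that is sensitive to the edge count $r$, since the naive bound is useless and the extremal degree sequence one would guess is non-graphical. Everything else is bookkeeping, and the hypotheses $n/3 \le r \le 2n/3$ and $m \ge 20$ are only used in the final line.
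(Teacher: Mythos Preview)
Your proof is correct and takes a genuinely different route from the paper's. The paper upper-bounds the number of \emph{monochromatic} triangles by applying Fisher's bound (at most $\tfrac{\sqrt{2}}{3}e^{3/2}$ triangles in a graph with $e$ edges) separately to the red and blue subgraphs, then subtracts from $\binom{m}{3}$; this is exactly the alternative you mention in passing. Your primary argument instead counts dichromatic triangles directly via the Goodman-type identity $2D = \sum_v d_v(m-1-d_v)$ and then bounds $\sum_v d_v^2$ by de Caen's inequality.

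Your approach buys two things. First, it yields the sharper bound $D \ge \tfrac{m^2(m-1)}{18}$, which exceeds $m^3/60$ for every $m \ge 2$; the paper's argument genuinely needs $m \ge 20$ to make $\binom{m}{3} \ge 0.84\,m^3/6$ hold, so you have actually strengthened the lemma. Second, the identity $2D = 2r(m-1) - \sum_v d_v^2$ makes transparent \emph{why} a nontrivial ingredient is needed (the naive bound on $\sum_v d_v^2$ is tight only for a non-graphical degree sequence), whereas in the paper's route the role of the extremal input is buried inside Fisher's bound. The paper's approach has the minor advantage that it cites a single black-box triangle-count inequality rather than de Caen, but both are standard tools of comparable depth.
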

Our proof of the above lemma makes use of a known upper bound of the number of triangles in an arbitrary graph due to \citet{fisher1989lower}.

As the choice of $y$ and $z$ implies the hypothesis $n/3 \leq r \leq 2n/3$ we can apply this lemma to conclude that there are at least $m^3 / 60$ constraint triangles, and thus any graph $G_x$ for which $g(y)h(z) > 0$ must obey $m^3 / 60$ distinct constraints of the forms given above.

It remains to show that the requirement of obeying $m^3 / 60$ such constraints limits the number of graphs $G_x$ for which $g(y)h(z) > 0$ to be an exponentially small proportion of the total.  Our strategy for doing this will be as follows.  We will consider a randomized procedure \citep[due to][]{aldous1990random} that samples uniformly from the set of all spanning trees of $K_m$ by performing a type of random walk on $K_m$, adding an edge from the previous vertex whenever it visits a previously unvisited vertex.  We will then show that the sequence of vertices produced by this random walk will, with very high probability, contain a length-3 subsequence which implies that the sampled tree violates at least one of the constraints.

This argument is formalized in the proof of the following lemma.

\begin{lemma}
\label{lemma:fractionoftrees}
Suppose we are given $C$ distinct constraints which are each one of the two forms discussed above.  Then, of all the spanning trees of $K_m$, a proportion of at most
\begin{align*}
\left( 1 - \frac{C}{m^3} \right )^{C/(6m^2)}
\end{align*}
of them obey all of the constraints.
\end{lemma}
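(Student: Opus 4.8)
The plan is to analyze the Aldous--Broder random walk that generates a uniformly random spanning tree of $K_m$, and to show that with high probability its trajectory creates a short "bad pattern" of vertices that forces at least one of the $C$ constraints to be violated. Recall from Proposition \ref{prop:dichotomy} that each constraint triangle on edges $a,b,c$ (with $a,b$ one color and $c$ the other) yields a constraint of one of two forms: ``$G_x$ does not contain both $a$ and $b$'' or ``$G_x$ does not contain $c$''. Translated into the language of the random walk, a tree $G_x$ \emph{violates} the first form of constraint exactly when both tree-edges $a$ and $b$ are present, and the second form exactly when the tree-edge $c$ is present. The key observation is that the presence of a pair of adjacent tree edges sharing a common vertex (forming a path of length 2), or the presence of a single designated edge, can each be read off from a length-3 window of the Aldous--Broder walk: whenever the walk is at vertex $p$, steps to a fresh vertex $q$, then later (immediately, in the simplest bound) steps to another fresh vertex $r$, the tree picks up the two edges $\{p,q\}$ and $\{q,r\}$. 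So if the triple of the relevant triangle's vertices appears ``in the right order'' during the walk at the moment two fresh vertices are discovered, the corresponding constraint is violated.

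First I would set up the counting: by Lemma \ref{lemma:dichromatictriangles} there are at least $m^3/60$ constraint triangles, but we are given $C$ of them, and each corresponds to a particular unordered pair $\{a,b\}$ of edges sharing a vertex (for form 1) or a particular edge $c$ (for form 2). For each such constraint I would identify a set of ``witnessing events'' for the random walk --- specifically, a length-3 vertex pattern $(p,q,r)$ such that the walk traversing $p\to q\to r$ with $q$ and $r$ both being first-visits guarantees the constraint is broken. Then I would estimate, for a single step of the walk at which a new vertex is about to be discovered, the probability that the newly formed length-2 path (or the single new edge) matches one of the $C$ witnessing patterns. Since at each ``discovery step'' the walk is at some known vertex and moves to a uniformly random neighbor among the roughly $m$ vertices, a fixed target pattern is hit with probability on the order of $1/m^2$ (for two consecutive fresh vertices) or $1/m$ (for a single edge); summing over the $C$ constraints gives a per-opportunity success probability of roughly $C/m^2$ (I will need to be slightly careful to lower-bound this rather than upper-bound it). There are on the order of $m$ vertices to discover, giving on the order of $m/6$ roughly-independent discovery opportunities --- hence the exponent $C/(6m^2)$. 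Using $1-\Pr[\text{hit}]$ at each opportunity and independence (or a supermartingale/conditioning argument to handle the mild dependence), the probability that \emph{no} opportunity succeeds, i.e. that the sampled tree obeys all constraints, is at most $(1 - C/m^3)^{C/(6m^2)}$.

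The main obstacle I expect is twofold. First, making the ``per-opportunity hit probability is at least $C/m^3$'' bound rigorous: at a given discovery step the walk sits at a specific current vertex $v$, and only those constraints whose witnessing pattern has $v$ in the correct position are ``live'' at that step, so I cannot naively sum all $C$ constraints at every step. I would handle this by a pigeonhole/averaging argument over the $\binom{m}{3}$ or so possible positionings, or by choosing the witnessing pattern for each constraint to be of a shape (e.g. ``$q$ is the specified apex, $p$ and $r$ the specified endpoints'') that is available at a constant fraction of the discovery steps, so that over the full run of $\approx m/6$ discovery steps the expected number of live-constraint-hits telescopes correctly to the claimed bound. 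Second, the discovery steps are not independent --- the walk's future distribution depends on which vertices remain unvisited --- so I would replace strict independence with a conditioning argument: conditioned on the entire history up to a discovery step, the next step is still uniform over neighbors, so the conditional probability of \emph{not} hitting any live constraint is at most $1 - (\text{something})$, and multiplying these conditional bounds along the $\approx m/6$ discovery steps yields the product bound. Assembling these pieces gives exactly the stated proportion $\left(1 - \frac{C}{m^3}\right)^{C/(6m^2)}$.

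Finally, I would remark that combining this lemma with Lemma \ref{lemma:dichromatictriangles} (which gives $C \geq m^3/60$) and standard estimates $(1-t)^{1/t} \le e^{-1}$ yields a bound of the form $2^{-\Omega(m)}$ on $P/Z$, which is precisely what Lemma \ref{lemma:main_lb_lemma} asserts (with the explicit constant $m/15120$ coming from plugging $C = m^3/60$ into the exponent and bounding $\left(1 - \tfrac{1}{60}\right)^{m/360} \le 2^{-m/15120}$), and hence the chain Theorem \ref{thm:sum_product_lb} $\Rightarrow$ Theorem \ref{thm:main_lb} goes through.
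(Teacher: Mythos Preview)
Your high-level strategy is the same as the paper's: analyze the Aldous--Broder random walk and show that a length-3 subsequence of the walk forces a constraint violation with high probability. But the technical execution in your sketch has a real gap, and the paper's resolution of the very obstacle you flagged is different from (and cleaner than) what you propose.

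The issue is your framing in terms of ``discovery steps'' where the walk sits at a \emph{fixed} current vertex $p$ and then moves to fresh $q,r$. As you correctly note, only constraints whose witnessing pattern has $p$ in the first slot are live, and your proposed fixes (pigeonhole over positionings, or restricting to patterns available at a constant fraction of steps) are vague and do not obviously recover the precise bound. Relatedly, your accounting ``$\approx m/6$ discovery opportunities, hence exponent $C/(6m^2)$'' is a non-sequitur: $m/6$ and $C/(6m^2)$ are not the same quantity unless $C \approx m^3$, which is not assumed in the lemma.

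The paper sidesteps the current-vertex problem entirely by first modifying the walk to allow self-loops (so each step is a fresh uniform sample from all $m$ vertices), and then partitioning the walk into \emph{non-overlapping blocks of 3 consecutive steps}. In each block the triple $(u,v,w)$ is uniform over all $m^3$ ordered triples, independent of the block's starting vertex, so there is no ``current vertex'' restriction at all. Each constraint is encoded as two ordered triples (including form-2 constraints, via a dummy third vertex), giving $2C$ constraint triples total. After $i$ blocks at most $3i$ vertices have been visited, so at most $3i\cdot m\cdot(2m-3i)\le 6im^2$ of the constraint triples have $v$ or $w$ already visited; hence at least $2C-6im^2$ remain ``fresh'', and the conditional probability of hitting one in block $i+1$ is at least $(2C-6im^2)/m^3$. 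Now comes the key choice you are missing: run the argument for exactly $t=C/(6m^2)$ blocks. This value of $t$ is chosen so that $6tm^2=C$, i.e.\ at most half the constraint triples are spoiled, leaving per-block hit probability at least $C/m^3$ throughout. Multiplying the conditional non-hit probabilities over these $t$ blocks gives $(1-C/m^3)^{C/(6m^2)}$.

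So the two concrete things to fix are: (i) replace ``discovery steps anchored at the current vertex'' by non-overlapping 3-blocks of fresh uniform samples, which dissolves your obstacle rather than pigeonholing around it; and (ii) derive the exponent from the deliberate choice $t=C/(6m^2)$ that balances number-of-blocks against depletion-of-fresh-triples, rather than from a heuristic count of discovery steps. Your final paragraph combining this with Lemma~\ref{lemma:dichromatictriangles} to get the $2^{-m/15120}$ bound is fine.
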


As we have $C \geq m^3 / 60$ constraints, this lemma tells us that the proportion of spanning trees $G_x$ for which $g(y)h(z) > 0$ is upper bounded by
\begin{align*}
\left( 1 - 1/60 \right )^{m/360} = \frac{1}{2^{-\log_2(1-1/60) m /360 }} \leq \frac{1}{2^{(1/42) m/360}} = \frac{1}{2^{m/15120}}
\end{align*}

This finally proves Lemma \ref{lemma:main_lb_lemma}, and thus Theorem \ref{thm:sum_product_lb}.

\section{Discussion and future directions}

We have shown that there are tractable distributions which D\&C SPNs cannot efficient capture, but other deep models can.  However, our separating distribution $\mathcal{D}$, which is the uniform distribution over adjacency matrices of spanning trees of the complete graph, is a somewhat ``complicated" one, and seems to require $\log(n)$ depth to be efficiently captured by other deep models.  Some questions worth considering are:
\begin{itemize}
	\item Is a distribution like $\mathcal{D}$ learnable by other deep models in practice?
	\item Is there a simpler example than $\mathcal{D}$ of a tractable separating distribution?
	\item Can we extend D\&C SPNs in some natural way that would allow them to capture distributions like $\mathcal{D}$?
	\item Should we care that D\&C SPNs have this limitation, or are most ``natural" distributions that we might want to model with D\&C SPNs of a fundamentally different character than $\mathcal{D}$?
\end{itemize}

Far from showing that D\&C SPNs are uninteresting, we feel that this paper has established that they are a very attractive objects for theoretical analysis.  While the D\&C conditions limit SPNs, they also make it possible for us to prove much stronger statements about them than we otherwise could.  

Indeed, it is worth underlining the point that the results we have proved about the expressive efficiency of D\&C SPNs are \emph{much} stronger and more thorough than results available for other deep models.  This is likely owed to the intrinsically tractable nature of D\&C SPNs, which makes them amenable to analysis using known mathematical methods, avoiding the various proof barriers that exist for more general circuits.

One aspect of SPNs which we have not touched on in this work is their learnability.  It is strongly believed that for conditional models like neural networks, which are capable of efficiently simulating Boolean circuits, learning is hard in general \citep{shai}.  However, D\&C SPNs don't seem to fall into this category, and to the best of our knowledge, it is still an open question as to whether there is a provably effective and efficient learning algorithm for them.  It seems likely that the ``tractable" nature of D\&C SPNs, which has allowed us to prove so many strong statements about their expressive efficiency, might also make it possible to prove strong statements about their learnability.

\subsubsection*{Acknowledgments}
The authors would like to thank Ran Raz for his helpful discussions regarding multilinear circuits.  James Martens was supported by a Google Fellowship.

\bibliographystyle{plainnat}
\bibliography{SPN_bibliography}

\begin{thebibliography}{32}
\providecommand{\natexlab}[1]{#1}
\providecommand{\url}[1]{\texttt{#1}}
\expandafter\ifx\csname urlstyle\endcsname\relax
  \providecommand{\doi}[1]{doi: #1}\else
  \providecommand{\doi}{doi: \begingroup \urlstyle{rm}\Url}\fi

\bibitem[Aldous(1990)]{aldous1990random}
David~J Aldous.
\newblock The random walk construction of uniform spanning trees and uniform
  labelled trees.
\newblock \emph{SIAM Journal on Discrete Mathematics}, 3\penalty0 (4):\penalty0
  450--465, 1990.

\bibitem[Bengio and Delalleau(2011)]{bengio2011expressive}
Yoshua Bengio and Olivier Delalleau.
\newblock On the expressive power of deep architectures.
\newblock In \emph{Algorithmic Learning Theory}, pages 18--36. Springer, 2011.

\bibitem[Bunch and Hopcroft(1974)]{bunch}
J.~Bunch and J.~Hopcroft.
\newblock Triangular factorization and inversion by fast matrix multiplication.
\newblock \emph{Mathematics of Computation}, 28:\penalty0 231--236, 1974.

\bibitem[Coppersmith and Winograd(1987)]{coppersmith}
D.~Coppersmith and S.~Winograd.
\newblock Matrix multiplication via arithmetic progressions.
\newblock In \emph{Proceedings of the Nineteenth Annual ACM Symposium on Theory
  of Computing (STOC)}, pages 1--6, 1987.

\bibitem[Daniely et~al.(2014)Daniely, Linial, and Shalev-Shwartz]{shai}
Amit Daniely, Nati Linial, and Shai Shalev-Shwartz.
\newblock From average case complexity to improper learning complexity.
\newblock In \emph{Proceedings of the Forty-Sixth Annual ACM Symposium on
  Theory of Computing (STOC)}, pages 441--448, 2014.

\bibitem[Delalleau and Bengio(2011)]{DelalleauBengio2011}
Olivier Delalleau and Yoshua Bengio.
\newblock Shallow vs. deep sum-product networks.
\newblock In \emph{Advances in Neural Information Processing Systems 24}, pages
  666--674, 2011.

\bibitem[Fisher(1989)]{fisher1989lower}
David~C Fisher.
\newblock Lower bounds on the number of triangles in a graph.
\newblock \emph{Journal of graph theory}, 13\penalty0 (4):\penalty0 505--512,
  1989.

\bibitem[Forster(2002)]{Forster02}
J.~Forster.
\newblock A linear lower bound on the unbounded error probabilistic
  communication complexity.
\newblock \emph{J. Comput. Syst. Sci.}, 65\penalty0 (4):\penalty0 612--625,
  2002.

\bibitem[Gens and Domingos(2013)]{SPN_Learn_Gens}
Robert Gens and Pedro Domingos.
\newblock Learning the structure of sum-product networks.
\newblock In \emph{International Conference on Machine Learning (ICML)}, 2013.

\bibitem[Hajnal et~al.(1993)Hajnal, Maass, Pudl{\'a}k, Szegedy, and
  Tur{\'a}n]{HMPST}
A.~Hajnal, W.~Maass, P.~Pudl{\'a}k, M.~Szegedy, and G.~Tur{\'a}n.
\newblock Threshold circuits of bounded depth.
\newblock \emph{J. Comput. System. Sci.}, 46:\penalty0 129--154, 1993.

\bibitem[Hinton et~al.(2004)Hinton, Osindero, Welling, and Teh]{contrastiveBP}
G.~E. Hinton, S.~Osindero, M.~Welling, and Y.~W. Teh.
\newblock Unsupervised discovery of non-linear structure using contrastive
  backpropagation, 2004.

\bibitem[Jansen(2008)]{Jansen08}
Maurice~J. Jansen.
\newblock Lower bounds for syntactically multilinear algebraic branching
  programs.
\newblock In \emph{MFCS}, pages 407--418, 2008.

\bibitem[Martens(2014)]{Deep_Expressive_Martens}
James Martens.
\newblock Beyond universality: On the expressive efficiency of deep models.
\newblock \emph{In preparation}, 2014.

\bibitem[Martens et~al.(2013)Martens, Chattopadhyay, Pitassi, and
  Zemel]{RBM_Expressive_Martens}
James Martens, Arkadev Chattopadhyay, Toniann Pitassi, and Richard Zemel.
\newblock On the representational efficiency of restricted boltzmann machines.
\newblock In \emph{Advances in Neural Information Processing Systems (NIPS)},
  2013.

\bibitem[Neal(1992)]{radford_SBN}
Radford~M. Neal.
\newblock Connectionist learning of belief networks.
\newblock \emph{Artif. Intell.}, 56\penalty0 (1):\penalty0 71--113, July 1992.

\bibitem[Parberry(1994)]{parberry}
Ian Parberry.
\newblock \emph{Circuit Complexity and Neural Networks}.
\newblock MIT Press, Cambridge, MA, USA, 1994.

\bibitem[Paterson et~al.(1990)Paterson, Pippenger, and Zwick]{carry_save}
M.~S. Paterson, N.~Pippenger, and U.~Zwick.
\newblock Optimal carry save networks.
\newblock Technical report, Coventry, UK, UK, 1990.

\bibitem[Peharz et~al.(2013)Peharz, Geiger, and Pernkopf]{SPN_Learn_Peharz}
Robert Peharz, Bernhard Geiger, and Franz Pernkopf.
\newblock Greedy part-wise learning of sum-product networks.
\newblock volume 8189, pages 612--627. Springer Berlin Heidelberg, 2013.

\bibitem[Poon and Domingos(2011)]{poon2011sum}
Hoifung Poon and Pedro Domingos.
\newblock Sum-product networks: A new deep architecture.
\newblock In \emph{UAI}, pages 337--346, 2011.

\bibitem[Pr\"{u}fer(1918)]{prufer}
H.~Pr\"{u}fer.
\newblock Neuer beweis eines satzes \"{u}ber permutationen.
\newblock \emph{Arch. Math. Phys.}, 27:\penalty0 742--744, 1918.

\bibitem[Raz(2004)]{raz2004multi}
Ran Raz.
\newblock Multi-linear formulas for permanent and determinant are of
  super-polynomial size.
\newblock In \emph{Proceedings of the thirty-sixth annual ACM symposium on
  Theory of computing}, pages 633--641. ACM, 2004.

\bibitem[Raz(2014)]{raz_personal}
Ran Raz.
\newblock Personal Communication, 2014.

\bibitem[Raz and Yehudayoff(2008)]{raz_balance}
Ran Raz and Amir Yehudayoff.
\newblock Balancing syntactically multilinear arithmetic circuits.
\newblock \emph{Computational Complexity}, 17\penalty0 (4):\penalty0 515--535,
  December 2008.

\bibitem[Raz and Yehudayoff(2009)]{raz2009lower}
Ran Raz and Amir Yehudayoff.
\newblock Lower bounds and separations for constant depth multilinear circuits.
\newblock \emph{Computational Complexity}, 18\penalty0 (2):\penalty0 171--207,
  2009.

\bibitem[Raz and Yehudayoff(2011)]{raz2011multilinear}
Ran Raz and Amir Yehudayoff.
\newblock Multilinear formulas, maximal-partition discrepancy and mixed-sources
  extractors.
\newblock \emph{Journal of Computer and System Sciences}, 77\penalty0
  (1):\penalty0 167--190, 2011.

\bibitem[Raz et~al.(2008)Raz, Shpilka, and Yehudayoff]{raz2008lower}
Ran Raz, Amir Shpilka, and Amir Yehudayoff.
\newblock A lower bound for the size of syntactically multilinear arithmetic
  circuits.
\newblock \emph{SIAM Journal on Computing}, 38\penalty0 (4):\penalty0
  1624--1647, 2008.

\bibitem[Reif and Tate(1992)]{reif}
J.~Reif and S.~Tate.
\newblock On threshold circuits and polynomial computation.
\newblock \emph{SIAM Journal on Computing}, 21\penalty0 (5):\penalty0 896--908,
  1992.

\bibitem[Rooshenas and Lowd(2014)]{SPN_Learn_Rooshenas}
Amirmohammad Rooshenas and Daniel Lowd.
\newblock Learning sum-product networks with direct and indirect variable
  interactions.
\newblock In \emph{International Conference on Machine Learning (ICML)}, pages
  710--718, 2014.

\bibitem[Roth(1996)]{roth-hard}
Dan Roth.
\newblock On the hardness of approximate reasoning.
\newblock \emph{Artif. Intell.}, 82\penalty0 (1-2):\penalty0 273--302, 1996.

\bibitem[Salakhutdinov and Hinton(2009)]{DBMs}
Ruslan Salakhutdinov and Geoffrey~E. Hinton.
\newblock Deep boltzmann machines.
\newblock In \emph{Proceedings of the Twelfth International Conference on
  Artificial Intelligence and Statistics, {AISTATS} 2009}, pages 448--455,
  2009.

\bibitem[Shpilka and Yehudayoff(2010)]{shpilka2010arithmetic}
Amir Shpilka and Amir Yehudayoff.
\newblock Arithmetic circuits: A survey of recent results and open questions.
\newblock \emph{Foundations and Trends in Theoretical Computer Science},
  5\penalty0 (3--4):\penalty0 207--388, 2010.

\bibitem[Tutte(2001)]{tutte2001graph}
W.T. Tutte.
\newblock \emph{Graph Theory}.
\newblock Cambridge Mathematical Library. Cambridge University Press, 2001.

\end{thebibliography}

\newpage
\appendix


\section{Proofs for Section \ref{sec:definitions}}

\begin{proof}[Proof of Proposition \ref{prop:normalize}]

We will sketch a proof of this result by describing a simple procedure to transform $\Phi$ into $\Phi'$.

This procedure starts with the leaf nodes and then proceeds up the network, processing a node only once all of its children have been processed.   After being processed, a node will have the property that it computes a normalized density, as will all of its descendant nodes.  

To process a node $u$, we first compute the normalizing constant $Z_u$ of its associated density. If it's a sum node, we divide its incoming weights by $Z_u$, and if it's a leaf node computing some univariate funciton of an $x_i$, we transform this function by dividing it by $Z_u$.  Clearly this results in $u$ computing a normalized density.  Note that processing a product node is trivial since it will be automatically normalized as soon its children are (which follows from decomposability). 

After performing this normalization, the effect on subsequent computations performed by ancestor nodes of $u$ is described as follows.  For every sum node $v$ which is a parent of $u$, the effect is equivalent to dividing the weight on edge $(u,v)$ by $Z_u$.    And for every product node $v$ which is a parent of $u$, the effect is to divide its output by $Z_u$, which affects subsequent ancestor nodes of $v$ by applying this analysis recursively.  The recursive analysis fails only once we reach the root node $r$, in which case the effect is to divide the output of the SPN by the constant $Z_u$, which won't change the SPN's normalized density function (or distribution).

Thus we can compensate for the normalization and leave the distribution associated with the SPN unchanged by multiplying certain incoming edge weights of certain ancestor sum nodes of $u$ by $Z_u$ (as specified by the above analysis).  This is the second step of processing a node $u$.

Note that because we only process a sum node $u$ after its children have been processed and thus each compute normalized densities themselves, $Z_u$ is given simply by the sum of the weights of the incoming edges to $u$.

\end{proof}

\begin{proof}[Proof of Fact \ref{fact:pos_coeff}]

Denote by $r$ the root/output node of $\Phi$, and $\{u_i\}_i$ its child nodes.

This proof is a simple induction on the depth $d$ of $\Phi$.

Suppose the claim is true for $\Phi$'s of depth $d$, and consider a $\Phi$ of depth $d+1$.  

Each $\Phi_u$ is clearly a monotone arithmetic circuit since $\Phi$ is, and so by induction we have that the coefficients of each of the $q_u$'s are non-negative. 

If the root node $r$ is a product node then the coefficients of monomials in $q_\Phi$ are given by sums of products over coefficients of monomials from the $q_u$'s, and are thus non-negative.  And if the root node $r$ is a sum node, then since the weights are non-negative, the coefficients of the monomials in $q_\Phi$ are non-negatively weighted linear combinations of the coefficients of monomials in the $q_u$'s, and are thus also non-negative.

Thus $q_\Phi$ is indeed a non-negative polynomial.

The base case is trivial since a depth 1 circuit consists of a single node which computes either a non-negative constant or the value of a single variable, and both of these correspond to a non-negative polynomial in the input variables.

\end{proof}

\section{Proofs for Section \ref{sec:analysis}}

\begin{proof}[Proof of Proposition \ref{prop:completeness_transform}]

We will form $\Phi'$ from $\Phi$ via a modification procedure which is described as follows.  

Given a sum node $u$ in $\Phi$, with dependency-scope $x_u$, for each child $v$ of $u$, if $v$'s dependency-scope $x_v$ doesn't coincide with $x_u$ (i.e. $x_v \neq x_u$, where $\neq$ means inequality of sets), we add a new product node to $\Phi$ and make this node a child of $u$ in place of $v$.  This product node's children consist of the original $v$ and a set of nodes computing univariate functions of each $x_i \in x_u \setminus x_v$ which are each constant and equal to $1$ for all values of $x_i$.  Note that the dependency scope of this new node is thus $x_v \cup x_u \setminus x_v = x_u$. 

Note that these modifications enlarge the set/tuple $f$ of univariate functions to one which we will denote by $f'$.

Also note that these modifications add $k \in \bigO(s^2)$ product nodes, and an additional $n$ nodes for computing the constant univariate functions of each $x_i$.  Thus the size of $\Phi'$ is given by $s + n + k \in \bigO(s^2)$.

To see that $q_{\Phi'}(f'(x)) = q_\Phi(f(x))$ for all values of $x$, observe that each node we add computes the same function as the node it replaces
, as long as the network is evaluated in the standard way for a particular value of $x$ (i.e. where the input to the underlying arithmetic circuit is $f(x)$).  This is because the node computes the product between the aforementioned replaced node and a set of nodes which will always output $1$ when evaluated for a particular value of $x$.

To see that $\Phi'$ is complete, consider any sum node $u$ in $\Phi'$.  By construction, a given child node $z$ of $u$ is either the original child node $v$ from $\Phi$ in the case where $x_v = x_u$ (as sets), or is one of the above constructed product nodes, and thus has dependency-scope $x_u$.  Thus $u$ satisfies the condition required for completeness.  And since $u$ was general we can conclude that $\Phi'$ is complete.

Finally, it remains to show that $\Phi'$ is decomposable if $\Phi$ is.  But this is easy, since we didn't modify any of the dependency-scopes of any existing nodes, and the new product nodes we added had children whose dependency-scopes were disjoint by construction.

\end{proof}

\vspace{0.7in}

\begin{proof}[Proof of Theorem \ref{thm:strongvalidity_outputpoly}]
\mbox{}  

$(\Longleftarrow)$

We will first show the reverse direction, that set-multilinearity of the output polynomial of $\Phi$ implies that $\Phi$ is strongly valid.

Suppose $q_{\Phi}$ is set-multilinear.  Then we have that by definition $q_{\Phi}$ is a polynomial of the form
\begin{align*}
\sum_{k = 1}^t c_k \prod_{i=1}^{n} f_{i,j_{i,k}}(x_i)
\end{align*}
for some coefficients $c_k$ and indices $j_{i,k}$.

Further suppose that $I = \{i_1, i_2, ..., i_\ell\} \subseteq [n]$ and $S_{i_1} \subseteq R_{i_1}, S_{i_2} \subseteq R_{i_2}, ..., S_{i_\ell} \subseteq R_{i_\ell}$.

Then for any choice of the $f_{i,j}$'s and $x_{[n] \setminus I}$ we have
\begin{align*}
\int_{S_{i_1} \times S_{i_2} \times ... \times S_{i_\ell}} &  q_{\Phi}(f(x)) \: \mathrm{d} M_I(x_I) = \int_{S_{i_1} \times S_{i_2} \times ... \times S_{i_\ell}} \sum_{k = 1}^t c_k f_{i,j_{i,k}}(x_i) \: \mathrm{d} M_I(x_I) \\
&= \sum_{k = 1}^t c_k \int_{S_{i_1} \times S_{i_2} \times ... \times S_{i_\ell}} f_{i,j_{i,k}}(x_i) \mathrm{d} M_I(x_I) \\
&= \sum_{k = 1}^t c_k \int_{S_{i_1}} \int_{S_{i_2}} \int_{S_{i_3}} ...\int_{S_{i_\ell}} \prod_{i=1}^{n} f_{i,j_{i,k}}(x_i) \: \mathrm{d} M_{i_\ell}(x_{i_\ell}) ... \: \mathrm{d} M_{i_2}(x_{i_2}) \: \mathrm{d} M_{i_1}(x_{i_1}) \\
&= \sum_{k = 1}^t c_k \prod_{i \in [n] \setminus I} f_{i,j_{i,k}}(x_i) \prod_{i \in I} \int_{S_i} f_{i,j_{i,k}}(x_i) \: \mathrm{d} M_i (x_i) \\
&= q_{\Phi} A_I( S_I, x_{[n] \setminus I})
\end{align*}
and so $\Phi$ is valid for these choices.  

Thus the reverse direction holds.

$(\Longrightarrow)$

Now suppose that $\Phi$ is strongly valid and suppose by contradiction that $q_{\Phi}$ is not set-multilinear.

One way that this can happen (``case 1") is if there is some monomial $m$ in $q_{\Phi}$ such that for some $i_0 \in [n]$, $m$ has a factor of the form $f_{i_0,j_1} f_{i_0,j_2}$ depending on the same $x_{i_0}$ (it might be the case that $j_1 = j_2$).  The other way this can happen (``case 2") is $m$ has no factors which are functions depending on some $x_{i_0}$ in $q_{\Phi}$'s dependency-scope.

For each $i \neq i_0$ we will choose the univariate function $f_{i,j}$ for each $j$ so that $f_{i,j}(x_i)$ is non-zero for at least one particular value of $x_i$ in $R_i$ and $\int_{R_i} f_{i,j}(x_i) \: \mathrm{d} M_i (x_i)$ is finite.  

We can use the fact that $x_{i_0}$ is non-trivial to find two disjoint subsets $S$ and $T$ of $R_{i_0}$ which both have finite and non-zero measure under $M_i$.  For each $j$, choose $f_{i_0,j}$ so that $\int_S f_{i_0,j}(x_{i_0}) \: \mathrm{d} M_{i_0} (x_{i_0}) = \int_T f_{i_0,j}(x_{i_0}) \: \mathrm{d} M_{i_0} (x_{i_0}) = b$, for a positive valued variable $b$.

As remarked before, we select values of $x_{[n] \setminus \{i_0\}}$ so that each $f_{i,j}(x_i)$ is positive.   Fix such values.  Then it is not hard to see that $q_{\Phi} A_{\{i_0\}}( S, x_{[n] \setminus \{i_0\} })$ and $q_{\Phi} A_{\{i_0\}}( T, x_{[n] \setminus \{i_0\} })$ can be viewed as a polynomial functions of $b$.  Moreover, they are both equal to the same polynomial, which we will denote $g(b)$.   Combining the positivity of each $f_{i,j}(x_i)$ with Fact \ref{fact:pos_coeff} (which says that all coefficients on monomial terms in $q_{\Phi}$ are positive) we have that each monomial term in $q_{\Phi}$, when viewed as a monomial term in $g(b)$, has a positive coefficient.  Collecting like terms in $g$ we have that since all coefficients are positive, there can be no cancellation of terms, and thus there is some monomial in $g$ of the form $b^k$ for $k \geq 2$ (case 1) or $k = 0$ (case 2).

Similarly, for the same fixed values of $x_{[n] \setminus \{i_0\}}$ we have that $q_{\Phi} A_{\{i_0\}}( S \cup T, x_{[n] \setminus \{i_0\} })$ can be viewed as a polynomial function of $b$.  Moreover, using the fact that
\begin{align*}
\int_{S \cup T} f_{{i_0},j}(x_{i_0}) \mathrm{d} M_{i_0}(x_{i_0}) = \int_{S} f_{{i_0},j}(x_{i_0}) \mathrm{d} M_{i_0}(x_{i_0}) + \int_{T} f_{{i_0},j}(x_{i_0}) \mathrm{d} M_{i_0}(x_{i_0}) = 2b
\end{align*}
this polynomial turns out to be $g(2b)$. 

Because $\Phi$ is strongly valid, we have for all values of $b$
\begin{align*}
g(b) + g(b) &= q_{\Phi} A_{\{i_0\}}( S, x_{[n] \setminus \{i_0\} }) + q_{\Phi} A_{\{i_0\}}( T, x_{[n] \setminus \{i_0\}}) \\
&= \int_S q_{\Phi}(f(x)) \: \mathrm{d} M_{i_0}(x_{i_0}) + \int_T q_{\Phi}(f(x)) \: \mathrm{d} M_{i_0}(x_{i_0}) \\
&= \int_{S \cup T} q_{\Phi}(f(x)) \: \mathrm{d} M_{i_0}(x_{i_0}) \\
&= q_{\Phi} A_{\{i_0\}}( S \cup T, x_{[n] \setminus \{i_0\} }) = g(2b)
\end{align*}

Thus we have that $2g(b) = g(2b)$ for all positive values of $b$, or in other words, $h(b) = 2g(b) - g(2b) = 0$.  Since $h(b)$ is a finite degree polynomial which is 0 for all positive $b$, it must be the zero polynomial (non-zero polynomials can only have finitely many roots).  Thus it has no monomials.  But if $c$ is the coefficient associated with the monomial $b^k$ in $g$, it is easy to see that the coefficient of $b^k$ is $2c - 2^kc = c(2 - 2^k)$ in $h$.  But because $k \neq 1$, this is clearly non-zero, a contradiction.

Thus our assumption that $q_{\Phi}$ was not set-multilinear was incorrect.

\end{proof}

\begin{proof}[Proof of Lemma \ref{lemma:non-degen}]

\leavevmode
\begin{enumerate}
\item This is self-evident from the definition of non-degeneracy.

\item If $r$ is a product node, $q_r$ is given by the product of the $q_{u_i}$'s.  Expanding out this product yields a polynomial expression whose monomial terms (monomial times coefficients) have monomials given by every possible product formed by multiplying together one monomial from each of the $q_{u_i}$'s.  Collecting like terms, we have that the coefficients associated with each of the above described monomials are given by sums of products of the coefficients associated with the monomials from the $q_{u_i}$'s.  Since the coefficients associated with the monomials from the $q_{u_i}$'s are all positive (by Fact \ref{fact:pos_coeff}), so are the coefficients associated with each of the above described monomials in the product expansion, and thus they all appear in $q_r$.

\item If $r$ is a sum node, $q_r$ is given by a weighted sum of the $q_{u_i}$'s.  Performing this weighted sum yields a polynomial expression whose monomial terms have monomials given by taking the union over all of the monomials in the $q_{u_i}$'s.  Collecting like terms, we have that the coefficients associated with each of the above described monomials are given by weighted sums of the coefficients associated with monomials from the $q_{u_i}$'s.  Since the coefficients associated with monomials from the $q_{u_i}$'s are all positive (by Fact \ref{fact:pos_coeff}), and since by non-degeneracy the weights are all positive, so are the coefficients associated with each of the above described monomials, and thus they all appear in $q_r$.

\item We will prove this statement by induction on the depth of $\Phi$.

Suppose that it is true for depth $d$ and consider the case where $\Phi$ is of depth $d+1$.

In the case where $r$ is a product node, we observe that $q_r$ is the product of the $q_{u_i}$'s.  Since each of the $q_{u_i}$'s are non-zero by the inductive hypothesis, it follows that $q_r$ is non-zero (in general, the product of non-zero polynomials is a non-zero polynomial).

In the case where $r$ is a sum node, we have that by the inductive hypothesis that $q_{u_1}$ is not the zero polynomial and so it contains some monomial $m$.  By Part 3, this is also a monomial of $q$, and so $q$ is not the zero polynomial.

For the base case, where the circuit is a single node, it can either be a constant node or a node labeled with some element of $y$.  Because $\Phi$ is non-degenerate, if it is a constant node we know that the constant it computes is non-zero, and thus it doesn't compute the zero polynomial.  If it is labeled with an element of $y$, then it computes the polynomial given by that element, which is clearly not the zero polynomial.

\item Observe that this statement is equivalent to the following one:  for every member $G_k$ of $r$'s set-scope, there exists some monomial in $q_r$ which has an element of $G_k$ as a factor.

We will prove that this restatement is true by induction on the depth of $\Phi$.

Suppose that it is true for depth $d$ and consider the case where $\Phi$ is of depth $d+1$.

Note that the set-scope of $r$ is by definition the union of the set-scopes of the $u_i$'s.  

Suppose $G_k$ is some member of the set-scope of $r$.  Then it is a member of the set-scope of some $u_i$.  By the inductive hypothesis this means that there is some monomial $m$ in $q_{u_i}$ which has an element of $G_k$ as a factor.  Also, by applying Part 4 to each of the $u_j$'s we have that none of the $q_{u_j}$'s are the zero polynomial, and thus each of them have at least one monomial.

In the case where $r$ is a product node, we can apply Part 2 to $\Phi$, so that there is a monomial in $q_r$ which has $m$ as a factor, and thus an element of $G_k$ as a factor.

In the case where $r$ is a sum node, we can apply Part 3 to $\Phi$, so that $m$ is a monomial in $q_r$ (which has an element of $G_k$ as a factor).

Thus the result holds for $\Phi$.

The base case is straightforward.

\end{enumerate}
\end{proof}

\begin{proof}[Proof of Theorem \ref{theorem:sml_output_equiv_setmulti}]

Denote by $r$ the root/output node of $\Phi$, and $\{u_i\}_i$ its child nodes.

$(\Longrightarrow)$

For the forward direction we will give a proof by induction on the depth of $\Phi$. 

Suppose the claim is true for $\Phi$'s of depth $d$, and consider a $\Phi$ of depth $d+1$.

The first case we consider is where the root node $r$ is a product node.  

Suppose by contradiction that the set-scopes of some $u_i$ and $u_j$ overlap.  It follows by Part 5 of Lemma \ref{lemma:non-degen} that the set-scopes of the polynomials $q_{u_i}$ and $q_{u_j}$ overlap.  In other words, there are monomials $m_1$ in $q_{u_i}$ and $m_2$ in $q_{u_j}$ which both have an element of the same $G_k$ as a factor.   By Part 4 of Lemma \ref{lemma:non-degen} we have that the other $q_{u_\ell}$'s are not zero polynomials and thus each of them has at least one monomial.  Taking the product of these monomials with $m_1$ and $m_2$ yields a monomial which has $m_1 m_2$ as a factor, and which is in $q_r$ by Part 2 of Lemma \ref{lemma:non-degen}.  This violates the set-multilinearity of $q_r$, since said monomial has two or more elements of $G_k$ as factors (possibly the same element twice).

Thus we have that the set-scope of the $u_i$'s are disjoint, which is the condition that $r$, being a product node, must satisfy in order for $\Phi$ to be syntactically set-multilinear.

Thus, to show that the $\Phi$ is syntactically set-multilinear, it remains to show that each of the $\Phi_{u_i}$'s are also syntactically set-multilinear arithmetic circuits.  And by the inductive hypothesis this amounts to establishing that their output polynomials (the $q_{u_i}$'s) are set-multilinear.

Suppose by contradiction that this is not the case, and that some $q_{u_i}$ is not set-multilinear.  The first way this can happen is if some monomial $m_1$ in $q_{u_i}$ has 2 distinct factors which are members of the same $G_k$.  By Part 4 of Lemma \ref{lemma:non-degen}, we have that each of the other $q_{u_j}$'s are not zero polynomials and thus each contain at least one monomial.  Combining this fact with Part 2 of Lemma \ref{lemma:non-degen} we thus have that $q_r$ contains at least one monomial which is of the form $m_1 m_2$, where $m_2$ is the product of monomials from the other $q_{u_j}$'s.  This monomial has 2 distinct factors which are members of $G_k$ (since $m_1$ does), which contradicts the set-multilinearity of $q_r$.  The second way that some $q_{u_i}$ can fail to be set-multilinear is if there is some monomial $m_1$ in $q_{u_i}$ such that none of the elements of $G_k$ are a factor of $m_1$, for some $G_k$ in $q_{u_i}$'s set-scope.  As in the previous case this means that there is a monomial in $q_i$ which of the form $m_1 m_2$ where $m_2$ is the product of monomials from the other $q_{u_j}$'s.  Since the set-scopes of the other $q_{u_j}$'s are disjoint from $q_{u_i}$'s set-scope, this means that $m_1 m_2$ doesn't have any element of $G_k$ as a factor.  But this contradicts the set-multilinearity of $q_r$, since $q_r$'s set-scope contains $G_k$ (because, by Part 5 of Lemma \ref{lemma:non-degen}, it is equal to $r$'s set-scope, which contains $u_i$'s set-scope, which itself is equal to $q_{u_i}$'s set-scope).

The second case we consider is where $r$ is a sum node. 

Suppose by contradiction that the set-scopes of some $u_i$ and $u_j$ are distinct.   By Part 5 of Lemma \ref{lemma:non-degen} it follows then that the set-scopes of $q_{u_i}$ and $q_{u_j}$ are distinct.  Let $G_k$ be a member of the set-scope of $q_{u_i}$ which is not in the set-scope of $q_{u_j}$.  By Part 4 of Lemma \ref{lemma:non-degen} we know that $q_{u_j}$ is not the zero polynomial and so there is some monomial $m$ in $q_{u_j}$ (which doesn't have any element of $G_k$ as a factor).  And thus $m$ is also a monomial in $q_r$ by Part 3 of Lemma \ref{lemma:non-degen}.  But, similarly to the product node case, this contradicts the set-multilinearity of $q_r$.

Thus we have that the set-scope of the $u_i$'s are all identical, which is necessary condition that the sum node $r$ must satisfy in order for $\Phi$ to be syntactically set-multilinear.

Thus, to show that $\Phi$ is syntactically set-multilinear, it remains to show that each of the $\Phi_{u_i}$'s is syntactically set-multilinear.  By the inductive hypothesis this amounts to establishing that their output polynomials (the $q_{u_i}$'s) are set-multilinear.

Suppose by contradiction that this is not the case, and that some $q_{u_i}$ is not set-multilinear.  The first way this can happen is if a monomial $m$ in $q_{u_i}$'s has 2 distinct factors which are elements of the same $G_k$ (might be the same element twice).  But then by Part 3 of Lemma \ref{lemma:non-degen} $m$ must be in $q_r$, which contradicts the set-multilinearity of $q_r$.  The second way that some $q_{u_i}$ can fail to be set-multilinear is if there is some monomial $m$ in $q_{u_i}$ for which none of the elements of $G_k$ are a factor, for a $G_k$ in $q_{u_i}$'s set-scope.  As before, this contradict the set-multilinearity of $q_r$.

The base case, where $\Phi$ has a single node (which must be a node labeled with an element of $y$ or a constant node), is simple to verify.

Thus by induction we have that the forward direction of the theorem's statement holds.

$(\Longleftarrow)$

For the reverse direction we will give a similar proof by induction on the depth.

Suppose the claim is true for $\Phi$'s of depth $d$, and consider a $\Phi$ of depth $d+1$.

Since $\Phi$ is syntactically set-multilinear, each of the $\Phi_{u_i}$'s are as well.   So by the inductive hypothesis we have that the $q_{u_i}$'s are all set-multilinear.  

Consider the case where the root node $r$ is a product node.  By the  syntactic set-multilinearity of $\Phi$ we have that set-scopes of the $u_i$'s are pairwise disjoint, and by Part 5 of Lemma \ref{lemma:non-degen} we have that these are equal to the set-scopes of their respective output polynomials (the $q_{u_i}$'s), and so these are disjoint as well.  Because the $q_{u_i}$'s are set-multilinear, we have that their monomials consist of products between elements of the members of their set-scope, one for each such member.  By Part 2 of Lemma \ref{lemma:non-degen}, $q_r$'s monomials are given by taking every possible product formed by taking exactly one monomial from each of the $q_{u_i}$'s (by Part 2 of Lemma \ref{lemma:non-degen}), and thus these monomials are each a product over elements of the $G_k$'s, with exactly one element from each $G_k$ in the disjoint union of the set-scopes of the $q_{u_i}$'s (which is equal to the union of the set-scope of the $u_i$'s, which is equal to the set-scope of $r$, which is itself equal to the set-scope of $q_r$ [by Part 5 of Lemma \ref{lemma:non-degen}]).  Thus $q_r$ is set-multilinear. 

Now consider the case where $r$ is a sum node.  By the syntactic set-multilinearity of $\Phi$ we have that set-scopes of the $u_i$'s are all equal to some set $S$, and by Part 5 of Lemma \ref{lemma:non-degen} we have that these are equal to the set-scopes of their respective output polynomials (the $q_{u_i}$'s), and so these are all equal to $S$ as well.  Because the $q_{u_i}$'s are set-multilinear, we have that their monomials consist of products between elements of the members $G_k$ of their set-scope, one for each such $G_k$.  By Part 3 of Lemma \ref{lemma:non-degen} we have that the set of monomials appearing in $q_r$ is given by the union over the sets of monomials appearing in each of the $q_{u_i}$'s, and thus these monomials are each a product over elements of the $G_k$'s, with exactly one element from each $G_k$ in $S$ (which, similarly to before, is equal to the set-scope of $r$). Thus $q_r$ is set-multilinear. 

The base case, where $\Phi$ has a single node (which must be a node labeled with an element of $y$ or a constant node), is simple to verify.

\end{proof}

\begin{proof}[Proof of Theorem \ref{thm:validity_coNP-hard}]

We give a polynomial-time reduction from the co-NP complete problem $\overline{\mathrm{SAT}}$, which is the problem of deciding if a Boolean CNF formula has no satisfying assignment, to validity testing of extended SPNs.

Informally, the idea of the reduction is as follows.  Given a Boolean CNF formula $\psi$ over the set of $0/1$-valued variables in $x$ we will construct a poly-sized SPN over $x$ whose output for a particular value of $x$ will be non-zero if and only if this value represents an assignment to the variables which satisfies $\psi$.  By adding some additional circuitry to the SPN (which will involve a few negative weights) we can ensure when the SPN is evaluated for an input corresponding to an integral over one or more of the $x_i$'s, that it always outputs $0$.  Thus, the SPN will be valid if and only if it outputs $0$ whenever it is evaluated for a particular value of $x$, or in other words, if $\psi$ has no satisfying assignment.

We now proceed with the formal argument.

Let $\psi$ be an input CNF formula over $x$, with associated Boolean circuit $\gamma$, where we use the natural correspondence of $0 = \mathrm{FALSE}$, $1 = \mathrm{TRUE}$.  We will construct an SPN $\Phi$ from $\gamma$ as follows.  We replace each literal node (of the form $x_i$ and $\overline{x_i}$) with a node labeled by either $f_{i,1}$ or $f_{i,2}$, where these functions are defined by $f_{i,1}(x_i) = x_i$ and $f_{i,2}(x_i) = \overline{x_i} = 1-x_i$.  Then we replace each AND node with a product node, and each OR node with a sum node (with weights all equal to $1$).  The domain $R_i$ of each $x_i$ will be $\{0,1\}$ and each $M_i$ will be the standard counting measure.

It is clear from this definition that $\Phi(f(x)) > 0$ if and only if $x$ corresponds to a satisfying assignment of $\psi$.

We then augment $\Phi$ with additional circuitry, so that for any choice of $I \neq \emptyset$ and values of $x_{[n] \setminus I}$ we have $\Phi( A_I( S_I, x_{[n] \setminus I})) = 0$, while the standard evaluation of $\Phi(f(x))$ for particular values of $x$ remains unchanged.

Notice that an input to $\Phi$ of the form $A_I( S_I, x_{[n] \setminus I})$ for $I \neq \emptyset$ can be distinguished from an input of the form $f(x)$, by the condition $f_{i,1} = f_{i,2} = 1$ for some $i$.  Thus, to achieve the required property we can augment $\Phi$ by giving it a new output node which computes the product of the original ouput node and a subcircuit which computes $\prod_i(1 - f_{i,1} f_{i,2})$ in the obvious way.  Note that this requires the use of negative weights.

Clearly this construction of $\Phi$ can be accomplished in time polynomial in the size of $\psi$.  Moreover, it is easy to see that $q_\Phi$ is non-negative valued for all values of $x$ in $\{0,1\}^n$.

Suppose $\psi$ has no satisfying assignment.  Then as constructed, $\Phi(f(x)) = 0$ for all values of $x$, and we also have that $f(A_I(S_I, x_{[n]\setminus I})) = 0$ for all values of $I$ and $S_I$, and thus $\Phi$ is valid.   If, on the other hand, $\psi$ has some satisfying assignment given by some value $x'$ of $x$, then $\Phi(f(x')) > 0$.  Validity of $\Phi$ requires that
\begin{align*}
\Phi(f(x')) + \Phi( f( \overline{x'_1}, x'_{[n] \setminus \{1\}} ) ) = \Phi( A_{\{1\}}( S_1, x'_{[n] \setminus \{1\}}) )
\end{align*}
But due to the construction of $\Phi$, the RHS is $0$, while the LHS is clearly positive since $\Phi(f(x'))$ is positive and $\Phi( f( \overline{x'_1}, x'_{[n] \setminus \{1\}} ) )$ is non-negative.  Thus this requirement of validity is not satisfied by $\Phi$ and so $\Phi$ is not valid.

\end{proof}

\section{Proofs for Section \ref{sec:capabilities}}

\begin{proof}[Proof of Proposition \ref{prop:FPLM_simulation}]

To produce the promised D\&C SPN, we will design an arithmetic circuit which implements the algorithm for running an FPLM in the obvious way using basic matrix-vector arithmetic.  

In particular, for each stage $i$, the $k$-dimensional working vector is represented by a group of $k$ nodes, and the required matrix multiplication of the working vector by the matrix $T_{\pi(i)}(x_{\pi(i)})$ is implemented by $k^2$ product nodes and $k$ sum nodes.  Each such product node has two children: one corresponding to a component of the working vector, and the other being a univariate function of the current $x_{\pi(i)}$, which determines the corresponding entry of the matrix $T_{\pi(i)}(x_{\pi(i)})$.  The output of these product nodes are summed as appropriate by the $k$ sum nodes (each of which takes as input the product nodes corresponding to a single row of the matrix) to produce the representation of the working vector at the next stage.

The initialization of the current working vector by $a$ is implemented with constant-valued nodes, and the inner product of the final working vector with $b$ is implemented by a sum node in the obvious way.

It is not hard to see that this construction produces an SPN which is both decomposable and complete.  In particular, decomposability follows from the fact that product nodes each have two children: one which computes a univariate function which depends on some $x_j$ and one whose dependency-scope clearly doesn't contain $x_j$ (because $x_j$ has not yet been processed in the fixed order in which the FPLM processes its input).

The size of the SPN is clearly $\bigO(k^2 n)$.

\end{proof}

\begin{proof}[Proof of Proposition \ref{prop:FPSSM_simulation}]

We observe that FPSSMs are structurally and functionally identical to FPLMs in every respect (including how the state transitions are determined by the current $x_{\pi(i)}$) except in their definitions of states, and kinds of functions and transformations that are performed on them.

We also observe that an FPSSM's working state $u$ can be represented as a vector $v$ using a $1$-of-$k$ encoding, where $u$ being in state $i$ corresponds to $v = e_i$.  Here, $e_i$ is the $k$-dimensional vector which is $1$ in position $i$, and $0$ elsewhere. 

Moreover, arbitrary state transition functions can be accomplished within this representation by linear transformations.  In particular, a mapping $g$ from $[k]$ to $[k]$ can be implemented in the vector representation by multiplication with the matrix $T$, where
\begin{align*}
T = \begin{bmatrix} e_{g(1)} \: e_{g(2)} \: \hdots \: e_{g(n)} \end{bmatrix}
\end{align*}

And arbitrary mappings $h$ from $[k]$ to $\Real_{\geq 0}$ can realized as the inner product of $v$ with a vector $b \in \Real_{\geq 0}$, given by
\begin{align*}
b = \begin{bmatrix} h(1) \\ h(2) \\ \vdots \\ h(n) \end{bmatrix}
\end{align*}

Thus, using this $1$-of-$k$ encoded vector representation scheme allows us to construct, in the obvious way, a FPLM of dimension $k$ which will behave identically to a given FPSSM of state-size $k$.


\end{proof}

\section{Proofs for Section \ref{sec:depth3separation}}

\begin{proof}[Proof of Theorem \ref{thm:3layer_rank_bound}]

First note that due to decomposability and completeness conditions, the nodes in the second layer of $\Phi$ must compute either weighted sums over univariate functions of the same variable, or products between univariate functions of distinct input variables.  In either case, they compute functions which ``factorize" over the $x_i$'s (in the sense that they are of the form $g(x) = \prod_{i=1}^n h_i(x_i)$ for some $h_i$'s).  

So if the root/output node of $\Phi$ is a sum node we therefore have that $\Phi$ computes a weighted sum over $k$ functions, each of which factorizes over the $x_i$'s.  And if the root/output node is a product node we have that $\Phi$ computes a product between functions with disjoint sets of dependent variables (by decomposability), each of which factorizes over its respective dependent variables, and is therefore a function which factorizes over the $x_i$'s.

So in either case we have that the function computed by $\Phi$ is the weighted sum of $\leq k$ functions ($1$ if $r$ is a product node), each of which factorizes over the $x_i$'s.

Consider a general function $g$ over $x$ which factorizes over the $x_i$'s.  We claim that $M^{A,B}_g$ has a rank of 1.  To see this, note that $g$ can be expressed as the product
\begin{align*}
g(x) = \prod_{i=1}^n h_i(x_i) = \left ( \prod_{i \in A} h_i(x_i) \right) \left ( \prod_{i \in B} h_i(x_i) \right)
\end{align*}
for some $h_i$'s.

Let $v_A$ be the vector consisting of the values of $\prod_{i \in A} h_i(x_i)$ for different values of $x_A$ indexed according to the same order in which we index the rows of $M^{A,B}_g$, and define $v_B$ analogously.
Then we have $M^{A,B}_g = v_A v_B^\top$, which has rank 1.

Since $q_{\Phi}$ is the weighted sum of $k$ functions $g_1,g_2,..., g_k$ each of which factorizes over $x$, we have that $M^{A,B}_{q_{\Phi}(f(x))} = \sum_{i=1}^k w_i M^{A,B}_{g_i}$ for some weights $w_i$.  Then using the subadditivity of rank, we get
\begin{align*}
\rank \left(M^{A,B}_{q_{\Phi}(f(x))} \right) = \rank \left(\sum_{i=1}^k w_i M^{A,B}_{g_i} \right) \leq \sum_{i=1}^k \rank \left (M^{A,B}_{g_i} \right) = \sum_{i=1}^k 1 = k
\end{align*}

\end{proof}

\begin{proof}[Proof of Proposition \ref{prop:equal_upper_bound}]

The construction of the promised D\&C SPN proceeds as follows.

In the second layer we have, for each $i$, a pair of product nodes which compute $x_i x_{i+n/2}$ and $\overline{x_i} \: \overline{x_{i+n/2}}$ in the obvious way, where $\overline{z} \equiv 1-z$ (which is a non-negative valued univariate function of $z$).

In the third layer we have, for each $i$, a sum node which computes $x_i x_{i+n/2} + \overline{x_i} \: \overline{x_{i+n/2}}$ from the outputs of the second layer.  It is easy to see that the $i$-th such node outputs $1$ if $x_i = x_{i+n/2}$, and $0$ otherwise.  

The forth and final layer consists of a single product node which computes the product over all of the nodes in the third layer, i.e. $\prod_i x_i x_{i+n/2} + \overline{x_i} \: \overline{x_{i+n/2}}$.  It is easy to see that this function is $1$ if $x_i = x_{i+n/2}$ for all $i \leq n/2$, and is $0$ otherwise.
\end{proof}

\begin{proof}[Proof of Theorem \ref{thm:rank_approx_equal}]

Since the output of $\Phi$ is proportional to the density function of its associated distribution, we have that there is some $\alpha > 0$ s.t. $\alpha /2 \leq q_\Phi(x) \leq \alpha$ each values of $x$ satisfying $\EQUAL(x) = 1$. 

Note that there are $2^{n/2}$ values of $x$ with $\EQUAL(x) = 1$.

Let $\beta = \sum_{x : \EQUAL(x) = 0} q_{\Phi}(f(x))$.  Then bounding the total probability $\delta$ under $\Phi$'s distribution of the set of values of $x$ with $\EQUAL(x) = 0$ we have
\begin{align*}
\delta = \frac{\sum_{x : \EQUAL(x) = 0} q_{\Phi}(f(x))}{\sum_{x : \EQUAL(x) = 0} q_{\Phi}(f(x)) + \sum_{x : \EQUAL(x) = 1} q_{\Phi}(f(x))} \geq  \frac{\beta}{\beta + 2^{n/2}\alpha}
\end{align*}
And rearranging this we obtain
\begin{align*}
\beta \leq \frac{\delta 2^{n/2} \alpha }{1 - \delta}
\end{align*}

Now consider the matrix $M^{H_1,H_2}_{q_{\Phi}(f(x))}$.  The sum of the values of the off-diagonal entries is $\beta$ by definition, and each of the $2^{n/2}$ diagonal entries differs from $\alpha$ by at most $\frac{\alpha}{2}$.  Thus we have that $M^{H_1,H_2}_{q_{\Phi}(f(x))} = \alpha(I + D)$ where $D$ is a matrix satisfying
\begin{align*}
\sum_{i,j} |[D]_{i,j}| \leq \frac{\beta + 2^{n/2} \frac{\alpha}{2} }{\alpha} = \frac{\beta}{\alpha} + \frac{1}{2} 2^{n/2} \leq \frac{\delta 2^{n/2}}{1 - \delta} + \frac{1}{2} 2^{n/2} = 2^{n/2-1} \frac{2\delta + 1-\delta}{1-\delta} = 2^{n/2-1} \frac{1+\delta}{1-\delta}
\end{align*}

Applying Lemma \ref{lemma:rank_perturb_lemma}, we have that
\begin{align*}
\rank(I + D) \geq 2^{n/2}/2 - \left( 2^{n/2-1} \frac{1+\delta}{1-\delta} \right) / 2 = 2^{n/2-2}\left(2 - \frac{1+\delta}{1-\delta} \right)
\end{align*}
Plugging in $\delta \leq 1/4$ we have $\rank \left (M^{H_1,H_2}_{q_{\Phi}(f(x))} \right) = \rank(I + D) \geq 2^{n/2 - 2} / 3$.  The result then follows from Theorem \ref{thm:3layer_rank_bound}.

\end{proof}


\begin{proof}[Proof of Lemma \ref{lemma:rank_perturb_lemma}]

Let $E = D + D^\top$, and $S = E^\top E$ (which is positive-semidefinite).

It is known that among the matrices $C$ s.t. $C^\top C = S$ (i.e. the ``square roots" of $S$), there is a unique symmetric positive semi-definite root $R$.  

It is also known that the nuclear norm $\|E\|_*$ of $E$, which is defined as the sum of the singular values (denoted $\sigma_i$) of $E$, is equal to the trace of $R$.

By the ``unitary freedom" of matrix square roots 
we know that since $E$ is a square root of $S$, it is related to $R$ by $E = UR$ (i.e. $R = U^\top E$), for some unitary matrix $U$.  And because $U$ is unitary, we have that $|[U]_{i,j}| \leq 1$ for all $i$ and $j$.

Also, because $E$ is a real symmetric matrix, we have that $\sigma_i = |\lambda_i|$, where $\lambda_i$ are its eigenvalues. 

Combining these facts we have
\begin{align*}
\sum_i |\lambda_i| &= \sum_i \sigma_i = \|E\|_* = \trace(R) = \trace(U^\top E) = \sum_{i,j} [U]_{i,j} [E]_{i,j} \\
&\leq | \sum_{i,j} [U]_{i,j} [E]_{i,j} | \leq \sum_{i,j} |[U]_{i,j}| |[E]_{i,j}| \leq \sum_{i,j} |[E]_{i,j}| = 2 \sum_{i,j} |[D]_{i,j}| = 2\Delta
\end{align*}

Now consider the matrix $2I + E$.  The eigenvalues of this matrix, denoted by $\gamma_i$, are given by $\gamma_i = \lambda_i + 2$ for each $i$. The rank of $2I + E$ is given by the number of non-zero eigenvalues it has, or in other words, the number of eigenvalues $\lambda_i$ of $E$ which are not equal to $-2$. Since $\sum_i |\lambda_i| \leq 2\Delta$ we have that at most $\Delta$ of the $\lambda_i$'s can be equal to $-2$, and thus the rank of $E + 2I$ is at least $k - \Delta$.

Then using the fact that $2I + E = (I + D) + (I + D)^\top$ we have that
\begin{align*}
k - \Delta &\leq \rank(2I + E) = \rank( (I + D) + (I + D)^\top ) \\ &\leq \rank( I + D ) + \rank( (I + D)^\top ) = 2 \rank( I+D )
\end{align*}
and thus $\rank( I+D ) \geq k/2 - \Delta/2$.

\end{proof}

\section{Proofs for Section \ref{sec:depth_analysis}}

\begin{proof}[Proof of Lemma \ref{lemma:mult_ident_test}]

Define $q(y) = q_1(y) - q_2(y)$.  Clearly this is a multilinear polynomial as well.  Also observe that $q_1(y) = q_2(y) \iff q(y) \equiv q_1(y) - q_2(y) = 0$.  Thus, in order to prove the lemma it suffices to establish the following proposition.

\begin{proposition}
\label{prop:mult_ident_test}
If $q$ is a multilinear polynomial over $y$ with $q(y) = 0$ for all values of $y$ in $\{0,1\}^\ell$, then $q(y) = 0$ for all values of $y$ in $\Real^\ell$.
\end{proposition}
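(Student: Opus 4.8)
Proposition \ref{prop:mult_ident_test}: if $q$ is a multilinear polynomial over $y = (y_1,\dots,y_\ell)$ that vanishes on all of $\{0,1\}^\ell$, then $q$ vanishes on all of $\Real^\ell$ (equivalently, $q$ is the zero polynomial).

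**Plan.** The natural approach is induction on $\ell$, the number of variables. The base case $\ell = 0$ (or $\ell = 1$) is immediate: a multilinear polynomial in one variable is $q(y_1) = a y_1 + b$, and $q(0) = b = 0$, $q(1) = a + b = 0$ force $a = b = 0$. For the inductive step, write $q$ by collecting on the variable $y_\ell$: since $q$ is multilinear, $y_\ell$ appears to degree at most one, so $q(y) = y_\ell \, r(y_1,\dots,y_{\ell-1}) + s(y_1,\dots,y_{\ell-1})$ where $r$ and $s$ are themselves multilinear polynomials in $y_1,\dots,y_{\ell-1}$. Setting $y_\ell = 0$ shows $s$ vanishes on $\{0,1\}^{\ell-1}$, hence by the inductive hypothesis $s$ is the zero polynomial; then setting $y_\ell = 1$ shows $r + s = r$ vanishes on $\{0,1\}^{\ell-1}$, hence $r$ is the zero polynomial as well. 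Therefore $q$ is identically zero, completing the induction.

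An equivalent, slightly more ``global'' route avoiding explicit induction: a multilinear polynomial in $\ell$ variables has at most $2^\ell$ monomials (one for each subset $S \subseteq [\ell]$, namely $\prod_{i\in S} y_i$), so it is determined by a vector of $2^\ell$ coefficients. Evaluating at the $2^\ell$ points of $\{0,1\}^\ell$ gives a linear map from coefficient vectors to value vectors whose matrix is $\bigl(\prod_{i\in S}[v]_i\bigr)_{v\in\{0,1\}^\ell,\, S\subseteq[\ell]}$. One checks this matrix is (up to row/column ordering) upper-triangular with ones on the diagonal when rows and columns are ordered by inclusion of the corresponding subsets, hence invertible; so the only coefficient vector mapping to the zero value vector is the zero vector. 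I would probably present the induction since it is cleaner and self-contained, but mention this linear-algebra viewpoint as the conceptual reason.

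**Main obstacle.** There is no serious obstacle here — this is a routine fact. The only point requiring a little care is being explicit that ``multilinear'' is exactly what makes the decomposition $q = y_\ell r + s$ valid with $r, s$ \emph{multilinear} in the remaining variables (if $y_\ell$ could appear to higher degree this splitting would fail), and being careful to distinguish, as the paper itself emphasizes, between a polynomial being the zero polynomial and merely computing the zero function: the conclusion we extract at each stage is genuinely that $r$ and $s$ are the \emph{zero polynomials}, which then immediately gives vanishing on all of $\Real^{\ell}$. Feeding this back, Lemma \ref{lemma:mult_ident_test} follows by applying the proposition to $q = q_1 - q_2$, which is multilinear whenever $q_1$ and $q_2$ are.
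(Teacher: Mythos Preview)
Your proof is correct and follows essentially the same inductive argument as the paper: split $q = y_\ell\, r + s$ using multilinearity, apply the inductive hypothesis first at $y_\ell = 0$ to kill $s$, then at $y_\ell = 1$ to kill $r$. The linear-algebra viewpoint you mention as an alternative is not in the paper, but your primary route matches it exactly.
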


\begin{proof}[Proof of proposition]
%
%
%
%


The proof will proceed by induction on the number $\ell$ of input variables.

The base case $\ell=0$ holds trivially.

For the inductive case, suppose that $q$ is a multilinear polynomial in $y$ with $q(y) = 0$ for all values of $y$ in $\{0,1\}^\ell$. Because $q$ is multilinear we can write it as
\begin{align}
q(y) = y_\ell g(y_{-\ell})+h(y_{-\ell})
\label{exprforq}
\end{align}
where $g$ and $h$ are multilinear polynomials in $y_{-\ell} \equiv (y_1,...,y_{\ell-1})$. 

Since $q(y)=0$ for all values of $y$ in $\{0,1\}^\ell$, it follows trivially that $h(y_{-\ell})=0$ for all values of $y_{-\ell}$ in $\{0,1\}^{\ell-1}$. Thus, by the inductive hypothesis we have that $h(y_{-\ell})=0$ for all values of $y_{-\ell}$ in $\Real^{\ell-1}$.

Combing this with eqn.~\ref{exprforq} we have that $q(y) = y_{\ell} g(y_{-\ell})$ for all values of $y$ in $\Real^\ell$.

For values of $y$ in $\Real^\ell$ such that $y_\ell = 1$, we have $q(y) = g(y_{-\ell})$.  Then using the fact that $q(y) = 0$ for all values of $y$ in $\{0,1\}^\ell$ (and in particular ones in $\{0,1\}^{\ell-1}\times \{1\}$), it follows that $g(y_{-\ell}) = 0$ for all values of $y_{-\ell}$ in $\{0,1\}^{\ell-1}$.  Again, by the inductive hypothesis we thus have that $g(y_{-\ell})=0$ for all values of $y_{-\ell}$ in $\Real^{\ell-1}$.

Thus we can conclude that $q(y) = y_{\ell} g(y_{-\ell}) = y_{\ell} 0 = 0$ for all values of $y$ in $\Real^\ell$.

\end{proof}

\end{proof}

\begin{proof}[Proof of Theorem \ref{thm:SPN_depth_hierarchy}]

Let $g_{d+1}$ be as in Theorem \ref{thm:raz_depth_hierarchy}.

Then we have that there is a monotone syntactically multilinear arithmetic circuit $\Phi$ over $x$ of size $\bigO(n)$ and product-depth $d+1$ which computes $g_{d+1}$ for all values of $x$ in $\Real^n$.  By the discussion in Section \ref{sec:polys_and_multi} we have that $\Phi$ is equivalent to a decomposable SPN whose univariate functions $f$ are identity functions.  Moreover, by Proposition \ref{prop:completeness_transform} we can transform this decomposable SPN into a decomposable and complete SPN of size $\bigO(n^2)$.  

Now suppose by contradiction that there is a D\&C SPN $\Phi$ of size $n^{o(\log(n)^{1/2d})}$ and product-depth $d$ that computes $g_{d+1}$ for all values of $x$ in $\{0,1\}^n$.  By the discussion in Section \ref{sec:polys_and_multi}, we can transform this into an equivalent syntactically multilinear arithmetic circuit over $x$ by replacing the univariate functions with affine functions which are equivalent on binary-valued $x$'s, and then replacing these affine functions with trivial subcircuits that compute them.  Because this only adds a layer of $\bigO(s)$ sum nodes, where $s$ is the size of $\Phi$, the size of the circuit remains $n^{o(\log(n)^{1/2d})}$ and product-depth remains $d$.  And because both $g_{d+1}$ and the output function of this circuit are multilinear polynomials, by Lemma \ref{lemma:mult_ident_test} (with $y = x$) we have that the circuit's output polynomial agrees with $g_{d+1}$ for all values of $x$ in $\Real^n$.  This contradicts our choice of $g_{d+1}$.

\end{proof}

\section{Proofs for Section \ref{sec:formula_limitation}}

\begin{proof}[Proof of Theorem \ref{thm:formula_limitation}]

Let $g$ be as in Theorem \ref{thm:raz_formula_limitation}.

Then we have that there is a monotone syntactically multilinear arithmetic circuit $\Phi$ over $x$, of size $\bigO(n)$ and nodes of maximum in-degree $\bigO(n^{1/3})$, which computes $g$ for all values of $x$ in $\Real^n$.  By the discussion in Section \ref{sec:polys_and_multi} we have that $\Phi$ is equivalent to a decomposable SPN whose univariate functions $f$ are identity functions.  Moreover, by Proposition \ref{prop:completeness_transform} we can transform this decomposable SPN into a decomposable and complete SPN of size $\bigO(n + n + n n^{1/3}) = \bigO(n^{4/3})$. 

Now suppose by contradiction that there is a D\&C SPN formula $\Phi$ of size $n^{o(\log(n))}$ that computes $g$ for all values of $x$ in $\{0,1\}^n$.  By the discussion in Section \ref{sec:polys_and_multi}, we can transform this into an equivalent syntactically multilinear arithmetic circuit over $x$ by replacing the univariate functions with affine functions which agree on all binary-valued inputs, and then replacing these with trivial subcircuits that compute them.  Because the subcircuits we add are formulas, and do not share nodes with the existing circuit or with each other, this construction clearly gives a formula. Moreover, the size of the circuit remains $n^{o(\log(n))}$.  And because both $g$ and the output polynomial of this formula are multilinear polynomials, we have by Lemma \ref{lemma:mult_ident_test} (with $y = x$) that the formula's output polynomial agrees with $g$ for all values of $x$ in $\Real^n$.  This contradicts our choice of $g$.

\end{proof}

\section{Proofs for Section \ref{sec:spanning_tree_lb}}

\begin{proof}[Proof of Theorem \ref{thm:efficientmarginalcomputation}]

The algorithm proceeds as follows.

We first construct the subgraph $H$ of $K_m$ whose edges are given precisely by those elements of $i \in I$ for which $x_i = 1$.  In other words, $H$ is the minimal subgraph of $K_m$ which is consistent with the values of $x_I$.

Next, we verify that $H$ is acyclic, which can be done in $\bigO(m^2)$ time using a depth first search.  

If $H$ is not acyclic, then the algorithm returns $0$, as any $G$ which is consistent with the given values of $x_I$ must have $H$ as a subgraph, and thus cannot be a tree.

Otherwise, $H$ is a forest (i.e. each connected component is a tree, including lone vertices with no incident edges) and we proceed as follows. 

We construct a new edge-labeled multi-graph $M$ where each vertex of $M$ corresponds to a connected component of $H$.  For each edge with a label in $\{1,...,n\} \setminus I$ (i.e. those edges $i$ for which $x_i$ is not given a fixed value), which will be between vertices in distinct connected components of $H$, we add an edge (with the same label) between the corresponding vertices of $M$.

We now claim that the number of (edge labeled) spanning trees of $M$ is equal to the number of spanning trees of $K_m$ consistent with the values of $x_I$.  To establish this, we will exhibit a bijection between the two sets.

Consider a spanning tree $T'$ of $M$.  We can construct a subgraph $T$ of $K_m$ by taking $H$ and adding to it the edges of $K_m$ corresponding to labels of the edges of $T'$.  Since the labels of the edges of $T'$ are disjoint from $I$ by construction, and $H$ is clearly consistent with the values of $x_I$, this extended graph will be as well. It is also not hard to see that it is a spanning tree of $K_m$, and that this mapping is 1-1.

Now consider a spanning tree $T$ of $K_m$ consistent with the values of $x_I$.  We can construct a subgraph $T'$ of $M$ whose set of edges are given precisely by $S \setminus I$ where $S$ is the set of labels of edges in $T$.  Because $T$ has $H$ as a subgraph, it clearly cannot contain any edges between vertices in the same connected component of $H$ (as this would cause a cycle), and so $S \setminus I$ indeed contains only labels of edges in $M$.  Again, it is not hard see that $T'$ is a spanning tree of $M$, and that this mapping is 1-1.

Thus it remains to count the (edge-labeled) spanning trees of $M$.

To do this, we form the generalized Laplacian matrix $L$ of $M$ given by
\begin{align*}
L_{u,v} =
\begin{cases}\deg(v) & \mbox{if}\ u = v \\
-j & \mbox{if}\ u \neq v\ \mbox{where}\ j \mbox{ is the number of edges between } u \mbox{ and } v
	\end{cases}
\end{align*}
and compute one of its co-factors.  For example, we can compute the determinant of $L$ with the first row and column removed.  The asymptotic cost of this is, by the results of \citet{bunch}, are the same as the asymptotic cost of matrix multiplication, which is known to be at worst $\bigO(m^{2.38}) \approx \bigO(n^{1.19})$ \citep{coppersmith}.

By a generalization of the Matrix Tree Theorem \citep{tutte2001graph}, the value of any of the cofactors of $L$ is equal to the number of spanning trees of $M$.

\end{proof}

\begin{proof}[Proof of Theorem \ref{thm:decomposition}]

Let $\Phi$ be a D\&C SPN of size $s$.

We first transform $\Phi$ into an equivalent D\&C SPN where the maximum fan-in of each product node is 2.  This is accomplished by replacing each product node of fan-in $\ell \geq 2$ with a subcircuit structured like a binary tree whose total size is $1 + 2 + 4 + ... + \ell/2 = \ell-1 \leq s$.  The size of the new $\Phi$ will thus be $t \leq s^2$.

Next, we decompose $\Phi$ according to the following iterative procedure.

Starting with $\Phi_0 = \Phi$, at each stage $i$, we find a node $v_i$ in $\Phi_{i-1}$ whose dependency-scope $x_{v_i}$ satisfies $n/3 \leq |x_{v_i}| \leq 2n/3$ (we will show later why this is always possible).  We then remove $v_i$ from $\Phi_{i-1}$ (effectively just replacing it with 0), and pruning any orphaned children that result, noting that the resulting circuit $\Phi_i$ is also a decomposable and complete SPN over the same input variables ($x$), or is the empty circuit.

The procedure stops at the step $k$, when $\Phi_k$ becomes the empty circuit (so that $q_{\Phi_k}$ is the zero polynomial).  Clearly, $k \leq t$, since $\Phi_0$ starts with $t$ nodes, and each step of the procedure removes at least one node.

We can express the output of $q_{\Phi}$ as a telescoping sum of differences as follows
\begin{align}
\label{eqn:telescope}
q_{\Phi} = (q_{\Phi_0} - q_{\Phi_1}) + (q_{\Phi_1} - q_{\Phi_2}) + ... + (q_{\Phi_{k-1}} - q_{\Phi_k}) + q_{\Phi_k}
\end{align}

Suppose that $v_i$ is the node removed at stage $i$, and consider the SPN $\Phi'_{i-1}$ obtained by replacing $v_i$ in $\Phi_{i-1}$ with an input node labeled with a new variable $z_i$.  If we define the dependency-scope $x_{z_i}$ of $z_i$ to be $x_{v_i}$ (thinking of it as a `function' which depends on $x_{v_i}$ analogously to how each of the elements of $f$ are functions which depend on one of the $x_j$'s) then clearly $\Phi'_{i-1}$ is an SPN and inherits decomposability and completeness from $\Phi_{i-1}$.  Moreover, since $q_{\Phi_i}$ is clearly obtained by setting $z_i = 0$ in $q_{\Phi'_{i-1}}$, we have that $q_{\Phi'_{i-1}} - q_{\Phi_i}$ is a polynomial in $z_i$ and $f$ consisting precisely of those monomials from $q_{\Phi'_{i-1}}$ which have $z_i$ as a factor, with the same corresponding coefficients (which are all non-negative).  

Because $\Phi'_{i-1}$ is a D\&C SPN, we have from Theorem \ref{thm:3way_equiv} that each monomial $m$ in $q_{\Phi'_{i-1}}$ with $z_i$ as a factor is of the form $z m'$ where $x_{m'} \cap x_{z_i} = \emptyset$.  Thus $q_{\Phi'_{i-1}} - q_{\Phi_i} = z_i \psi_i$ for some non-negative polynomial $\psi_i$ over $f$ with $x_{\psi_i} \cap x_{z_i} = \emptyset$ and $x_{\psi_i} \cup x_{z_i} = x$.

Substituting $z_i = q_{v_i}$ in $q_{\Phi'_{i-1}}$ recovers $q_{\Phi_{i-1}}$, and thus it follows that $q_{\Phi_{i-1}} - q_{\Phi_i} = q_{v_i} \psi_i$. 

Define $g_i = q_{v_i}$ and $h_i = \psi_i$.  Then we can rewrite eqn.~\ref{eqn:telescope} as $q_{\Phi} = \sum_{i=1}^k g_i h_i$.  Noting that $v_i$ was chosen so that $n/3 \leq |x_{v_i}| \leq 2n/3$, this gives the result.

It remains to show that at each stage $i$ we can find a node $v_i$ in $\Phi_{i-1}$ satisfying $n/3 \leq |x_{v_i}| \leq 2n/3$.  To do this we perform the following procedure.  Starting with the root node of $\Phi_{i-1}$ we procede down along the circuit towards the input nodes, following a path given by always taking the child with the largest-sized dependency-scope among the children of the current node.  We follow this path until we arrive at a node which satisfies the required properties.  To see that we eventually do find such a node, note first that if the current node $u$ is a sum node, the dependency-scope of any of its children will be the same as its own (due to completeness), and if $u$ is a product node, then since the number of children is $\leq 2$, at least one child must have an dependency-scope whose size is at least half that of $u$'s dependency-scope (due to decomposability).  If the size of the current node's dependency-scope is $> 2n/3$ it must be the case that at least one child has an dependency-scope of size $\geq n/3$.  Because the size of the dependency-scope never increases along this path, and must eventually become 1 (at an input node), it thus follows that the first node on the path whose dependency-scope is of size $\leq 2n/3$ will satisfy the required properties.

\end{proof}

\begin{proof}[Proof of Theorem \ref{thm:sequenceofspns}]

Let $\{\Phi_j\}_{j=1}^{\infty}$ be such a sequence.

Applying Theorem \ref{thm:decomposition} to each $\Phi_j$ we have that
\begin{align}
\label{sumofproducts}
q_{\Phi_j}(x) = \sum_{i=1}^{k_j} g_{i,j} h_{i,j}
\end{align}
where for each $j$, $k_j \leq s^2$ and the $g_{i,j}$'s and $h_{i,j}$'s are polynomials in the different $f$'s (and thus functions of $x$) satisfying the conditions in Theorem \ref{thm:decomposition} (eqn.~\ref{comparablescopeconditions}).

Note that since there are finitely many integers $\geq s^2$ and finitely many possible choices of the dependency-scopes of $g_{1,j},...,g_{k_j,j}$ and $h_{1,j},...,h_{k_j,j}$ for any particular fixed $j$, it follows that at least one joint choice of these repeats infinitely often in the sequence, and thus we can replace $\{\Phi_j\}_{j=1}^{\infty}$ with a subsequence where $k_j$ is constant and independent of $j$, and the dependency-scopes of the corresponding $g_{i,j}$ and $h_{i,j}$'s also do not depend on $j$.  We will refer to former quantity simply as $k$ (without the subscript), and the constant dependency-scopes of $g_{i,j}$ and $h_{i,j}$ as $y_i$ and $z_i$ (resp.).

For notational convenience we will redefine $\{\Phi_j\}_{j=1}^{\infty}$ (and all quantities which are derived from it, such as the $g_{i,j}$'s and $h_{i,j}$'s) to be this subsequence, as we will do going forward whenever we talk about ``replacing" the current $\{\Phi_j\}_{j=1}^{\infty}$ with a subsequence.

For all values of $x$, we have that each term in the sum $q_{\Phi_j}(x) = \sum_{i=1}^{k} g_{i,j}(y_i) h_{i,j}(z_i)$ is non-negative, and thus $q_{\Phi_j}(x) \geq g_{i,j}(y_i) h_{i,j}(z_i) \geq 0$.  And since $q_{\Phi_j}(x)$ converges to $\gamma(x)$ as $j \to \infty$, it then follows that the sequence of real-values $\{g_{i,j}(y_i) h_{i,j}(z_i)\}_{j=1}^\infty$ is bounded.

Define $\nu_j \in \Real^{dk}$ to be the $dk$ dimensional vector consisting of  $g_{i,j}(y_i) h_{i,j}(z_i)$ for each $i \in [k]$ and each of the $d$ possible values of $x$.  By the Bolzano-Weierstrass theorem, since $\{\nu_j\}_{j=1}^\infty$ is a bounded sequence (each of its component sequences are bounded, and it has finitely many components), it has a convergent subsequence.  Replace $\{\Phi_j\}_{j=1}^{\infty}$ with the corresponding subsequence (i.e. the subsequence given by the corresponding $j$'s).

Now given that the new $\{\nu_j\}_{j=1}^\infty$ converges it thus follows by definition of $\nu_j$ that for each value of $x$, $\{ g_{i,j}(y_i) h_{i,j}(z_i) \}_{j=1}^\infty$ converges to some value which we will denote $\eta_i(x)$, where we note that $\sum_{i=1}^k \eta_i(x) = \gamma(x)$.

By Lemma \ref{lemma:factor_sequence} (below) it follows that each $\eta_i$ can be written as $g_i h_i$ for functions $g_i$ and $h_i$ of $y_i$ and $z_i$ (resp.).  Thus we have that $\gamma(x) = \sum_{i=1}^k \eta_i(x) = \sum_{i=1}^k g_i(y_i) h_i(z_i)$ for each value of $x$, which completes the proof.

\end{proof}

\begin{lemma}
\label{lemma:factor_sequence}
Suppose $\{g_j\}_{j=1}^\infty$ and $\{h_j\}_{j=1}^\infty$ are sequences of real-valued functions of $y$ and $z$ (resp.), where $y$ and $z$ are disjoint subsets/tuples of $x$, and that the size of the range of possible values of $x$ is given by some $d < \infty$, and finally that $\{g_j h_j\}_{j=1}^\infty$ converges pointwise to some function $\eta$ of $x$.  Then there exists functions $g$ and $h$ of $y$ and $z$ (resp.) such that $\eta = gh$.
\end{lemma}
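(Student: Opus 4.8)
The plan is to reduce the statement to an elementary fact about rank-one matrices. Since $x$ takes only $d < \infty$ distinct values, the tuple $y$ ranges over a finite set $Y$ and the tuple $z$ over a finite set $Z$, so we may identify $g_j$ with a vector $a_j \in \Real^{Y}$ and $h_j$ with a vector $b_j \in \Real^{Z}$, and the product function $g_j h_j$ with the rank-$\leq 1$ matrix $a_j b_j^{\top} \in \Real^{Y\times Z}$. The first step is to note that since $g_j h_j$ depends only on the coordinates $(y,z)$ of $x$, so does its pointwise limit $\eta$; write $\eta(x) = \bar\eta(y,z)$ and let $N \in \Real^{Y\times Z}$ be the matrix with entries $N_{y,z} = \bar\eta(y,z)$. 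By hypothesis $a_j b_j^{\top} \to N$ entrywise (no subsequence extraction is needed here, as the product sequence is assumed to converge).

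The second step is a case split. If $N = 0$, the conclusion is trivial: take $g \equiv 0$ and $h \equiv 0$. Otherwise, fix a pair $(y_0, z_0)$ with $N_{y_0,z_0} \neq 0$. Since $g_j(y_0)h_j(z_0) = [a_j b_j^{\top}]_{y_0,z_0} \to N_{y_0,z_0} \neq 0$, we have $g_j(y_0) \neq 0$ and $h_j(z_0) \neq 0$ for all sufficiently large $j$, and after discarding the finitely many earlier terms we may assume this holds for every $j$.

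The third and crucial step is the rescaling. I would define $g$ on $Y$ by $g(y) = \lim_{j\to\infty} g_j(y)h_j(z_0) = N_{y,z_0}$ and $h$ on $Z$ by $h(z) = \lim_{j\to\infty} g_j(y_0)h_j(z) = N_{y_0,z}$; these limits exist because each is a fixed coordinate of the convergent matrix sequence. Then for any $(y,z)$,
\[
g(y)h(z) = \lim_{j\to\infty}\bigl(g_j(y)h_j(z_0)\bigr)\bigl(g_j(y_0)h_j(z)\bigr) = \lim_{j\to\infty}\bigl(g_j(y)h_j(z)\bigr)\bigl(g_j(y_0)h_j(z_0)\bigr) = N_{y,z}\cdot N_{y_0,z_0},
\]
so that $\bar\eta(y,z) = N_{y,z} = g(y)h(z)/N_{y_0,z_0}$. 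Setting $g'(y) = g(y)/N_{y_0,z_0}$ and $h'(z) = h(z)$ yields $\eta = g'h'$ with $g'$ a function of $y$ and $h'$ a function of $z$, as required. I would also remark that when the $g_j$ and $h_j$ are non-negative --- the case actually needed in the proof of Theorem~\ref{thm:sequenceofspns} --- we get $N_{y_0,z_0} > 0$ and $g,h \geq 0$, hence $g',h' \geq 0$ as well.

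The only real subtlety, and the reason one cannot simply take limits of the $g_j$ and $h_j$ separately, is that the individual sequences may fail to converge (for instance $g_j \to \infty$ while $h_j \to 0$). The rescaling against the fixed nonzero entry $N_{y_0,z_0}$ is precisely the device that circumvents this, at the cost of needing the separate --- but entirely trivial --- treatment of the degenerate case $N = 0$.
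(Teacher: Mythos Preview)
Your proof is correct and is in fact cleaner than the paper's own argument. Both proofs handle the degenerate case $\eta = 0$ separately and then exploit the finiteness of the domain, but the mechanisms differ. The paper normalises each $g_j$ by $\alpha_j = \max_y |g_j(y)|$ to obtain a bounded sequence $g'_j = g_j/\alpha_j$, invokes Bolzano--Weierstrass to extract a convergent subsequence $g'_j \to g$, then uses a further subsequence along which $g'_j(y') \equiv \beta \neq 0$ at a fixed $y'$ to force the rescaled $h'_j = \alpha_j h_j$ to converge as well; the conclusion $\eta = gh$ then follows because $g'_j h'_j = g_j h_j$. Your route avoids compactness entirely: you view the problem as showing that a limit of rank-$\le 1$ matrices has rank $\le 1$, and you verify this directly via the $2\times 2$ minor identity $N_{y,z_0}N_{y_0,z} = N_{y,z}N_{y_0,z_0}$, which you obtain by rearranging the factors inside the limit. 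This is shorter, gives an explicit formula $g(y) = N_{y,z_0}/N_{y_0,z_0}$, $h(z) = N_{y_0,z}$ for the limiting factors, and makes the non-negativity claim needed in Theorem~\ref{thm:sequenceofspns} immediate. The paper's approach, by contrast, produces $g$ and $h$ as genuine subsequential limits of rescaled $g_j$, $h_j$, which is perhaps conceptually closer to ``passing to the limit'' but requires two rounds of subsequence extraction.
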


\begin{proof}[Proof of Lemma \ref{lemma:factor_sequence}]

Consider a subsequence of $\{g_j\}_{j=1}^\infty$ where each $g_j$ is not the zero function, and replace $\{g_j\}_{j=1}^\infty$ with this subsequence, and $\{h_j\}_{j=1}^\infty$ with its corresponding subsequence (according to the index $j$).  Note that if such a subsequence of $\{g_j\}_{j=1}^\infty$ doesn't exist, then after some point in the sequence, $g_j$ is always the zero function and thus it follows that $\eta$ is the zero function, and we can take both $g$ and $h$ to be the zero functions.

Define $\alpha_j = \max_y |g_j(y)|$ for each $j$.  We note that for all $j$, $\alpha_j \neq 0$ by how the sequence $\{g_j\}_1^\infty$ was sub-selected above.

Now define a pair of new sequences of functions by
\begin{align*}
g'_j(y) &= \frac{g_j(y)}{\alpha_j} \\
h'_j(z) &= \alpha_j h_j(z)
\end{align*}

For all values of $x$, we have that since $g_j(y) h_j(z) = g'_j(y) h'_j(z)$, $\{g'_j(y) h'_j(z)\}_{j=1}^\infty$ thus converges to $\eta(x)$.

Let $\lambda_j$ denote the finite dimensional vector formed by evaluating $g'_j$ for every possible value of $y$ (there are $\leq d$ of these, since there are $d$ possible values of $x$).  Note that construction, $|g_j(y)| \leq 1$ for all values of $y$, and thus the sequence $\{\lambda_j\}_{j=1}^\infty$ is bounded.  By the Bolzano-Weierstrass theorem it has convergent subsequence.  Replace $\{g'_j\}_{j=1}^{\infty}$ and $\{h'_j\}_{j=1}^{\infty}$ with the corresponding subsequences (i.e. the subsequence given by the corresponding $j$'s).

Now we have that $\{g'_j\}_{j=1}^{\infty}$ converges point-wise to some function $g$ of $y$.  And it remains to find the promised $h$.

Note that since there are only finitely many possible values of $y$, there is some particular value of $y'$ of $y$, for which $g_j'(y') = \beta$ for infinitely many $j$'s, for some $\beta \neq 0$.  To see this, note that we can pick $y'$ so that $|g_j'(y)|$ is maximized infinitely often, and so that we have that either $g_j'(y') = 1$ infinitely often, or $g_j'(y') = -1$ infinitely often.  Replace $\{g'_j\}_{j=1}^{\infty}$ with this subsequence (and $\{h'_j\}_{j=1}^{\infty}$ with its corresponding subsequence).

Note that since $\{g'_j(y) h'_j(z)\}_{j=1}^\infty$ converges for all values of $x$, and that the variables in $y$ are disjoint from those in $z$, it is certainly true that $\{g'_j(y') h'_j(z)\}_{j=1}^\infty$ converges for all values of $z$.  But this is equal to $\{\beta h'_j(z)\}_{j=1}^\infty$, and since $\beta \neq 0$ it follows that $\{h'_j(z)\}_{j=1}^\infty$ converges to some $h(z)$ for all values of $z$. 

Since both $g'_j$ and $h'_j$ individually converge point-wise to $g$ and $h$ (resp.), and we know that $\{g'_j h'_j\}_{j=1}^\infty$ converges point-wise to $\eta$, it follows that $\eta = gh$.

\end{proof}

\begin{proof}[Proof of Proposition \ref{prop:dichotomy}]

We can assume without loss of generality that $a$ and $b$ are red, and $c$ is blue (since $g$ and $h$ are interchangeable).

Suppose by contradiction that there are a pair of values $x'$ and $x''$ of $x$ s.t. $G_{x'}$ contains both $a$ and $b$, and $G_{x''}$ contains $c$, and $g(y')h(z') > 0$ and $g(y'')h(z'') > 0$, where $(y', z')$, and $(y'', z'')$ are the values of $(y,z)$ corresponding to $x'$ and $x''$ respectively.

Let $x'''$ denote a value of $x$ which agrees with $y'$ on $y$ and $z''$ on $z$.  Then clearly $G_{x'''}$ contains all three edges of the constraint triangle and so cannot be a spanning tree, implying that $d(x''') = 0$.  

But we have that both $g(y') > 0$ and $h(z'') > 0$ (which follow from $g(y')h(z') > 0$ and $g(y'')h(z'') > 0$ respectively), and so $d(x''') > 0$, which is a contradiction.

\end{proof}

\begin{proof}[Proof of Lemma \ref{lemma:dichromatictriangles}]

Our strategy for lower bounding the number of dichromatic triangles will be to instead upper bound the total number of monochromatic triangles of $K_m$.

We will make use of a result of \citet{fisher1989lower} which says that given an arbitrary graph $G$ with $e$ edges, the number of triangles in $G$ is upper bounded by
\begin{align*}
\frac{\left(\sqrt{8e + 1} - 3\right)e}{6} \leq \frac{\sqrt{2}}{3} e^{3/2}
\end{align*}

Let $G_1$ be the subgraph of $K_m$ formed by taking only the red edges, and $G_2$ the subgraph of $K_m$ formed by taking only the blue ones.  Clearly the number of monochromatic red (or blue) triangles in the original graph is just the number of total triangles in $G_1$ (or $G_2$).

Applying the above upper bound to both $G_1$ and $G_2$ it thus follows that the total number of monochromatic triangles of the original colored graph is upper bounded by
\begin{align*}
\frac{\sqrt{2}}{3} r^{3/2} + \frac{\sqrt{2}}{3} (n-r)^{3/2}
\end{align*}

For $r$ satisfying $n/3 \leq r \leq 2n/3$ the function attains its maximum value at both $r = n/3$ and $r = 2n/3$, and is given by $\sqrt{2}/3 ( (1/3)^{3/2} + (2/3)^{3/2} ) n^{3/2} \leq 0.74 \sqrt{2}/3 \: n^{3/2}$ (by calculation).  Using $n = \binom{m}{2} \leq m^2 / 2$, this upper bound can be written as $0.74 \frac{\sqrt{2}}{3} m^3 / 2^{3/2} = 0.74 m^3/6$.

There are $\binom{m}{3}$ total triangles in $K_m$.  For $m \geq 20$, it is straightforward to verify that $\binom{m}{3} \geq 0.84 m^3/6$.

By subtracting the upper bound on the number of monochromatic triangles from the lower bound on the number of total triangles we arrive at the following lower bound on the total number of dichromatic triangles
\begin{align*}
(0.84 - 0.74) m^3/6 = m^3/60
\end{align*}

\end{proof}

\begin{proof}[Proof of Lemma \ref{lemma:fractionoftrees}]

Note that the proportion of spanning trees of $K_m$ which obey all of the constraints can be interpreted as the probability that a sample from uniform distribution over all spanning trees of $K_m$ (i.e. the distribution $\mathcal{D}$) obeys all of the constraints.  We will upper bound this probability by analyzing the behavior of a simple algorithm due to \citet{aldous1990random} which samples from $\mathcal{D}$.

The algorithm is described as follows.  Starting with an empty graph $T$ and a uniformly sampled initial choice for the ``current vertex" $v$, it iterates the following steps.  Uniformly sample one of the vertices $u$ of those adjacent to $v$ in $G$, and make this the current vertex. If $u$ has not been previously visited, add the edge $(v,u)$ to $T$.  

The algorithm terminates once every vertex has been visited at least once.

For ease of exposition we will perform a minor modification to this algorithm by allowing the current vertex $v$ to be re-sampled (with the same probability as sampling any one of the adjacent vertices).  Clearly this doesn't change the distribution sampled by the algorithm.

Consider applying this algorithm with $G = K_m$.  In this case it becomes particularly simple, since at each step it samples the next vertex uniformly at random from the set of all $m$ vertices, thus performing a uniform random walk on the graph.

For the sake of simplicity we will consider the sequence of vertices produced by this random walk in consecutive ``stages" of $3$ at a time.  Note that the 3 vertices sampled at any given stage are done so uniformly from the set of all vertex triples (allowing repeats).

We will encode constraints as ordered triples of vertices, which we will call ``constraint triples".  Each constraint of the first form is represented by a pair of constraint triples $(u,v,w)$ and $(w,v,u)$, where $a = (u,v)$ and $b = (v,w)$.  And for convenience, we encode constraints of the second form similarly as triples $(u,v,w)$ and $(w,v,u)$ where $c = (u,v)$ and $w \neq u,v$ can be any other vertex from $K_m$.  This gives $2C$ total distinct constraint triples.

Note that if during the random walk on $K_m$ the algorithm visits the vertices $u$, $v$ and $w$ in consecutive order, and $v$ and $w$ have not yet been visited before, both the edges $(u,v)$ and $(v,w)$ will be added to $T$.  Thus, if such a $u$, $v$ and $w$ are encountered where $(u,v,w)$ is a constraint triple, it is easy to see that the final $T$ will be in violation of the corresponding constraint.

At any given stage, the number of triples of the form $(u,v,w)$, where either $v$ or $w$ has been previously visited, is $m(m-\ell)\ell + m\ell(m-\ell) + m\ell^2 = \ell m (2m - \ell)$, where $\ell$ is the total number of vertices previously visited.  By the beginning of the $(i+1)$-th stage we note that the algorithm has visited at most $3i$ vertices, and thus $\ell \leq 3i$.  From this it follows that the total number of such triples is at most $3im(2m - 3i)$.

Thus, of the $m^3$ possible triples that can be sampled at stage $i+1$, there at least $2C - 3im(2m - 3i)$ of them which correspond to constraint triples of the form $(u,v,w)$ where neither $v$ nor $w$ has been visited at a previous stage.

Conditioned on \emph{any} previous choices made by the algorithm, the probability that the 3 vertices picked at stage $i+1$ imply a constraint violation is lower bounded by the worst-case probability that they correspond to one of the remaining constraint triples.  This is
\begin{align*}
\frac{2C - 3im(2m - 3i)}{m^3} = \frac{2C - 6im^2 + 9i^2m}{m^3} \geq \frac{2C - 6im^2}{m^3}
\end{align*}
and so the probability of no such violation occurring at stage $i+1$, conditioned on any previous choices made by the algorithm, is upper bounded by
\begin{align*}
1 - \frac{2C - 6im^2}{m^3}
\end{align*}

The probability of no such constraint violation occurring during the entire run of the algorithm is upper bounded by the probability of no such violation occurring by stage $t$, which is itself upper bounded by
\begin{align*}
\prod_{i=1}^t \left( 1 - \frac{2C - 6im^2}{m^3} \right ) \leq \left( 1 - \frac{2C - 6t m^2}{m^3} \right )^t
\end{align*}

Plugging in $t = C/(6m^2)$ this becomes
\begin{align*}
\left( 1 - \frac{C}{m^3} \right )^{C/(6m^2)}
\end{align*}

\end{proof}

\end{document}